\newtheorem{definition}{Definition}
\newtheorem{theorem}{Theorem}
\newtheorem{lemma}{Lemma}
\newtheorem{remark}{Remark}
\DeclareMathOperator*{\argmax}{arg\,max}
\newcommand\numberthis{\addtocounter{equation}{1}\tag{\theequation}}
\DeclarePairedDelimiterX\Basics[1](){ #1}
\newcounter{const-no}
\def\EE{{\mathbb{E}}}\def\PP{{\mathbb{P}}}
\DeclareMathOperator*{\argmin}{\arg\!\min}
\date{\today}
\title{Identifying Best Interventions through Online Importance Sampling}
\author[1]{Rajat Sen}
\author[2]{Karthikeyan Shanmugam}
\author[1]{Alexandros G. Dimakis}
\author[1]{Sanjay Shakkottai}
\affil[1]{The University of Texas at Austin}
\affil[2]{IBM Thomas J. Watson Research Center}
\begin{document}
\maketitle

\begin{abstract} 
  Motivated by applications in computational advertising and systems
  biology, we consider the problem of identifying the best out of
  several possible \textit{soft interventions} at a \textit{source}
  node $V$ in an acyclic causal directed graph, to maximize the
  expected value of a \textit{target} node $Y$ (located downstream of
  $V$). Our setting imposes a fixed total budget for sampling under various
  interventions, along with cost constraints on different types
  of interventions. We pose this as a best arm identification bandit
  problem with $K$ arms where each arm is a soft intervention at
  $V,$ and leverage the information leakage among the arms to provide
  the first gap dependent error and simple regret bounds for this
  problem. Our results are a significant improvement over the
  traditional best arm identification results. We empirically show
  that our algorithms outperform the state of the art in the Flow
  Cytometry data-set, and also apply our algorithm for model
  interpretation of the Inception-v3 deep net that classifies images.

%  We propose an efficient \textit{successive rejects} style algorithm
%  that uses \textit{importance sampling} to adaptively sample using
%  different interventions and leverage information leakage to choose
%  the best. We provide the first \textit{gap dependent} (gaps between
%  the various expected target values) simple regret and best arm
%  mis-identification error bounds for this problem. This generalizes
%  the prior bounds available for the simpler case of no information
%  leakage. In the case of no leakage, the number of samples required
%  for identification is  (upto polylog factors)
%  $\tilde{O} (\max_i \frac{i}{\Delta_i^2})$ where $\Delta_i$ is the
%  gap between the optimal and the $i$-th best arm. %The term inside the
%  %max operator can be interpreted to be the number of samples required
%  %to eliminate the $i$-th best arm out of $i$ best arms. 
% We generalize the previous result for the causal setting and show that
%  $\tilde{O}(\max_i \frac{\sigma_i^2}{\Delta_i^2})$ is sufficient
%  where $\sigma_i^2$ is the \textit{effective variance} of an importance
%  sampling estimator that eliminates the $i$-th best arm out of a set of arms with gaps roughly at most twice as big as $\Delta_i$. We also show that
%  $\sigma_i^2 << i$ in many cases. Our result also recovers (up to
%  constants) prior gap independent bounds for this setting. We
%  demonstrate that our algorithm empirically outperforms the state of
%  the art, through synthetic experiments.
\end{abstract}

\section{Introduction}
\label{sec:intro}

Causal graphs \cite{pearl2009causality} are useful for representing
causal relationships among interacting variables in large systems
\cite{bottou2013counterfactual}. Over the last few decades, causal
models have found use in computational
advertising~\cite{bottou2013counterfactual}, biological
systems~\cite{meinshausen2016methods},
sociology~\cite{blalock1985causal},
agriculture~\cite{splawa1990application} and epidemiology
\cite{joffe2012causal}.
There are two important questions commonly studied with causal graphs: {\em
(i)} How to learn a directed causal graph that encodes the pattern of
interaction among components in a system (\textit{casual structure
learning})?~\cite{pearl2009causality}~, and {\em (ii)} Using
previously acquired (partial) knowledge about the causal graph
structure, how to estimate and/or to optimize the effect of a new
\textit{intervention} on other variables (\textit{optimization})~\cite{bottou2013counterfactual,joffe2012causal,kemmeren2014large,bonneau2007predictive,krouk2013gene}?
Here, an \textit{intervention} is a forcible change to the value of a
variable in a system. The change either alters the relationship
between the parental causes and the variable, or decouples it from the
parental causes entirely. Our focus is on optimizing over a given set of interventions.

\begin{figure}
	\centering
	\includegraphics[width=0.6\linewidth]{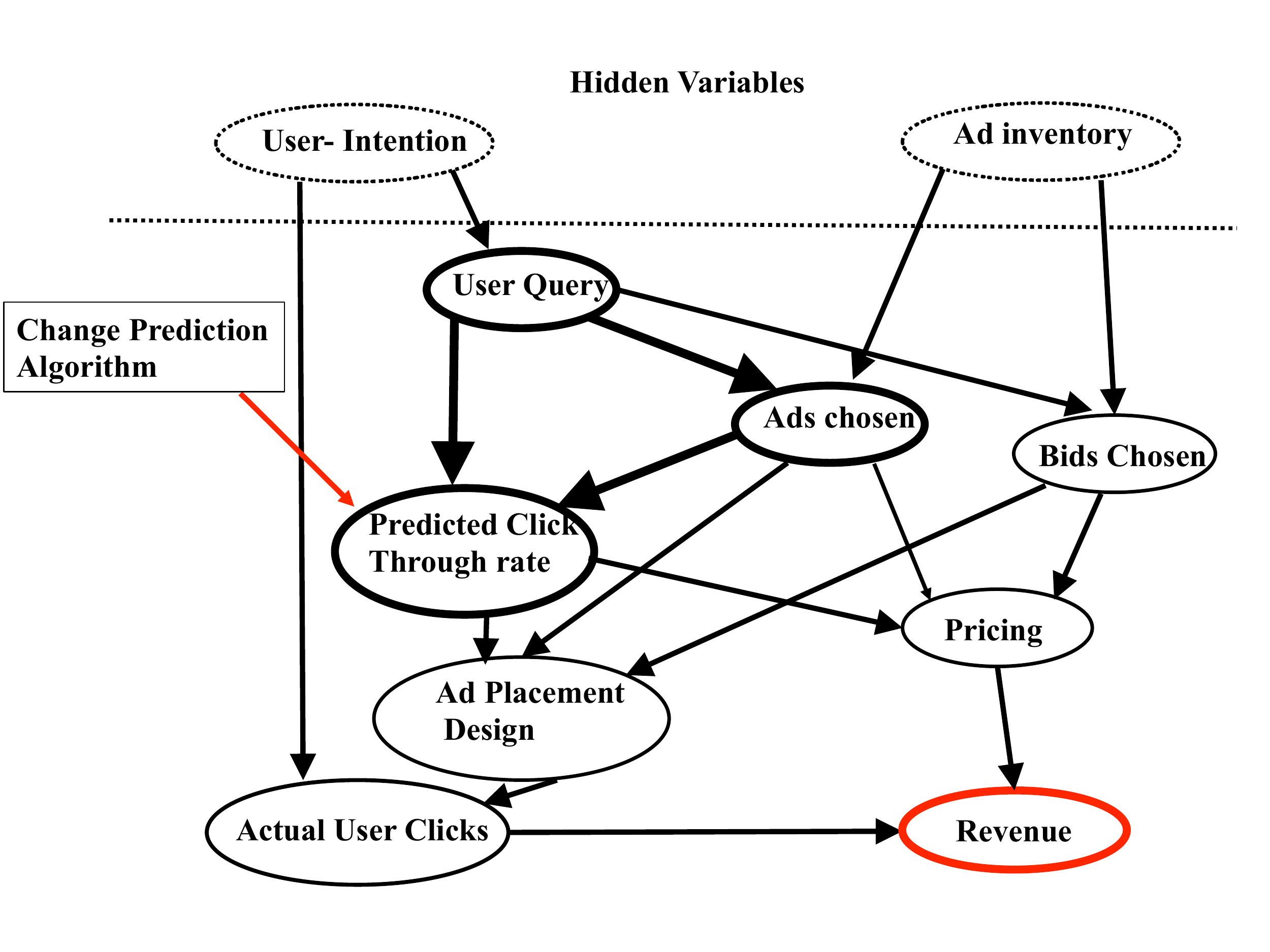}
	\caption{ \small Computational advertising example borrowed from
		\cite{bottou2013counterfactual}. Various observable and hidden variables are shown. The topology of the causal graph is known;
		however the strengths of most interactions are unknown. A click-rate
		scoring algorithm predicts future user click through rates from users' search
		queries and the set of ads relevant to the user query chosen
		from an ad inventory. The algorithm's output determines the ads
		displayed (as well as the display style), and through a complex causal
		graph, finally determines actual revenue. The part of the network in
		bold -- distribution of user queries and matching ad keywords is known
		(including strengths), and the input output characteristics of several
		candidate (randomized) click-rate scoring algorithms are known. The
		objective is to choose the best algorithm that maximizes the revenue
		(target in bold red).\normalsize} \label{fig:Bottou}
\end{figure}

An illustrative example includes online advertising
\cite{bottou2013counterfactual}, where there is a collection of
click-through rate scoring algorithms that provide an estimate of the
probability that an user clicks on an ad displayed at a specific
position. The interventions occur through the choice of click-through
rate scoring algorithm; the algorithm choice directly impacts ad
placement and pricing, and through a complex network of interactions,
affects the revenue generated through advertisements. The revenue
%(downstream node in a causal
%graph)
 is used to determine the best scoring algorithm (optimize for the
best intervention); see Figure~\ref{fig:Bottou}. Another example is in
biological gene-regulatory networks \cite{bonneau2007predictive},
where a large number of genomes interact amongst each other and also
interact with environmental factors. The objective here is to
understand the best perturbation of some genomes in terms of its
effect on the expression of another subset of genomes (target) in
cellular systems. %(the downstream node).

This paper focuses on the following setting: We are given apriori
knowledge about the structure and strength of interactions over a
small part of the causal graph.  In addition, there is freedom to
intervene (from a set of allowable interventions) at a certain node in
the known part of the graph, and collect data under the chosen
intervention; further we can alter the interventions over time and
observe the corresponding effects. Given a set of potential
interventions to optimize over, the key question of interest is: How
to choose the best sequence of $T$ allowable interventions in order to
discover which intervention maximizes the expectation of a
downstream target node?

Determining the best intervention in the above setting can be cast as
a best arm identification bandit problem, as noted in
\cite{lattimore2016causal}. The possible interventions to optimize over are the arms of the bandit, while the sample value of the \textit{target} node under an intervention is the \textit{reward}.

More formally, suppose that $V$ is a node in a causal graph
$\mathcal{G}(\mathcal{V}, \mathcal{E})$ (as shown in
Fig.~\ref{fig:illustrate}), with the parents of $V$ denoted by
$pa(V).$ In Fig.~\ref{fig:Bottou}, $V$ corresponds to the
\textit{click-through rate} and its parents are \textit{user-query}
and \textit{ads-chosen}. This essentially means that $V$ is causally
determined by a function of $pa(V)$ and some exogenous random
noise. This dependence is characterized by the conditional
$\mathrm{P}(V | pa(V))$\footnote{Formally if node $V$ has parents
  $V_1,V_2$, then this distribution is the conditional
  $\mathrm{P}(V=v | V_1 = v_1, V_2 = v_2)$ for all $v,v_1,v_2$.}. Then
a (soft) intervention mathematically corresponds to changing this
conditional probability distribution i.e. probabilistically forcing
$V$ to certain states given its parents.
% \footnote{These changes through
%   the conditional distributions are also called \textit{soft
%     interventions}~\cite{pearl2009causality}.Note that our algorithm
%   is also applicable to the more general case when the set of
%   interventions to optimize over is much larger than the set of
%   interventions available to sample from. This 'transfer of knowledge'
%   to an unseen but informationally related situation is also an
%   important motivation. Formally, it is possible to choose the best
%   soft intervention in $\{0,1 \cdots , K-1 \}$ by only observing
%   samples under interventions in the set $\mathcal{S}$, where
%   $\mathcal{S} \subset [K]$.}. 
In the computational advertising
example, the interventions correspond to changing the click through
rate scoring algorithm i.e
$\mathrm{P}(click~through~rate \vert ads~chosen,user~query)$, whose
input-output characteristics are well-studied. Further, suppose that
the effect of an intervention is observed at a node $Y$ which is
downstream of $V$ in the topological order (w.r.t $\mathcal{G}$)
-refer to Fig.~\ref{fig:illustrate}.  Then, our key question is stated
as follows: \textit{Given a collection of interventions
  $\{\mathrm{P}_0 (V| pa(V)),\ldots \mathrm{P}_{K-1}(V | pa(V))\},$
  find the best intervention among these $K$ possibilites that
  maximizes $E[Y]$ under a fixed budget of $T$ (intervention,
  observation) pairs.}

%The key difference from the classical best arm setting is that there is \textit{information leakage} among the arms through the shared causal graph. In the next section we list our main contributions, where we elaborate on the advantages of leveraging this information leakage. 

%A key property we use is that these interventions are correlated,
%i.e. there is information leakage that informs us about how good one
%arm is by observing the performance of another. Classical best arm
%identification~\cite{audibert2010best} does not model such information
%leakage among arms; our goal is to extend the theory for this regime
%and quantify the benefits of information leakage. Information leakage
%effects are natural for various problems. For example, Bottou et
%al.~\cite{bottou2013counterfactual} show that the performance of
%untested ad scoring algorithms can be predicted from different scoring
%models.  Recent work \cite{lattimore2016causal} has leveraged this
%information leakage and developed algorithms for approximating the
%best possible sequence of interventions for discovering the optimal
%policy.

\begin{figure}
	\centering
	\includegraphics[width=0.5\linewidth]{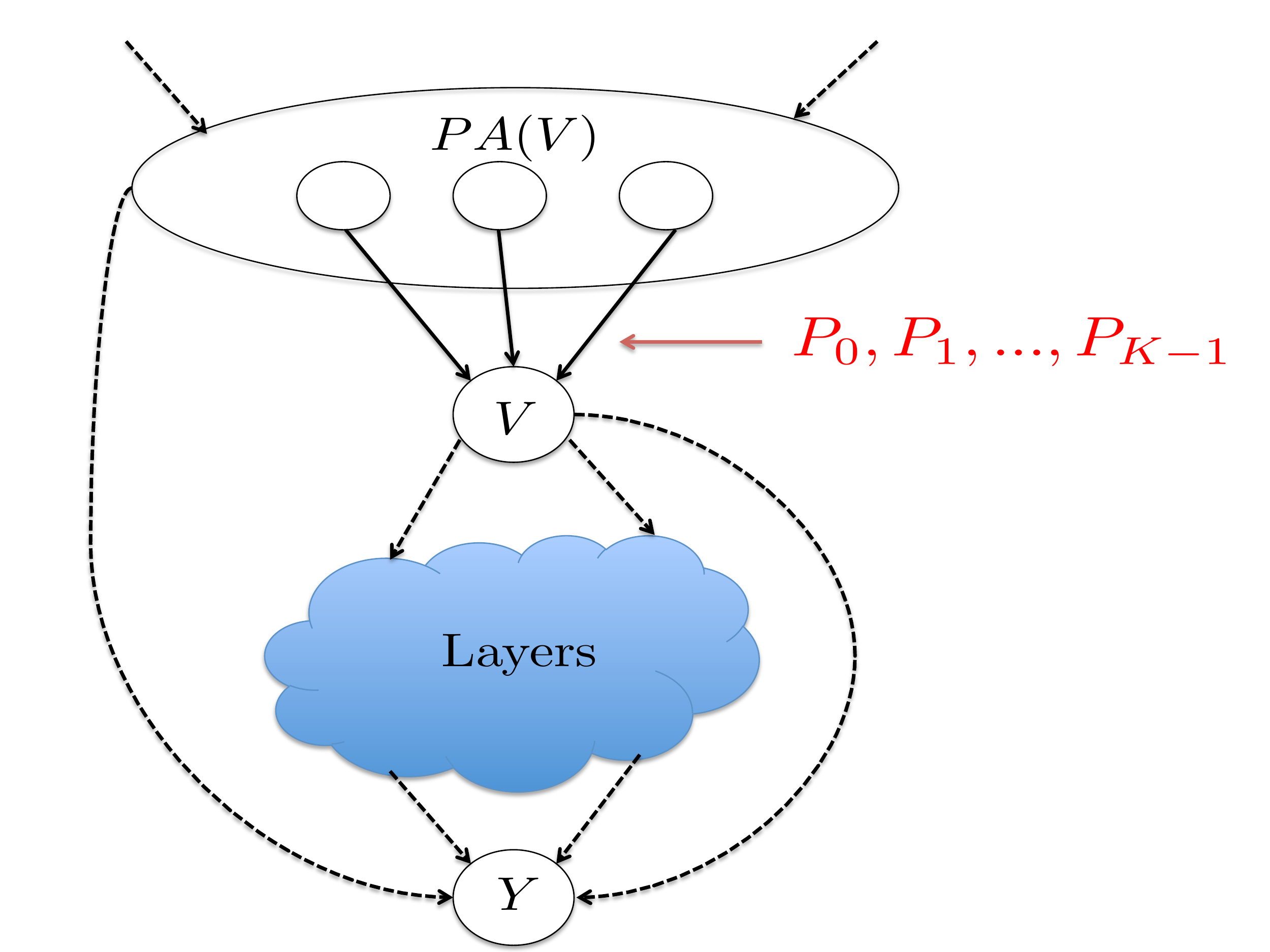}
	\caption{\small Illustration of our setting. The soft interventions modify $\mathrm{P}(V | pa(V)$. The various conditionals and the marginal of $pa(V)$ is assumed to be known from prior data. The \textit{target} variable $Y$ lies further downstream in the unknown portions of the causal graph. \normalsize} \label{fig:illustrate}
\end{figure}

%Our main result in this paper is a \textit{successive rejects bandit
%algorithm} for selecting the sequence of interventions, to determine
%the best one that has the maximum desired effect on a downstream
%node. Our algorithm uses past observations along with information
%leakage across interventions to sequentially eliminate sub-optimal
%interventions. We provide the first \textit{gap-dependent} simple
%regret and error bounds for this problem. Further, our solution
%explicitly accounts for possible cost constraints on using
%interventions. As a special case, we recover the gap-independent
%result in \cite{lattimore2016causal} (orderwise).

 \subsection{Main Contributions}
 
 {\noindent \bf (Successive Rejects Algorithm)} We provide an
 efficient \textit{successive rejects} multi-phase algorithm. The
 algorithm uses \textit{clipped importance sampling}. The clipper
 level is set \textit{adaptively} in each phase in order to trade-off
 \textit{bias and variance}. Our procedure yields a major improvement
 over the algorithm in~\cite{lattimore2016causal} (both in theoretical
 guarantees and in practice), which sets the clippers and allocates
 samples in a static manner.

 {\noindent \bf (Gap Dependent Error Guarantees under Budget
   Constraints)} In the classic best arm identification
 problem~\cite{audibert2010best}, Audibert et al. derive \textit{gap}
 dependent bounds on the probability of error given a fixed sample
 budget.  Specifically, let $\Delta_{(i)}$ be the $i$-th largest gap
 (difference) in the expected reward from that of the best arm
 (e.g. $\Delta_{(1)}$ is the difference between the best arm expected
 reward and the second best reward). Then, it has been shown in
 \cite{audibert2010best} that the number of samples needed scales as
 (upto poly log factors) $\max_i (i/\Delta_{(i)}^2).$

 % Audibert et al. derive \textit{gap} dependent bounds for the classic
 % best arm identification problem~\cite{audibert2010best}. It has been
 % shown that the number of samples needed is (upto poly log factors)
 % $\max_i (i/\Delta_{(i)}^2)$ where $\Delta_{(i)}$ is the $i$-th
 % smallest \textit{gap} from the optimal (in terms expected reward of
 % the arms).

 In our setting, a fundamental difference from the classical best arm
 setting~\cite{audibert2010best} is the \textit{information leakage}
 across the arms, i.e, samples from one arm can inform us about the
 expected value of other arms because of the shared causal graph. We
 show that this information leakage yields significant improvement
 both in theory and practice. We derive the first \textit{gap
   dependent} (gaps between the expected reward at the target under
 different interventions) bounds on the
 probability of error in terms of the number of samples $T$, cost
 budget $B$ on the relative fraction of times various arms are sampled
 and the divergences between soft intervention distributions of the
 arms.
 
 In our result (upto poly log factors) the factor $i$ is replaced by
 the 'effective variance' of the estimator for arm $(i)$, i.e. we
 obtain (with informal notation) $\max_i
 \sigma_i^2/\Delta_{(i)}^2$. $\sigma_i$ can be much smaller than
 $\sqrt{i}$ (the corresponding term in the results
 of~\cite{audibert2010best}). Our theoretical guarantees quantify the
 improvement obtained by leveraging information leakage, which has
 been empirically observed in~\cite{bottou2013counterfactual}. We
 discuss in more detail in Sections~\ref{sec:guarantee}
 and~\ref{sec:appen_compare} (in the appendix), about how these
 guarantees can be exponentially better than the classical ones.  We
 derive simple regret (refer to Section~\ref{sec:def}) bounds
 analogous to the gap dependent error bounds.

% 
% A fundamental difference from the classical best arm setting~\cite{audibert2010best} is the \textit{information leakage} among the arms i.e sampling from one arm can inform us about the expected value of other arms because of  the shared causal graph. We show that leveraging this information leakage yields significant improvement both in theory and practice.
% 
% Our result generalizes that of \cite{audibert2010best} (the classic best arm
% identification problem)
% which derives that the number of samples needed is (upto poly
% log factors) $\max_i (i/\Delta_{(i)}^2)$ where $\Delta_{(i)}$ is the
% $i$-th smallest gap from the optimal . We obtain a similar result (upto poly log factors) where the the
%factor $i$ is replaced by the 'effective variance' of the estimator
%for arm $(i)$, i.e. we obtain (with informal notation) $\max_i
%\sigma_i^2/\Delta_{(i)}^2$. $\sigma_i$ can be much smaller than $\sqrt{i}$ (the corresponding term in the results of~\cite{audibert2010best}). Our theoretical guarantees quantify the improvement obtained by leveraging information leakage, which have been empirically observed in~\cite{bottou2013counterfactual}. We discuss in more detail in Sections~\ref{sec:guarantee} and~\ref{sec:detcontri} (in the appendix), about how these guarantees can be exponentially better than the classical ones.
%We derive simple regret (refer to Section~\ref{sec:def}) bounds analogous to
% the gap dependent bounds on the probability of misidentification.
 
{\noindent \bf (Novel $f$-divergence measure for analyzing Importance Sampling)} We provide a novel analysis of \textit{clipped importance sampling} estimators, where pairwise $f$-divergences between the distributions
 $\{\mathrm{P}_{k}(V \vert pa(V))\}$, for a carefully chosen function $f(.)$
 (see Section~\ref{sec:divergence}) act as the `effective variance'
 term in the analysis for the estimators (similar to Bernstein's bound~\cite{bennett1962probability}).

{\noindent \bf (Extensive Empirical Validation)} We demonstrate that our algorithm outperforms the prior works~\cite{lattimore2016causal,audibert2010best} on the Flow Cytometry data-set~\cite{sachs2005causal} (in Section~\ref{sec:cytometry}). We exhibit an innovative application of our algorithm for \textit{model interpretation} of the Inception Deep Network~\cite{szegedy2015going} for image classification (refer to Section~\ref{sec:inception}).

\begin{remark}
The techniques in this paper can be directly applied to more general settings like (i) the intervention source ($V$) can be a collection of nodes ($\mathcal{V}$) and the changes affect the distribution $P(\mathcal{V} \vert pa(\mathcal{V}))$, where  $pa(\mathcal{V})$ is the union of all the parents; (ii) the importance sampling can be applied at a directed cut separating the sources and the targets, provided the effect of the interventions, on the nodes forming the cut can be estimated. Moreover, our techniques can be applied without the complete knowledge of the source distributions. We explain the variations in more detail in Section~\ref{sec:variation} in the appendix.
\end{remark}

 \subsection{Related Work}
 The problem lies at the intersection of causal inference and best arm
 identification in bandits. There have been many studies on the
 classical \textit{best arm identification} in the bandit literature,
 both in the fixed confidence
 regime~\cite{kaufmann2015complexity,gabillon2012best} and in the
 fixed budget
 setting~\cite{audibert2010best,chen2015optimal,jamieson2014lil,carpentier2016tight}. It
 was shown recently in \cite{carpentier2016tight} that the results of
 \cite{audibert2010best} are optimal.  The key difference from our
 work is that, in these models, there is no information leakage among
 the arms.

 There has been a lot of work
 \cite{mooij2016distinguishing,Hyttinen2013,Eberhardt2008X,Hauser2012b,Spirtes2001,Pearl2009,loh2014high,shanmugam2015learning}
 on \textit{learning casual models} from data and/or experiments and
 using it to \textit{estimate causal strength} questions of the
 \textit{counterfactual} nature. One notable work that partially
 inspired our work is \cite{bottou2013counterfactual} where the causal
 graph underlying a computational advertising system (like in Bing,
 Google etc.) is known and the primary interest is to find out how a
 change in the system would affect some other variable.
  
At the intersection of \textit{causality and bandits}, \cite{lattimore2016causal} is perhaps most
relevant to our setting. It studies the problem of identifying the best hard interventions on multiple variables (among many), provided the distribution of the parents of the target is known under those interventions. Simple regret bound of order $O(1/\sqrt{T})$ was derived. We assume soft interventions that affect the mechanism between
a 'source' node and its parents, far away from the target (similar to
the case of computational advertising considered in
\cite{bottou2013counterfactual}). Further, we derive the first
\textit{gap dependent} bounds (that can be exponentially small in $T$),
generalizing the results of \cite{audibert2010best}. Our formulation can handle general budget constraints on the bandit arms and also recover the problem independent bounds of~\cite{lattimore2016causal} (orderwise). Budget constraints in bandit settings have been explored before in~\cite{abernethy2015low,slivkins2013dynamic}.

In the context of machine learning, \textit{importance sampling} has been mostly used to recondition input data to adhere to conditions imposed by learning algorithms~\cite{sugiyama2007covariate,li2011unbiased,zhang2014stochastic}.

%\subsection{Related Work}
%\input{Rwork}
%
%\subsection{Contributions}

\section{Problem Setting}
\label{sec:probdef}
A causal graph $\mathcal{G}(\mathcal{V},\mathcal{E})$ specifies causal relationships among the random variables representing the vertices of the graph $\mathcal{V}$. The relationships are specified by the directed edges $\mathcal{E}$; an edge $V_i \rightarrow V_j$ implies that $V_i \in \mathcal{V}$ is a direct parental cause for the effect $V_j\in \mathcal{V}$. With some abuse of notation, we will denote the random variable associated with a node $V \in \mathcal{V}$ by $V$ itself. We will denote the parents of a node $V$ by $pa(V)$. The causal dependence implies that $V = f_V ( {U \in pa(V)}, \epsilon_V)$, where $\epsilon_V$ is an independent exogenous noise variable. One does not get to measure the functions $f_V$ in practice. The noise variable and the above functional dependence induce a conditional probability distribution $\mathrm{P}(V | pa(V) )$. Further, the joint distribution of $\{V\}_{V \in \mathcal{V}}$ decomposes into product of conditional distributions according to $\mathcal{G}$ viewed as a Bayesian Network, i.e.  $\mathrm{P}( \{V\}_{V \in \mathcal{V}}) = \prod \limits_{V \in \mathcal{V}} \mathrm{P}(V | pa(V) )$. 

\textit{Interventions} in a causal setting can be categorized into two kinds:

\begin{enumerate}
	\item \textit{Soft Interventions}: At node $V$, the conditional distribution relating $pa(V)$ and $V$ is changed to $\tilde{\mathrm{P}} \left(V | pa(V) \right)$. 
	\item \textit{Hard Interventions:} We force the node $V$ to take a specific value $x$. The conditional distribution $\tilde{\mathrm{P}} \left(V | pa(V) \right)$ is set to a point mass function $\mathbf{1}_{V =x}$.
\end{enumerate}

In this work, we consider the problem of identifying the best soft intervention, i.e. the one that maximizes the expected value of a certain target variable. The problem setting is best illustrated in Figure~\ref{fig:illustrate}.
Consider a causal graph $\mathcal{G}(\mathcal{V},\mathcal{E})$ that specifies directed causal relationships between the variables $\mathcal{V}$. %We provide a short introduction to causal graphs in Section~\ref{sec:causalintro} (in the appendix) for the sake of completeness.  
Let $Y$ be a \textit{target} random variable which is downstream in the graph $\mathcal{G}$; the expected value of this target variable is the quantity of interest. Consider another random variable $V$ along with its parents $pa(V)$. We assume that there are $K$ possible soft interventions. Each soft intervention is a distinct conditional distribution that dictates the relationship $pa(V) \rightarrow V$.  During a soft intervention $k \in [K]$ ($[K] = \{0,1,...,K-1 \}$), the conditional distribution of $V$ given its parents is set to $\mathrm{P}_k(V \vert pa(V))$ and all other relationships in the causal graph are unchanged. 

It is assumed that the conditional distributions $\mathrm{P}_k(V \vert pa(V))$ and marginals for $pa(V)$ for $k \in [K]$ are known from past experiments or existing domain knowledge. We only observe samples of $Y, V$ and $pa(V)$, while the rest of the variables in the causal graph may be unobserved under different interventions. For simplicity we assume that the variables $V,pa(V)$ are discrete while the target variable $Y$ may be continuous/discrete and has bounded support in $[0,1]$. Further, we assume that the various conditionals, i.e. $\mathrm{P}_k \left(V \vert pa(V \right))$ are \textit{absolutely continuous} with respect to each other. In the case of discrete distributions, the non-zero supports of these distributions are identical. However, our algorithm can be easily generalized for continuous distributions on $V$ and $pa(V)$ (as in our experiments in Section~\ref{sec:cytometry}). In this setting, we are interested in the following natural questions:{ \it Which of the $K$ soft interventions yield the highest expected value of the target $(\mathbb{E}[Y])$ and what is the misidentification error that can be achieved with a finite total budget $T$ for samples ?}

\begin{remark} Although we may know apriori the joint distribution of $pa(V)$ and $V$ under different interventions, how the change affects another variable $Y$ in the causal graph is unknown and must be learnt from samples. The task is to \textbf{transfer prior knowledge} to identify the best intervention.
	\end{remark}

%In the following subsections, we will establish that this question can be naturally posed in the setting of best arm identification in bandits~\cite{audibert2010best, gabillon2012best, carpentier2016tight}. 

{\noindent \bf Bandit Setting: } The $K$ different soft interventions can be thought of as the $K$ \textit{arms} of a bandit problem. Let the reward of arm $k$ be denoted by: $\mu_k = \EE_{k} \left[ Y \right]$, where $\EE_{k} \left[ Y \right]$ is the expected value of $Y$ under the soft intervention when the conditional distribution of $V$ given its parents $pa(V)$ is set to $\mathrm{P}_k(V \vert pa(V))$ (soft intervention $k$) while keeping all other things in $\mathcal{G}$ unchanged. We assume that there is only one best arm. Let $k^*$ be the arm that yields the highest expected reward and $\mu^*$ be the value of the corresponding expected reward i.e. $k^* = \argmax_{k} \mu_k$ and $\mu^* = \mu_{k^*}.$ Let the optimality gap of the $k^{th}$ arm be defined as $\Delta_{k} = \mu^* - \mu_k$. We shall see that the these gaps $\{\Delta_{k} \}_{k=0}^{K-1}$ and the relationship between distributions $\{ \mathrm{P}_k(V \vert pa(V))\}_{k = 0}^{K-1}$ are important parameters in the problem. Let the minimum gap be $\Delta = \min_{k \neq k^*} \Delta_k$.

{\noindent \bf Fixed Budget for Samples: } In this paper, we work under the fixed budget setting of best arm identification~\cite{audibert2010best}. Let $T_k$ be the number of times the $k^{th}$ intervention is used to obtain samples. We require that $ \sum_{k=0}^K T_k = T$. Let $\nu_k = \frac{T_k}{T}$ be the fraction of times the $k^{th}$ intervention is played.

{\noindent \bf Additional Cost Budget on Interventions: }
In the context of causal discovery, some \textit{interventions} require a lot more resources or experimental effort than the others. We find such examples in the context of online advertisement design~\cite{bottou2013counterfactual}.  Therefore, we introduce two variants of an additional cost constraint that influences the choice of interventions.  {\it (i) Difficult arm budget} (\textbf{S1}): Some arms are deemed to be \textit{difficult}. Let $\mathcal{B} \subset [K]$ be the set of \textit{difficult} arms. We require that the total fraction of times the difficult arms are played does not exceed $B$ i.e. $\sum_{k \in \mathcal{B}} \nu_k \leq B.$
{\it (ii) Cost Budget} (\textbf{S2}): This is the most general budget setting that captures the variable costs of sampling each arm~\cite{slivkins2013dynamic}. We assume that there is a cost $c_k$ associated with sampling arm $k$. It is required that the average cost of sampling does not exceed a cost budget $B$ .ie. $\sum_{k = 0}^{K-1} c_k \nu_k \leq B.$ 
$\mathbf{c} = [c_1,..,c_k]$ along with the total budget $T$ completely defines this budget setting. It should be noted that \textbf{S1} is a special case of \textbf{S2}.

\noindent We note that unless otherwise stated, we work with the most general setting in \textbf{S2}. We state some of our results in the setting \textbf{S1} for clearer exposition.

{\noindent \bf Objectives:} There are two main quantities of interest:

(\textit{Probability of Error}): This is the probability of failing to identify the best soft intervention (arm). Let $\hat{k}(T,B)$ be the arm that is predicted to be the best arm at the end of the experiment. Then the probability of error $e(T,B)$~\cite{audibert2010best, carpentier2016tight} is given by, 

$e(T,B) = \PP \left( \hat{k}(T,B) \neq k^*\right)$

(\textit{Simple Regret}): Another important quantity that has been analyzed in the best arm identification setting is the simple regret~\cite{lattimore2016causal}. The simple regret is given by $r(T,B) = \sum _{k \neq k^*}\Delta_k\PP \left( \hat{k}(T,B) = k\right)$.

%\noindent In this paper we will provide efficient algorithms that have provable guarantees on both $r(T,B)$ and $e(T,B)$. We will analyze our algorithms in a problem dependent setting where the guarantees are dependent of the problem parameters $\{\Delta_{k} \}_{k=0}^{K-1}$ and $\{ \mathrm{P}_k(V \vert pa(V))\}_{k = 0}^{K-1}$. The guarantee on $r(T,B)$ also generalizes to the problem independent setting where the gaps $\{\Delta_{k} \}_{k=0}^{K-1}$ can be arbitrarily close to $0$. 

\section{Our Main Results}
\label{sec:results}
In this section we provide our main theoretical contributions. In
Section~\ref{sec:algo}, we provide a successive rejects style
algorithm that leverages the information leakage between the arms via
importance sampling. Then, we provide theoretical guarantees on the
probability of mis-identification $(e(T,B))$ and simple regret
$(r(T,B))$ for our algorithm in Section~\ref{sec:guarantee}. In order
to explain our algorithm and our results formally, we first describe
several key ideas in our algorithm and introduce important definitions in
Section~\ref{sec:def}.

\subsection{Definitions}
\label{sec:def}
% The algorithm starts by having all the $K$ arms under consideration
% and then proceeds in phases. At the beginning of each phase, a
% budget for samples is allocated to each arm. Thereafter, samples are
% collected according to the allocated budgets, and using these
% samples, importance sampling based estimators for the means of all
% the \textit{remaining} arms are formed. Then at the end of each
% phase, one or more arms are rejected based on some phase specific
% criterion on the estimated values. The structure of the algorithm
% and the estimation technique for the means of the arms, both rely on
% a specific $f$-divergence between the arm distributions.

\textbf{Quantifying Information Leakage:} %We observe that
% \[\mu_k = \mathbb{E}_k[Y] = \mathbb{E}_{k'}\left[Y
%     \frac{\mathrm{P}_k\left(V|pa(V)
%       \right)}{\mathrm{P}_{k'}\left(V|pa(V)\right)}\right]. \] By
% weighting samples with the correct ratio of conditional probabilities
% $\mathrm{P}_k(\cdot)$ and $\mathrm{P}_{k'}(\cdot)$, it is possible to
% use samples under intervention $k'$ to estimate the mean under
% intervention $k$. However, the variance of this estimator depends on
% the ratio of $\mathrm{P}_k(\cdot)$ to $\mathrm{P}_{k}'(\cdot)$.
% We use a specific $f$-divergence measure between
% $\mathrm{P}_k(\cdot)$ and $\mathrm{P}_{k}'(\cdot)$ to quantify
% the variance. This $f$-divergence measure can be calculated
% analytically or estimated directly from empirical data (without the
% knowledge of the full distributions) using techniques like that
% of~\cite{perez2008kullback}. 
Our setting is one in which there is information leakage among the $K$
arms of the bandit. Recall that each arm corresponds to a different
conditional distribution imposed on a node $V$ given its parents
$pa(V)$, while the rest of the relationships in the causal graph
$\mathcal{G}$ remain unchanged. Since the different candidate
conditional distributions $\mathrm{P}_{k}(V \vert pa(V))$ are known
from prior knowledge (and are absolutely continuous with respect to
each other), it is possible to utilize samples obtained under an arm
$j$ to obtain an estimate for the expectation under some other arm $i$
(i.e $\EE_{i} \left[ Y\right]$). A popular method for utilizing this
information leakage among different distributions is through
\textit{importance sampling}, which has been used in counterfactual
analysis in similar causal settings~\cite{lattimore2016causal,
  bottou2013counterfactual}.

{\bf \noindent Importance Sampling:} 
% Now we introduce the concept of importance sampling which is one of
% the key tools we use to leverage the information leakage between the
% candidate arms.
Suppose we get samples from arm $j \in [K]$ and we are interested in
estimating $\EE_i[Y]$. In this context it helpful to express
$\EE_i[Y]$ in the following manner:
\begin{align}
\label{eq:basic}
\EE_{i}\left[Y\right] = \EE_{j}\left[Y\frac{\mathrm{P}_i(V \vert pa(V))}{\mathrm{P}_j(V \vert pa(V))}\right]
\end{align}
(\ref{eq:basic}) is trivially true because the only change to the
joint distribution of all the variables in the causal graph
$\mathcal{G}$ under arm $i$ and $j$ is at the factor $\mathrm{P}(V
\vert pa(V))$. Suppose we observe $t$ samples of $\{Y, V, pa(V) \}$
from the arm $j$, denoted by
$\{Y_j(s),V_j(s),pa(V)_j(s)\}_{s=1}^{t}$. Here $X_j(s)$ denotes the
sample from random variable $X$ at time step $s$, while the sub-script
$j$ just denotes that the samples are collected under arm $j$. Under
the observation of  Equation (\ref{eq:basic}), one might assume that
the naive estimator,  
\begin{align}
\label{eq:naive}
\hat{Y}'_i(j) = \frac{1}{t}\sum_{s = 1}^{t} Y_j(s)\frac{\mathrm{P}_i(V_j(s) \vert pa(V)_j(s))}{\mathrm{P}_j(V_j(s) \vert pa(V)_j(s))}
\end{align} 
provides a good estimate for $\mu_i = \EE_{i} \left[
  Y\right]$. However, the confidence guarantees on such an estimate
can be arbitrarily bad even if $Y$ is bounded. This is because the
factor $\mathrm{P}_i(V \vert pa(V))/\mathrm{P}_j(V \vert pa(V))$ can
be very large for several instances of $(V, pa(V))$. Therefore, usual
measure concentrations (e.g. the Azuma-Hoeffding inequality) would not
yield good confidence intervals. This has been noted
in~\cite{lattimore2016causal} in a similar setting, where a
\textit{static} clipper has been applied to the weighted samples to
control the variance. However, a static clipper introduces a fixed
bias, and thus it is not suitable for obtaining gap dependent simple regret
bounds. Instead, in our algorithm we a multi-phase approach and use
dynamic clipping to adaptively control the bias vs. variance trade-off
in a phase dependent manner, which leads to significantly better gap
dependent bounds\footnote{We note that the authors in
  \cite{lattimore2016causal} discuss the possibility of a multi-phase
  approach, where clipper levels could change across phases. However,
  they do not pursue this direction (no specific algorithm or results)
  as their objective is to derive gap independent bounds (minimax
  regret).}.
%% that also generalize to a gap independent bounds. 
We now define some key
quantities.

\begin{definition}
Let $f(\cdot)$ be a non-negative convex function such that $f(1) = 0$.
For two joint distributions $p_{X,Y}(x,y)$ and $q_{X,Y}(x,y)$ (and the
associated  conditionals), the conditional $f$-divergence $D_{f}(p_{X
  \vert Y} \Vert q_{X \vert Y})$ is given by: 
\begin{align*}
D_{f}(p_{X \vert Y} \Vert q_{X \vert Y}) = \EE_{q_{X,Y}} \left[f \left( \frac{p_{X \vert Y}(X \vert Y)}{q_{X \vert Y}(X \vert Y)} \right)\right].
\end{align*}
%Further for two discrete conditional distributions $p$ and $q$, $D_{f}(p \Vert q)$ can be expressed as the summation:
%\begin{align*}
%&D_{f}(p(X \vert Y) \Vert q(X \vert Y))\\
%&= \sum_{y} q(y) \left(\sum_{x} f \left( \frac{p( x \vert  y)}{q( x\vert  y)}\right)q(x\vert  y)\right)
%\end{align*}
\end{definition}
 % With some abuse of notation, we will denote $D_f(\mathrm{P}_i(V \vert
 % pa(V)) \Vert \mathrm{P}_j(V \vert pa(V)))$ by $D_f(\mathrm{P}_i \Vert
 % \mathrm{P}_j)$. 
Recall that $\mathrm{P}_i$ is the conditional distribution of node $V$
given the state of its parents $pa(V).$ Thus, $D_f(\mathrm{P}_i \Vert
\mathrm{P}_j)$ is the conditional $f$-divergence between the conditional
distributions $P_i$ and $P_j.$ Now we define some
\textit{log-divergences} that are are crucial in the our analysis. 
% We have the following relation,
% \begin{align} \label{eq:divergence} \mathbb{E}_{i} \left[ \exp
%     \left( \frac{\mathrm{P}_i(V \vert pa(V))}{\mathrm{P}_j(V \vert
%       pa(V))} \right) \right] = \left[ 1+D_{f_1}\left(\mathrm{P}_i
%       \lVert \mathrm{P}_j\right) \right]e. \end{align}
\begin{definition}
	\label{def:mij}
($M_{ij}$ measure)  Consider the function $f_1(x) = x \exp (x-1) -
1$. We define the following log-divergence measure:  $M_{ij} = 1 +
\log (1 + D_{f_1} (\mathrm{P}_i \lVert \mathrm{P}_j)),$ $\forall i,j
\in [K].$ 
%\begin{equation}
%\label{eq:mij}
%
%\end{equation}
\end{definition}
\noindent These log divergences help us in controlling the bias
variance trade-off in importance sampling as shown in
Section~\ref{sec:relation} in the appendix.  We also note that
estimates of the $f$-divergence measure can be had directly from
empirical data (without the knowledge of the full distributions) using
techniques like that of~\cite{perez2008kullback}.

%\textbf{Remark:} We will call this log-divergence measure as the `$M_{ij}$ measure' for lack of a better descriptive name.

%One way to view the information leakage among arms is through the following expression relating the mean under arm $i$ to samples collected under arm $j$,
%\begin{align}
%\label{eq:basic_main}
%\EE_{i}\left[Y\right] = \EE_{j}\left[Y\frac{\mathrm{P}_i(V \vert pa(V))}{\mathrm{P}_j(V \vert pa(V))}\right]
%\end{align}
%which is trivially true as the only change in the joint distribution of all variables is at $\mathrm{P}(V \vert pa(V))$. However, directly using this expression to form an estimator, can lead to arbitrarily bad concentration as the ratio in the expression can be unbounded. Therefore, it is essential to clip the importance weighted samples at a carefully chosen value, such that we balance the trade-off between the variance of the samples and the bias incurred by the clipping. 

\noindent \textbf{Aggregating Interventional Data:} We describe an efficient
estimator of $\EE_{k}[Y]$ ($\forall k \in [K]$) that combines
available samples from different arms. This estimator adaptively
weights samples depending on the relative $M_{ij}$ measures, and also
uses clipping to control variance by introducing bias. The estimator
is given by (\ref{eq:uniestimator_main}).

Suppose we obtain $\tau_{i}$ samples from arm $i \in [K]$. Let the
total number of samples from all arms be denoted by $\tau$. Further,
let us index all the samples by $s \in \{1,2,..,\tau \},$ and
$\mathcal{T}_{k} \subset \{1,2,..,\tau \}$ be the indices of all the
samples collected from arm $k$. Let $X_j(s)$ denotes the sample collected for random variable $X$ under intervention $j$, at time instant $s$. Finally, let $Z_k = \sum_{j \in
  [K]}\tau_j/M_{kj}$. 
We denote the estimate of  $\mu_k$ by $\hat{Y}_{k}^{\epsilon}$
($\epsilon$ indicates the level of confidence desired).
Our estimator is:  
\begin{align*}
\label{eq:uniestimator_main}
&\hat{Y}_{k}^{\epsilon}  = \frac{1}{Z_{k}}\sum_{j = 0}^{K} \sum_{s \in
  \mathcal{T}_j} \frac{1}{M_{kj}}Y_j(s)\frac{\mathrm{P}_k(V_j(s) \vert
  pa(V)_j(s))}{\mathrm{P}_j(V_j(s) \vert pa(V)_j(s))} \times \mathds{1}\left\{ \frac{\mathrm{P}_k(V_j(s) \vert pa(V)_j(s))}{\mathrm{P}_j(V_j(s) \vert pa(V)_j(s))} \leq 2\log(2/\epsilon)M_{kj}\right\}. \numberthis
\end{align*}
In other words, $\hat{Y}_{k}^{\epsilon}$ is the weighted average of
the clipped samples, where the samples from arm $j$ are weighted by
$1/M_{kj}$ and clipped at $ 2\log(2/\epsilon)M_{kj}$. The
\textbf{choice of $\epsilon$} controls the \textit{bias-variance
  tradeoff} which we will adaptively change in our algorithm.

\subsection{Algorithm}
Now, we describe our main algorithmic contribution - Algorithm
\ref{alg:pickbest} and \ref{alg:pickbest2}.  Algorithm
\ref{alg:pickbest}  starts by having all the $K$ arms under
consideration and then proceeds in phases, possibly rejecting one or more arms
at the end of each phase.  

At every phase, Estimator $(\ref{eq:uniestimator_main})$ with a phase
specific choice of  the $\epsilon$ parameter (i.e. controlling bias
variance trade-off), is applied to all arms under consideration. Using
a phase specific threshold on these estimates, some arms are rejected
at the end of each phase. A random arm among the ones surviving at the
end of \textit{all} phases is declared to be the optimal. We now
describe the duration of various phases. 

Recall the parameters $T$ -  Total sample budget available and $B$ -
average cost budget constraint. Let $n(T) = \lceil \log 2 \times \log
10\sqrt{T} \rceil$. Let $\overline{\log}(n) = \sum_{i = 1}^{n}(1/i)$.
We will have an algorithm with $n(T)$ phases numbered by $\ell =
1,2.,,n(T)$. Let $\tau(\ell)$ be the total number of samples in phase
$\ell$. We set $\tau(l) = T/(l\overline{\log}(n(T)))$ for $l \in
\{1,..,n(T) \}$. Note that $\sum_{l}\tau(l)= T$. Let ${\cal R}$ be the
set of arms remaining to compete with the optimal arm at the beginning
of phase $\ell$ which is continuously updated. 

{\noindent \bf Allocation of Budget:} Let $\tau_k(\ell)$ be the samples allocated to arm $k$ in phase
$\ell$. Let $\pmb{\tau(\ell)}$ be the vector consisting of entries $\{
\tau_k\left(\ell \right) \}$. The vector of allocated samples, i.e.
$\pmb{\tau(\ell)}$ is decided by Algorithm~\ref{alg:allocate}. 
%
%% The performance depends crucially on the allocation. 
%
Intuitively, an arm that provides sufficient information about all the
remaining arms needs to be given more budget than other less
informative arms. This allocation depends on the average budget
constraints and the relative log divergences between the arms
(Definition~\ref{def:mij}). Algorithm~\ref{alg:allocate} formalizes
this intuition, and ensures that variance of the worst estimator (of
the form (\ref{eq:uniestimator_main})) for the arms in $\mathcal{R}$
is as \textit{good} as possible (quantified in
Theorem~\ref{thm:uniestimator} and Lemma~\ref{allocation} in the
appendix).

The inverse of the maximal objective of the LP in Algorithm~\ref{alg:allocate} acts as \textit{effective standard deviation} uniformly for all the estimators for the remaining arms in $\mathcal{R}$. It is analogous to the variance terms appearing in Bernstein-type concentration bounds (refer to Lemma~\ref{allocation} in the appendix).

\begin{definition}\label{effectivelogdivergence}
	The \textbf{effective standard deviation} for budget $B$ and arm set ${\cal R} \subseteq [K] $ is defined as ${\sigma^*(B,{\cal R})} = 1/v^*(B,{\cal R})$ from Algorithm~\ref{alg:allocate} with input $B$ and arm set ${\cal R}$.
\end{definition}

Algorithm~\ref{alg:allocate} minimizes the variance terms in the
confidence bounds for the estimates of the arms that are in contention
i.e. $\hat{Y}_{k}^{\epsilon}$ for all $k \in \mathcal{R}$. This
minimization is performed subject to the constraints on the fractional
budget of each arms (recall $B$ from Section~\ref{sec:probdef}). Note
that Algorithm~\ref{alg:allocate} only needs to ensure good confidence
bounds for the arms that are remaining ($k \in \mathcal{R}$). This
gets easier as the number of arms remaining (i.e.
$\lvert \mathcal{R} \rvert$) decreases. Therefore the effective
variances $\sigma^*(B,{\cal R})$ become progressively better with
every phase.

\begin{algorithm}
	\caption{Successive Rejects with Importance Sampling -v1 (SRISv1) - Given total budget $T$ and the cost budget $B$ (along with $c_i$'s) picks the best arm.}
	\begin{algorithmic}[1]
		\STATE $\mathrm{SRIS}(B,\{M_{kj} \},T)$
		\STATE ${\cal R} = [K]$.
		\STATE Form the matrix $\mathbf{A} \in \mathbb{R}^{K \times K}$ such that $A_{kj} = \frac{1}{M_{kj}}$.
		\FOR {$\ell=1$ to $n(T)$}      
		\STATE $\pmb{\tau(\ell)} = \mathrm{ALLOCATE} \left(\mathbf{c}, B, \mathbf{A},{\cal R}, \tau(\ell) \right)$ (Algorithm~\ref{alg:allocate})
		\STATE Use arm $k$, $\tau_k(\ell)$ times and collect samples $(Y,V, pa(V) )$.
		\FOR {$k \in {\cal R}$}
		\STATE \label{samplestamp}Let $\hat{Y}_k$ be the estimator for arm $k$ as in (\ref{eq:uniestimator_main}) calculated with $\{M_{kj} \}$, $\epsilon = 2^{-(\ell-1)}$ and the samples obtained in Line 6.	\ENDFOR
		\STATE Let $\hat{Y}_H  = \argmax _{k \in {\cal R}} \hat{Y}_k$. 
		\STATE ${\cal R}= {\cal R} - \{k \in {\cal R} : \hat{Y}_H > \hat{Y}_k + 5/2^{l} \}$.
		\IF {$\lvert {\cal R}\rvert =1$}
		\STATE \textbf{return:} the arm in ${\cal R}$.
		\ENDIF
		\ENDFOR
		\STATE \textbf{return:} A randomly chosen arm from ${\cal R}$.
	\end{algorithmic}
	\label{alg:pickbest}
\end{algorithm}

\begin{remark} Note that Line 6 uses only the samples acquired in that phase. Clearly, a natural extension is to modify the algorithm to re-use all the samples acquired prior to that step. We give that variation in Algorithm \ref{alg:pickbest2}. We prove all our guarantees for Algorithm~\ref{alg:pickbest}. We conjecture that the second variation has tighter guarantees (dropping a multiplicative log factor) in the sample complexity requirements.
\end{remark}
\begin{algorithm}
	\caption{Successive Rejects with Importance Sampling -v2 (SRISv2) - Given total budget $T$ and the cost budget $B$ (along with $\mathbf{c}$) picks the best arm.}
	\begin{algorithmic}[1]
		\STATE Identical to Algorithm \ref{alg:pickbest} except for Line 6 where all samples acquired in all the phases till that Line is used.
	\end{algorithmic}
	\label{alg:pickbest2}
\end{algorithm}
\label{sec:algo}

\begin{algorithm}
	\caption{Allocate - Allocates a given budget $\tau$ among the arms to reduce variance.}
	\begin{algorithmic}[1]
		\STATE $\mathrm{ALLOCATE}(\mathbf{c},B,\mathbf{A},{\cal R},\tau)$
		\STATE Solve the following LP: 
		\begin{align}
		\label{mainLP1}
		\frac{1}{\sigma^*(B,{\cal R})} &= v^*(B,{\cal R}) =  \max_{\pmb{\nu}} \min_{k \in {\cal R}} [\mathbf{A}\pmb{\nu}]_{k} \\
		&\text{s.t.~} \sum_{i=0}^K c_i \nu_{i} \leq B  \text{ and } \sum_{j = 0}^{K} \nu_j = 1, ~\nu_i \geq 0. \nonumber
		\end{align}
		where $[\mathbf{A}\pmb{\nu}]_{k}$ denotes the $k^{th}$ element in the vector $\mathbf{A}\pmb{\nu}$.
		\STATE Assign $\tau_j = \nu_j^*(B,{\cal R})\tau $
	\end{algorithmic}
	\label{alg:allocate}
\end{algorithm}
\subsection{Theoretical Guarantees}
\label{sec:guarantee}
We state our main results as Theorem~\ref{mainresult} and Theorem~\ref{mainresult2}, which provide guarantees on probability of error and simple regret respectively. Our results can be interpreted as a natural generalization of the results in~\cite{audibert2010best}, when there is information leakage among the arms. This is the first gap dependent characterization.  
%\begin{theorem}\label{mainresult}
%(Proved formally as Theorem \ref{thm:detailed}) Let $\Delta= \min \limits_{k \neq k*} \Delta_k$. Let $\sigma^*$ be the effective variance as in Definition \ref{effectivelogdivergence}. The probability that Algorithm \ref{alg:pickbest} identifies the wrong optimal arm ($e(T,B)$) is at most $\delta$, whenever the following relation involving total number of samples $T$  is satisfied:
%\begin{equation}
% \frac{T}{\overline{\log}(n(T))} \geq  100\left[\log \log(20/\Delta) + \log (2K^2/\delta ) \right]  \bar{H}
%  \end{equation}
% Here, 
% \begin{align}
%  \label{Hbar}
%  \bar{H} = \max_{k \neq k^*} \log_2(10/\Delta_k)^3  \left(\frac{\sigma^*(B,{\cal R}^*(\Delta_k))}{\Delta_k }\right)^2
%  \end{align}
%and ${\cal R}^*(\Delta_k) = \{ s: \log_2 \left(\frac{10}{\Delta_s} \right) \geq \lfloor  \log_2 \left(\frac{10}{\Delta_k }\right) \rfloor  \} $ is the set of arms whose distance from the optimal arm is roughly at most twice that of arm $k$.
%\end{theorem}

\begin{theorem}\label{mainresult}
	(Proved formally as Theorem \ref{thm:detailed}) Let $\Delta= \min \limits_{k \neq k*} \Delta_k$. Let $\sigma^*(.)$ be the effective standard deviation as in Definition \ref{effectivelogdivergence}. The probability of error for  Algorithm \ref{alg:pickbest} satisfies:
	\begin{equation}
	e(T,B) \leq 2K^2 \log_2 (20/\Delta) \exp \left( - \frac{T}{ 2\bar{H} \overline{\log}(n(T))} \right)
	\end{equation}
	when the budget for the total number of samples is $T$ and $\Delta \geq 10/\sqrt{T}$. 
	Here, 
	\begin{align}
	\label{Hbar}
	\bar{H} = \max_{k \neq k^*} \log_2(10/\Delta_k)^3  \left(\frac{\sigma^*(B,{\cal R}^*(\Delta_k))}{\Delta_k }\right)^2
	\end{align}
	and ${\cal R}^*(\Delta_k) = \{ s: \log_2 \left(\frac{10}{\Delta_s} \right) \geq \lfloor  \log_2 \left(\frac{10}{\Delta_k }\right) \rfloor  \} $ is the set of arms whose distance from the optimal arm is roughly at most twice that of arm $k$.
\end{theorem}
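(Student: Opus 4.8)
The plan is to run a successive-rejects argument in which the per-phase confidence width is controlled by the effective standard deviation $\sigma^*(B,\mathcal R_\ell)$ of the aggregated estimator $(\ref{eq:uniestimator_main})$ rather than by a count of surviving arms, and then to exploit the fact that $\sigma^*(B,\mathcal R_\ell)$ shrinks as arms are eliminated, so that at the phase where arm $k$ is dropped the governing quantity is at most $\sigma^*(B,\mathcal R^*(\Delta_k))$. First I would instantiate the concentration guarantee for a single estimator $\hat Y_k$ produced in phase $\ell$ with $\epsilon=\epsilon_\ell:=2^{-(\ell-1)}$ and allocation $\pmb{\tau}(\ell)=\pmb{\nu}^*(B,\mathcal R_\ell)\tau(\ell)$ from Algorithm~\ref{alg:allocate}: this is Theorem~\ref{thm:uniestimator} together with Lemma~\ref{allocation}, a Bernstein-type bound whose ``variance'' is controlled by the log-divergences $M_{kj}$ (via $D_{f_1}(\mathrm P_k\Vert\mathrm P_j)$) and whose clipping bias is forced to be $O(\epsilon_\ell)$ by the matched choice $f_1(x)=x\exp(x-1)-1$ and clipping level $2\log(2/\epsilon_\ell)M_{kj}$. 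Using $\log(2/\epsilon_\ell)=\ell\log 2$, $\tau(\ell)=T/(\ell\,\overline{\log}(n(T)))$, and the LP property $[\mathbf A\pmb{\nu}^*]_k\ge v^*(B,\mathcal R_\ell)=1/\sigma^*(B,\mathcal R_\ell)$ for every $k\in\mathcal R_\ell$, this reads, up to a lower-order boundedness term and an $O(\epsilon_\ell)$ bias, roughly $|\hat Y_k-\mu_k|\lesssim \sigma^*(B,\mathcal R_\ell)\,\ell\,\sqrt{\overline{\log}(n(T))/T}$ with probability at least $1-\epsilon_\ell$, with a matching two-sided tail.

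Next I would set $\ell_k:=\lceil\log_2(10/\Delta_k)\rceil$ and $\ell_\Delta:=\lceil\log_2(10/\Delta)\rceil$; the hypothesis $\Delta\ge 10/\sqrt T$ is exactly what guarantees $\ell_\Delta\le n(T)$ and that the $O(\epsilon_\ell)$ bias is negligible for all $\ell\le\ell_\Delta$. Let $\mathcal G$ be the event that for every phase $\ell\le\ell_\Delta$ and every arm $k$ in the surviving set $\mathcal R_\ell$ at the start of phase $\ell$, $|\hat Y_k-\mu_k|\le 2^{-\ell}$. On $\mathcal G$ the two standard inequalities hold: (i) $k^*$ is never rejected, since every $k\in\mathcal R_\ell$ has $\hat Y_k\le\mu_k+2^{-\ell}\le\mu^*+2^{-\ell}$ while $\hat Y_{k^*}\ge\mu^*-2^{-\ell}$, so $\hat Y_H-\hat Y_{k^*}\le 2\cdot2^{-\ell}<5\cdot 2^{-\ell}$; and (ii) any suboptimal $k$ still present at a phase $\ell\ge\ell_k$ is rejected, since $\hat Y_H\ge\hat Y_{k^*}\ge\mu^*-2^{-\ell}$ and $\hat Y_k\le\mu^*-\Delta_k+2^{-\ell}$ force $\hat Y_H-\hat Y_k\ge\Delta_k-2\cdot2^{-\ell}>5\cdot2^{-\ell}$ once $\Delta_k>7\cdot2^{-\ell}$, which holds at $\ell=\ell_k$. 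Hence on $\mathcal G$ every suboptimal arm is gone by the end of phase $\ell_k\le\ell_\Delta$, $\mathcal R$ collapses to $\{k^*\}$, line~12 of Algorithm~\ref{alg:pickbest} fires, and $k^*$ is returned; so $e(T,B)\le\PP(\mathcal G^c)$.

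It remains to bound $\PP(\mathcal G^c)$, which is where information leakage enters. On $\mathcal G$, by inequality (ii) every arm $s$ with $\ell_s<\ell$ has been dropped before phase $\ell$, so $\mathcal R_\ell\subseteq\{s:\ell_s\ge\ell\}$; calibrating the floor/ceiling in $\ell_k$ to the one in the definition of $\mathcal R^*$, this gives $\mathcal R_{\ell_k}\subseteq\mathcal R^*(\Delta_k)$. Since the inner minimum in the LP of Algorithm~\ref{alg:allocate} ranges over $\mathcal R$, $v^*(B,\cdot)$ is non-increasing and $\sigma^*(B,\cdot)$ non-decreasing under set inclusion, whence $\sigma^*(B,\mathcal R_{\ell_k})\le\sigma^*(B,\mathcal R^*(\Delta_k))$. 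Plugging this into the Step-1 bound at phase $\ell_k$ (and, with the corresponding containments, at each earlier phase), the width drops below $2^{-\ell_k}$ provided $T\gtrsim 2^{2\ell_k}\,\mathrm{poly}(\log_2(10/\Delta_k))\,\sigma^*(B,\mathcal R^*(\Delta_k))^{2}\,\overline{\log}(n(T))$; substituting $2^{\ell_k}\asymp 10/\Delta_k$, $\ell_k\asymp\log_2(10/\Delta_k)$, and reading off the exponents produced by the sub-Gaussian/boundedness split of the Bernstein bound yields precisely $T\ge 2\bar H\,\overline{\log}(n(T))$ with $\bar H$ as in $(\ref{Hbar})$, so each estimator fails with probability $\le\exp\bigl(-T/(2\bar H\,\overline{\log}(n(T)))\bigr)$. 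A union bound over the $\le\log_2(20/\Delta)$ relevant phases, the $\le K$ estimators evaluated per phase, and an extra $\le K$ factor arising inside the concentration bound for each estimator gives $\PP(\mathcal G^c)\le 2K^2\log_2(20/\Delta)\exp\bigl(-T/(2\bar H\,\overline{\log}(n(T)))\bigr)$, which is the claim.

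The main obstacle is the Step-1/Step-3 interface: converting the Bernstein bound for $(\ref{eq:uniestimator_main})$ into a clean exponential tail requires controlling the boundedness term, which scales with the clipping level $\log(2/\epsilon_\ell)\asymp\ell$ relative to $Z_k$, showing it is dominated by the sub-Gaussian term in the regime $\Delta\ge 10/\sqrt T$, and bounding the clipping bias uniformly over phases by $O(\epsilon_\ell)$ --- this is exactly where the tailored $f$-divergence $f_1(x)=x\exp(x-1)-1$ and the measure $M_{ij}$ earn their keep, and the (not necessarily tight) powers of $\log_2(10/\Delta_k)$ and of $\sigma^*$ in $\bar H$ are whatever that split produces. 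The remaining, more routine, difficulty is the floor/ceiling bookkeeping that makes $\mathcal R_{\ell_k}\subseteq\mathcal R^*(\Delta_k)$ exact together with the tracking of absolute constants, so that the exponent comes out to $2\bar H\,\overline{\log}(n(T))$ and not merely a constant multiple of it.
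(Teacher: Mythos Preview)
Your overall plan is the paper's: per-phase concentration via Theorem~\ref{thm:uniestimator} and Lemma~\ref{allocation}, elimination bookkeeping, LP monotonicity of $\sigma^*(B,\cdot)$ in the constraint set, and a union bound over phases and arms. There is, however, one genuine gap in your Step~3.

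You package everything into a single good event $\mathcal G$ and then argue: on $\mathcal G$, $\mathcal R_{\ell_k}\subseteq\mathcal R^*(\Delta_k)$, hence $\sigma^*(B,\mathcal R_{\ell_k})\le\sigma^*(B,\mathcal R^*(\Delta_k))$; ``plugging this into the Step-1 bound'' yields the desired tail. That is circular. The containment $\mathcal R_\ell\subseteq\mathcal R^*(\Delta_k)$ is a consequence of $\mathcal G$, so it cannot be invoked to upper-bound $\PP(\mathcal G^c)$. This is not merely cosmetic: the estimator $\hat Y_k(\ell)$ in (\ref{eq:uniestimator_main}) and the exponent in its concentration both depend on the allocation $\pmb\nu^*(B,\mathcal R_\ell)$, which depends on the \emph{random} surviving set $\mathcal R_\ell$. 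On $\mathcal G^c$ you have no control over $\mathcal R_\ell$, hence none over the exponent, and the bound you want to plug in is unavailable. You flag several obstacles in your last paragraph, but not this one.

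The paper closes this gap by a sequential decomposition: with per-phase success events $S_\ell$ as in (\ref{eq:success}), it uses $\PP\bigl(S_{1:\gamma^*}^c\bigr)\le\sum_{\ell}\PP\bigl(S_\ell^c\mid S_{1:\ell-1}\bigr)$. Conditioned on $S_{1:\ell-1}$, the set $\mathcal R_\ell$ (which is measurable in the phase-$1{:}\ell{-}1$ samples) is almost surely contained in $\mathcal R^*(\ell)$, so LP monotonicity gives $v^*(B,\mathcal R_\ell)\ge v^*(B,\mathcal R^*(\ell))$ deterministically on that event; and because Line~\ref{samplestamp} of Algorithm~\ref{alg:pickbest} uses only fresh phase-$\ell$ samples, Theorem~\ref{thm:uniestimator} applies under the conditioning with the worst-case exponent governed by $v^*(B,\mathcal R^*(\ell))$. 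This is exactly Lemma~\ref{lem:negassociation}, and it is the reason the guarantee is proved for Algorithm~\ref{alg:pickbest} (which discards past samples) and only conjectured for Algorithm~\ref{alg:pickbest2}. Once you insert this conditioning, your remaining steps go through.

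One smaller correction: the clipping makes the estimator biased \emph{downward}, so the confidence interval from Theorem~\ref{thm:uniestimator} with $\epsilon=\delta=2^{-(\ell-1)}$ is asymmetric, $\hat Y_k\in[\mu_k-3/2^\ell,\ \mu_k+2/2^\ell]$. That is why the paper's $S_\ell$ uses different one-sided bounds for $k^*$ and for $k\neq k^*$, and why the elimination arithmetic comes out to $\hat Y_H-\hat Y_k\ge\Delta_k-5/2^\ell$ with threshold $\Delta_k>10/2^\ell$, rather than your symmetric $|\hat Y_k-\mu_k|\le 2^{-\ell}$ leading to $\Delta_k>7\cdot 2^{-\ell}$.
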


\textbf{Comparison with the result in \cite{audibert2010best}:}
Let $\tilde{{\cal R}}(\Delta_k)= \{s: \Delta_s \leq \Delta_k \}$, i.e. the set of arms which are closer to the optimal than arm $k$. Let $\tilde{H} = \max \limits_{k \neq k*} \frac{\lvert \tilde{{\cal R}}(\Delta_k) \rvert}{\Delta_k^2}$. The result in \cite{audibert2010best} can be stated as: \textit{The error in finding the optimal arm is bounded as}: $e(T) \leq O\left(K^2 \exp \left(- \frac{T - K}{\overline{\log}(K)\tilde{H}} \right) \right)$. 

Our work is analogous to the above result (upto poly log factors) except that $\bar{H}$ appears instead of $\tilde{H}$. In Section~\ref{sec:compare1} (in the appendix), we demonstrate through simple examples that $\sigma^*(B,{\cal R}^*(\Delta_k))$ can be significantly smaller than $\sqrt{\tilde{\mathcal{R}}(\Delta_k)}$ (the corresponding term in $e(T)$ above) even when there are no average budget constraints. Moreover, our results can be exponentially better in the presence of average budget constraints (examples in Section~\ref{sec:compare1}). Now we present our bounds on simple regret in Theorem~\ref{mainresult2}.

\begin{theorem}\label{mainresult2}
	(Proved formally as Theorem \ref{thm:detailed})  Let $\sigma^*(.)$ be the effective standard deviation as in Definition \ref{effectivelogdivergence}. The simple regret of Algorithm \ref{alg:pickbest} when the number of samples is $T$ satisfies:
	\ \begin{align}
	&r(T,B) \leq  
	 \frac{10}{\sqrt{T}}\mathds{1} \left\{\exists k\neq k^* \text{ s.t } \Delta_k < 10/\sqrt{T} \right\} +2K^2\sum_{\substack{k \neq k^*:\\ \Delta_k \geq 10/\sqrt{T} }} \Delta_k \log_2\left(\frac{20}{\Delta_k}\right) \exp \left( - \frac{T}{ 2\bar{H}_{k} \overline{\log}(n(T))} \right) 	\label{eq:SRmain}
	\end{align}
	Here, 
%	\begin{align}
%	\label{HK}
	$\bar{H}_{k} = \max_{\{l : \Delta_l \geq \Delta_k\}} \frac{\log_2(10/\Delta_l)^3}{(\Delta_l/10)^2 v^*(B,{\cal R}^*(\Delta_l))^2}$
%	\end{align}
	and ${\cal R}^*(\Delta_k) = \{ s: \log_2 \left(\frac{10}{\Delta_s} \right) \geq \lfloor  \log_2 \left(\frac{10}{\Delta_k }\right) \rfloor  \}. $ 
\end{theorem}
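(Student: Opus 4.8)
\textbf{Proof sketch of Theorem~\ref{mainresult2}.} The plan is to replace the single misidentification bound of Theorem~\ref{mainresult} by a \emph{per-arm} bound on $\PP(\hat{k}(T,B)=k)$ for each suboptimal $k$, and then assemble $r(T,B)=\sum_{k\neq k^*}\Delta_k\PP(\hat{k}(T,B)=k)$. Fix a suboptimal arm $k$ with $\Delta_k\ge 10/\sqrt{T}$ and define its \emph{critical phase} $\ell_k=\lceil\log_2(10/\Delta_k)\rceil$, chosen so that the rejection radius $5/2^{\ell}$ used in Algorithm~\ref{alg:pickbest} (plus a constant-factor slack) has fallen below $\Delta_k$ as soon as $\ell\ge\ell_k$; note $\ell_k\le\log_2(20/\Delta_k)$. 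The good event will be $\mathcal{G}_k=\bigcap_{\ell=1}^{\ell_k}\mathcal{E}_\ell$, where $\mathcal{E}_\ell$ is the event that in phase $\ell$ every surviving arm $j$ has $|\hat{Y}_j-\mu_j|\le 2/2^{\ell}$, with $\hat{Y}_j$ the clipped estimator (\ref{eq:uniestimator_main}) run with the phase choice $\epsilon=2^{-(\ell-1)}$.

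I would then verify two deterministic facts on $\mathcal{G}_k$. First, $k^*$ is never rejected in phases $1,\dots,\ell_k$: for any surviving $j$, $\hat{Y}_j\le\mu_j+2/2^{\ell}\le\mu^*+2/2^{\ell}\le\hat{Y}_{k^*}+4/2^{\ell}<\hat{Y}_{k^*}+5/2^{\ell}$, so $k^*$ passes the test; and if the algorithm returns early it returns $k^*$, since $\mathcal{G}_k$ forces any singleton $\mathcal{R}$ to be $\{k^*\}$. Second, $k$ is rejected no later than phase $\ell_k$: at $\ell=\ell_k$ one has $\hat{Y}_H\ge\hat{Y}_{k^*}\ge\mu^*-2/2^{\ell_k}$ and $\hat{Y}_k\le\mu_k+2/2^{\ell_k}=\mu^*-\Delta_k+2/2^{\ell_k}$, hence $\hat{Y}_H-\hat{Y}_k\ge\Delta_k-4/2^{\ell_k}>5/2^{\ell_k}$ by the choice of $\ell_k$. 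Together these give $\{\hat{k}(T,B)=k\}\subseteq\mathcal{G}_k^c$, so $\PP(\hat{k}(T,B)=k)\le\sum_{\ell=1}^{\ell_k}\PP(\mathcal{E}_\ell^c)$.

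To bound $\PP(\mathcal{E}_\ell^c)$ I would union bound over the surviving arms and apply the concentration guarantee for (\ref{eq:uniestimator_main}) --- Theorem~\ref{thm:uniestimator} together with the LP analysis of Lemma~\ref{allocation} --- so that, conditionally on the earlier phases having been good, $\PP(|\hat{Y}_j-\mu_j|>2/2^{\ell})$ is at most a constant times $\exp(-c\,\tau(\ell)\,2^{-2\ell}/(\log(2/\epsilon)\,\sigma^*(B,\mathcal{R})^2))$, the $\log(2/\epsilon)$ coming from the clipping threshold $2\log(2/\epsilon)M_{jm}$ and $\sigma^*(B,\mathcal{R})$ from Definition~\ref{effectivelogdivergence}. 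On the event that phases $<\ell$ were good, every surviving arm has gap of order $2^{-\ell}$, so $\mathcal{R}\subseteq\mathcal{R}^*(\Delta_l)$ for the scale $\Delta_l\sim 2^{-\ell}$ and $\sigma^*(B,\mathcal{R})\le\sigma^*(B,\mathcal{R}^*(\Delta_l))$ --- this is the usual successive-rejects recursion. Plugging $\tau(\ell)=T/(\ell\,\overline{\log}(n(T)))$ and $\epsilon=2^{-(\ell-1)}$, the exponent absorbs the factor $\ell$ from $\tau(\ell)$, the factor $\log(2/\epsilon)\sim\ell$ from clipping, and a further $\ell$ needed to push the clipping bias below the radius, into $\log_2(10/\Delta_l)^3$ --- reproducing exactly $\bar{H}_k=\max_{l:\Delta_l\ge\Delta_k}\log_2(10/\Delta_l)^3/((\Delta_l/10)^2 v^*(B,\mathcal{R}^*(\Delta_l))^2)$; summing over the at most $\log_2(20/\Delta_k)$ phases and at most $K$ surviving arms (the extra $K$ factor mirroring the accounting in \cite{audibert2010best}) then gives $\PP(\hat{k}(T,B)=k)\le 2K^2\log_2(20/\Delta_k)\exp(-T/(2\bar{H}_k\overline{\log}(n(T))))$. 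Finally, in $r(T,B)=\sum_{k\neq k^*}\Delta_k\PP(\hat{k}(T,B)=k)$, the arms with $\Delta_k<10/\sqrt{T}$ have $\ell_k>n(T)$ and so cannot be resolved within the allotted phases; since at most one arm is output, their total contribution is at most $10/\sqrt{T}$, giving the indicator term, while each arm with $\Delta_k\ge 10/\sqrt{T}$ contributes the displayed exponential term.

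The hard part will be the bias--variance bookkeeping underlying the concentration statement: one must show that at each phase the clipped estimator (\ref{eq:uniestimator_main}) simultaneously has bias $\ll 2/2^{\ell}$ and variance $\lesssim\log(2/\epsilon)\,\sigma^*(B,\mathcal{R})^2$, \emph{uniformly} over all surviving arms, which is precisely where the $M_{ij}$ log-divergences (Definition~\ref{def:mij}) and the fact that the ALLOCATE LP of Algorithm~\ref{alg:allocate} equalizes $[\mathbf{A}\pmb{\nu}]_k$ over $k\in\mathcal{R}$ enter; this is the content of Theorem~\ref{thm:uniestimator} and Lemma~\ref{allocation}, and given those the rest is the union bound and recursion described above. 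A secondary point is to confirm that the early-return step of Algorithm~\ref{alg:pickbest} causes no loss, which follows from the first deterministic fact.
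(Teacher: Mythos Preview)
Your proposal is essentially the paper's own argument: define a per-arm critical phase, chain the phase-wise success events, invoke Theorem~\ref{thm:uniestimator} together with Lemma~\ref{allocation} for the concentration, and use the recursion $\mathcal{R}(\ell)\subseteq\mathcal{R}^*(\ell)$ conditionally on earlier successes to replace $\sigma^*(B,\mathcal{R})$ by $\sigma^*(B,\mathcal{R}^*(\Delta_l))$. Two small bookkeeping corrections: (i) the clipped estimator's guarantee in Theorem~\ref{thm:uniestimator} is asymmetric (bias $\epsilon/2$ only on the lower side), so your good event should be asymmetric like the paper's $S_\ell$ rather than $|\hat Y_j-\mu_j|\le 2/2^\ell$; (ii) the cube $\log_2(10/\Delta_l)^3$ in $\bar H_k$ comes from one factor of $\ell$ in $\tau(\ell)=T/(\ell\,\overline{\log}(n(T)))$ and \emph{two} factors from $(\log(2/\epsilon))^2$ in the variance term of Theorem~\ref{thm:uniestimator} --- there is no additional $\ell$ ``to push the bias below the radius,'' since the bias is already $\epsilon/2=2^{-\ell}$.
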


\textbf{Comparison with the result in \cite{lattimore2016causal}:}
In~\cite{lattimore2016causal}, the simple regret scales as
$O(1/\sqrt{T})$ and does not adapt to the gaps. We provide gap
dependent bounds that can be exponentially better than that
of~\cite{lattimore2016causal} (when $\Delta_k$'s are not too small and the first term in (\ref{eq:SRmain}) is zero). Moreover our bounds generalize to gap
independent bounds that match $O(1/\sqrt{T})$. Further details are
provided in Section~\ref{sec:compare2} (in the appendix).

%In~\cite{lattimore2016causal}, the algorithm is based on clipped importance samples, where the clipper is always set at a static level of $O(\sqrt{T})$ (excluding $\log$ factors). The simple regret guarantee in~\cite{lattimore2016causal} scales as $O(\sqrt{(m(\eta)/T)\log T})$, where $m(\eta)$ is a global hardness parameter. The guarantees do not adapt to the problem parameters, specifically the gaps $\{\Delta_k \}_{k \in [K]}$. 
%
%On the contrary, we provide problem dependent bounds. %The terms $\bar{H}_{k}$ can be interpreted as the hardness parameter for rejecting arm $k$. Note that $\bar{H}_k$ depends only on the  arms that are at least as \textit{bad} in terms of their gap from the optimal arm. Moreover the guarantees are adapted to our general budget constraints, which is absent in~\cite{lattimore2016causal}. 
%It can be seen that when $\Delta_k$'s do not scale in $T$, then our simple regret is exponentially small in $T$ (dependent on $\bar{H}_k$'s) and can be much less than $O(1/\sqrt{T})$. The guarantee also generalizes to the problem independent setting when $\Delta_k$'s scale as $O(1/\sqrt{T})$ and can handle general constraints on average budgets. Further details are provided in Section~\ref{sec:compare2}. 

We defer the theoretical analysis to Section~\ref{sec:proofs}. Theorem~\ref{mainresult} and Theorem~\ref{mainresult2} are subparts of our main technical theorem (Theorem~\ref{thm:detailed}), which is proved in Section~\ref{sec:analysis}. 

\section{Empirical Validation}
\label{sec:simulation}
%~\cite{audibert2010best, lattimore2016causal,sachs2005causal}. 
We empirically validate the performance of our algorithms in two real data settings. In Section~\ref{sec:cytometry}, we study the empirical performance of our algorithm on the flow cytometry data-set~\cite{sachs2005causal}.
In Section~\ref{sec:inception}, we apply our algorithms for the purpose of \textit{model interpretability} of the Inception Deep Network~\cite{szegedy2015going} in the context of image classification. Section~\ref{sec:simulationsyn} is dedicated to  synthetic experiments. In Section~\ref{sec:moresims} (in the appendix) we provide more details about our experiments. In the appendix we empirically show that our divergence metric is fundamental and replacing it with other divergences is sub-optimal.
\subsection{Flow Cytometry Data-Set}
\label{sec:cytometry}
The flow cytometry data-set~\cite{sachs2005causal} consists of multi-variate measurements of protein interactions in a \textit{single cell}, under different experimental conditions (\textit{soft interventions}). This data-set has been extensively used for validating causal inference algorithms. Our experiments are aimed at \textit{identifying} the best intervention among many, given some ground truth about the causal graph. For, this purpose we borrow the causal graph from Fig. 5(c) in~\cite{mooij2013cyclic} (shown in Fig.~\ref{fig:cgraph1}) and consider it to be the ground truth.

\begin{figure*}
	\centering
	\subfloat[][{\small Causal Graph for Cytometry Data~\cite{mooij2013cyclic}}]{\includegraphics[width = 0.3\linewidth]{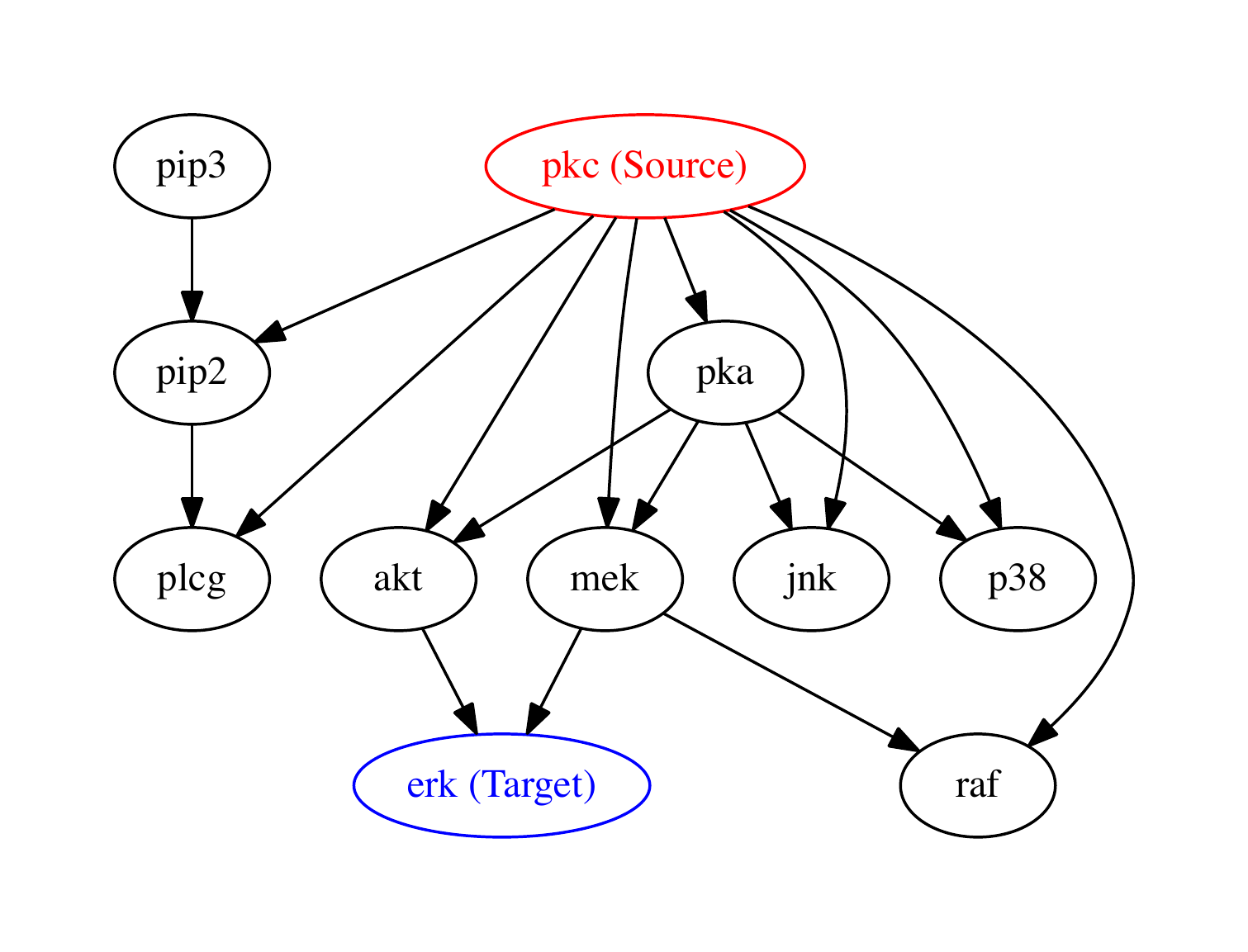}\label{fig:cgraph1}} \hfill
	\subfloat[][{\small Simple Regret when divergences $M_{k0}$'s are high and gap $\Delta$ is small}]{\includegraphics[width = 0.3\linewidth]{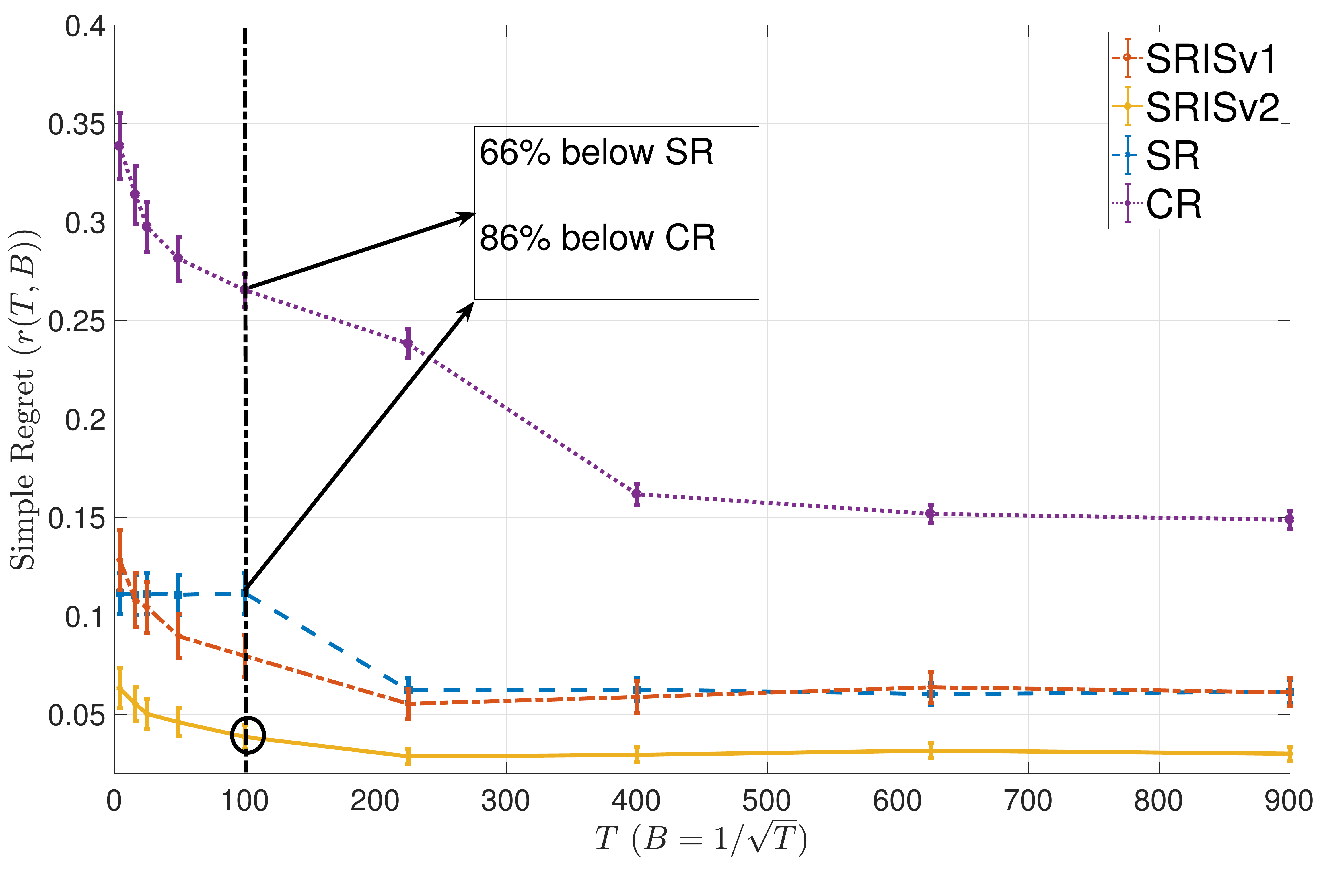}\label{fig:cgraph2}} \hfill
	\subfloat[][{\small Simple Regret when divergences $M_{k0}$'s are low and gap $\Delta$ is small}]{\includegraphics[width = 0.3\linewidth]{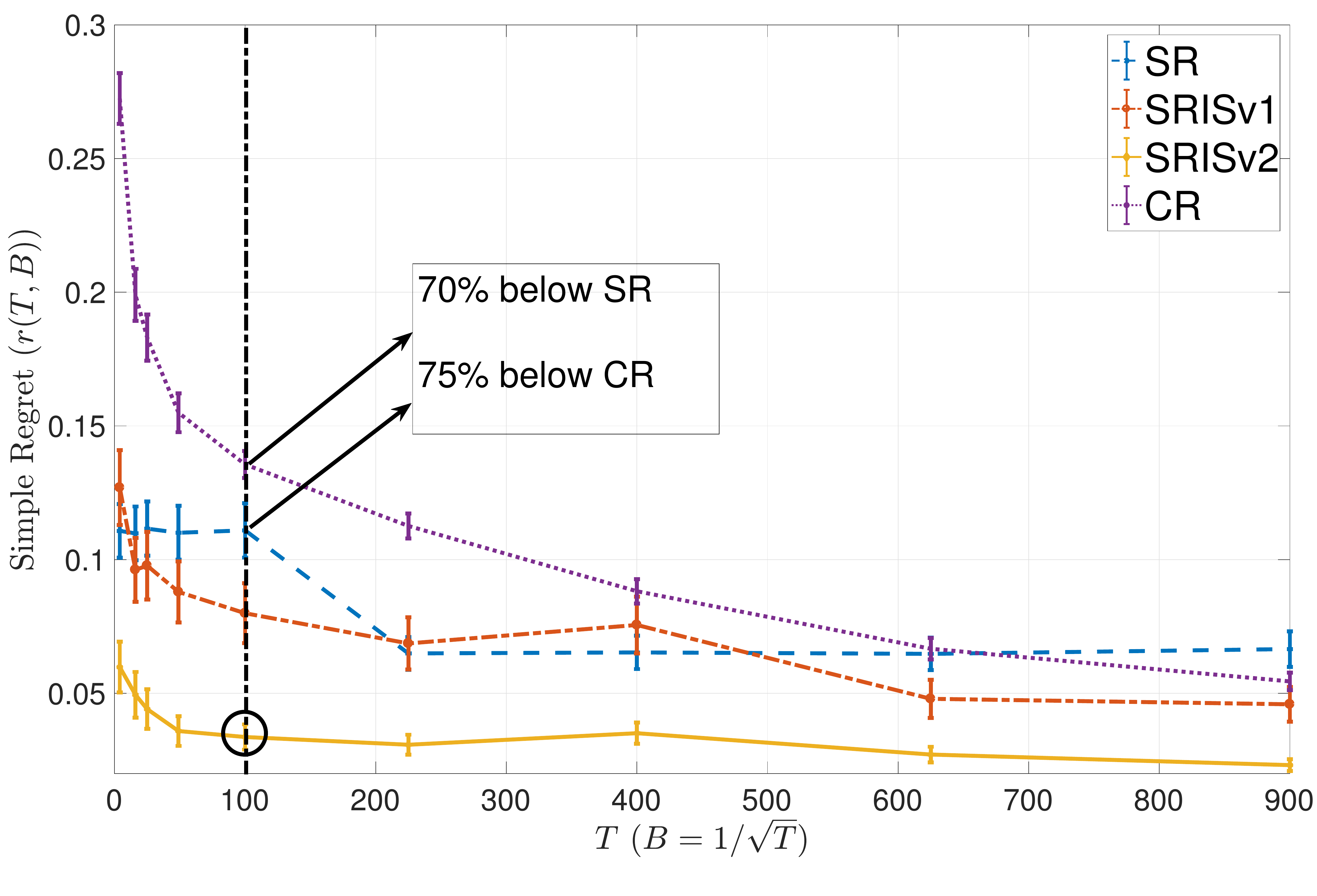}\label{fig:cgraph3}}
	\caption{ \small Performance of various algorithms on the cytometry data under different scenarios. The results are averaged over the course of $500$ independent experiments. The total sample budget $T$ is plotted on the $x$-axis. The budget for all arms other than arm $0$ is constrained to be less than $\sqrt{T}$. Here $K = 15$. The performance improvement is especially evident in the low sample regime. For, example in (b) SRISv2 provides more than 65\% improvement over CR and SR. It is significant in biological data where number of samples is generally small. SR does not use information leakage while the static clipper in CR cannot adapt to high divergences like in (b).}
	\label{fig:cgraph}
\end{figure*} 

Parametric linear models have been popularly used for causal inference on this data-set~\cite{meinshausen2016methods,cho2016reconstructing}. We fit a GLM gamma model~\cite{hardin2007generalized} between the activation of each node and its parents in Fig.~\ref{fig:cgraph1} using the observational data. In Section~\ref{sec:morecyto} (in the appendix) we provide further details showing that the sampled distributions in the fitted model are extremely close to the empirical distributions from the data. The soft interventions signifying the arms are generated by changing the distribution of a source node \textit{pkc} in the GLM. The objective is to identify the intervention that yields the highest output at the target node \textit{erk}\footnote{The activations of the node \textit{erk} have been scaled so that the mean is less than one. Note that the marginal distribution still has an exponential tail, and thus does not strictly adhere to our boundedness assumption on the target variable. However, the experiments suggest that our algorithms still perform extremely well.} We provide empirical results for two sets of interventions at the source node. Both these experiments have been performed with $15$ arms each representing different distributions at \textit{pkc}.

\textbf{Budget Restriction}: The experiments are performed in the budget setting \textbf{S1}, where all arms \textit{except} arm $0$ are deemed to be \textit{difficult}. We plot our results as a function of the total samples $T$, while the fractional budget of the \textit{difficult} arms ($B$) is set to $1/\sqrt{T}$. Therefore, we have $\sum_{k \neq 0} T_k \leq \sqrt{T}$. This essentially belongs to the case when there is a lot of data that can be acquired for a default arm while any new change requires significant cost in acquiring samples.

{\bf  Competing Algorithms: } We test our algorithms on different problem parameters and compare with related prior work~\cite{audibert2010best, lattimore2016causal}. The algorithms compared are (i) {\it SRISv1:}  Algorithm~\ref{alg:pickbest} introduced in Section~\ref{sec:algo}. The divergences, $D_{f_1}(P_i || P_j)$ are estimated from sampled data using techniques from~\cite{perez2008kullback}; (ii){\it SRISv2:}  Algorithm~\ref{alg:pickbest2} as detailed in Section~\ref{sec:algo}; (iii) {\it SR:} Successive Rejects Algorithm from~\cite{audibert2010best} adapted to the budget setting. The division of the total budget $T$ into $K-1$ phases is identical, while the individual arm budgets are decided in each phase according to the budget restrictions; (iv) {\it CR:} Algorithm 2 from~\cite{lattimore2016causal}. The optimization problem for calculating the mixture parameter $\pmb{\eta}$ is not efficiently solvable for general distributions and budget settings. Therefore, the mixture proportions are set by Algorithm~\ref{alg:allocate}.

In these experiments, the budget restrictions imply that arm $0$ can be pulled much more than the other arms. Intuitively the divergences of the arms from arm $0$ as well as the gap $\Delta$ defines the hardness of identification. Fig.~\ref{fig:cgraph2} represents a difficult scenario where the divergences $M_{k0} > 400$ for many arms (large divergences imply low information leakage) and $\Delta=0.01$ (small $\Delta$ increases hardness). In Fig.~\ref{fig:cgraph3} (easier scenario) the divergences $M_{k0} <20$ for most arms while the gap is same as before. We see that SRISv2 outperforms all the other algorithms by a large margin, especially in the low sample regime.

\subsection{Interpretability of Inception Deep Network}
\label{sec:inception}
In this section we use our algorithm for \textit{model interpretation}
of the pre-trained Inception-v3 network~\cite{szegedy2015going} for
classifying images.  \textit{Model Interpretation} essentially
addresses: { \it 'why does a learning model classify in a certain
  way?'}, which is an important question for complicated
models like deep nets~\cite{ribeiro2016should}. 

When an RGB image is fed to Inception, it produces an ordered sequence
of $1000$ labels (e.g 'drums', 'sunglasses') and generally the
top-$10$ labels are an accurate description of the objects in the
image. To address interpretability, we segment the image into a number
of \textit{superpixels}/segments (using segmentation algorithms like
SLIC~\cite{achanta2010slic}) and infer which superpixels encourage the
neural net to output a certain label (henceforth referred to as {\em
  label-I}; e.g 'drum') in top-$k$ (e.g. $k=10$), and to what extent.

\begin{figure}
	\centering
	\subfloat[][{\small Original Image }]{\includegraphics[width = 0.45\linewidth]{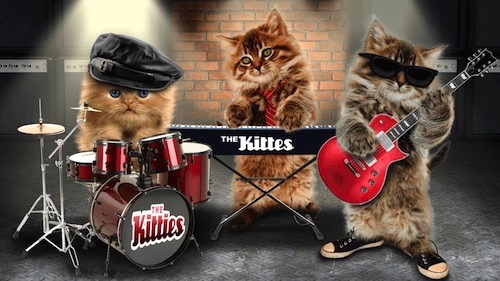}\label{fig:cats1}} \hfill
	\subfloat[][{\small Interpretation- \textit{Drums}}]{\includegraphics[width = 0.45\linewidth]{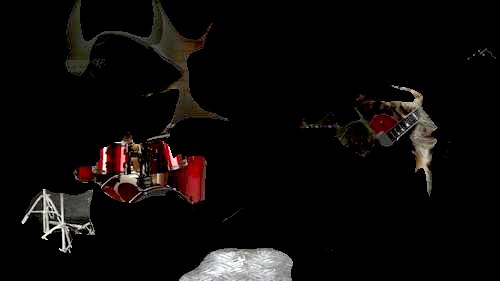}\label{fig:cats2}} \\
	\subfloat[][{\small Interpretation- \textit{TigerCat}}]{\includegraphics[width = 0.45\linewidth]{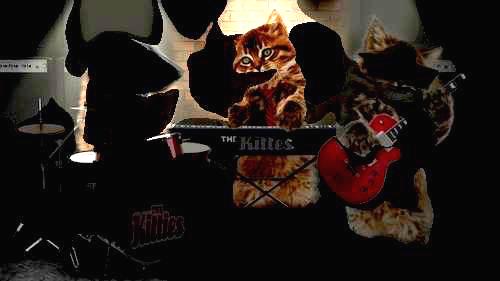}\label{fig:cats3}} \hfill
	\subfloat[][{\small Interpretation- \textit{Sunglasses}}]{\includegraphics[width = 0.45\linewidth]{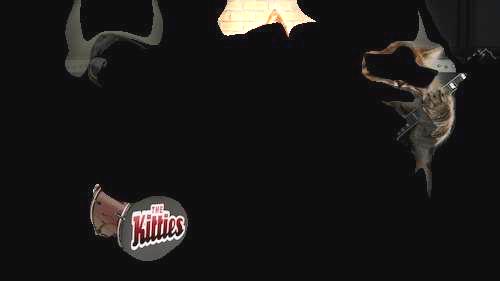}\label{fig:cats4}}
	\caption{\small Interpretation for different top labels for
		the image in (a); Image courtesy \cite{cats-image}. The best
		mixture distribution generates an image where the
		highlighted superpixels are most indicative of a label.
		For example, in (b) we see the
		drums highlighted, while in (d) (a different mixture
		distribution), sunglasses are in the
		focus.} 
	\label{fig:cats}
\end{figure}

Given a \textit{mixture distribution} over the superpixels of an image
(Figure~\ref{fig:cats1}), a few superpixels are randomly sampled
  from the distribution with replacement. Then a new image is
  generated where all other superpixels of the original image are
  blurred out except the ones selected. This image is then fed to
  Inception, and it is observed whether \textit{label-I} appears
  within the top-$k$ labels. A {\em
  mixture distribution is said to be a good interpretation} for
\textit{label-I} if there is a high probability that {\em label-I} appears for an
image generated by this mixture distribution.
To empirically test the goodness of a mixture distribution, we would
generate (using this mixture distribution) a number of random images,
and determine the fraction of images for which {\em label-I} appears;
a large fraction indicates that the mixture
distribution is a good interpretation of {\em label-I}.

% repeatedly appears for a significant
% number of (random) images generated under this mixture distribution.

% if {\em label-I} appears for a significant number of
% images generated by 

% If
% \textit{label I} appears as a top-$k$ label in most images sampled as
% above from a particular mixture distribution, then we say that this
% mixture is a good interpretation for \textit{label I}. 

% For, example if 'drums' appears as one of the top labels for a mixture
% distribution over the superpixels of the image in
% Fig.~\ref{fig:cats1}, then that particular mixture is an
% interpretation for 'drums' in this image.

Motivated by the above discussion, we generate a large number ($3200$)
of mixture distributions, with the goal of finding the one that best
interprets {\em label-I}. To highlight the applicability of our
algorithm, we allow images to be generated for only $200$ of these
mixture distributions; in other words, most of the mixture
distributions cannot be directly tested. Nevertheless, we determine
the best from among the entire collection of mixture
distributions (learning \textit{counterfactuals}).

Specifically in our experiments, we consider the image in
Figure~\ref{fig:cats1}, partition it
into $43$ superpixels, and generate images from mixture distributions by sampling
$5$ superpixels (with replacement). We generate $3000$ arm
distributions which lie in the $43$-dimensional simplex but have
\textit{sparse} support (sparsity of $10$ in our examples). The
support of these distributions are randomly generated by techniques
like markov random walk (encourages contiguity), random choice, etc. as
detailed in Section~\ref{sec:moreincep} in the appendix. However, we
are \textit{only} allowed to sample using a different set of $200$ arms that are dense
distributions chosen uniformly at random from the $43$-dimensional
simplex. The distributions are generated in a manner which is
\textit{completely agnostic} to the image content. The total
sample budget ($T$) is $2500$.

Figure~\ref{fig:cats} shows images in which the segments are weighted
in proportion to the optimal distribution (obtained by SRISv2) for the
interpretation of three different labels. This showcases the true
counterfactual power of the algorithm, as the set of arms that can be
optimized over are \textit{disjoint} from the arms that can be sampled
from. Moreover the sample budget is \textit{less} than the number of
arms. This is an extreme special case of budget setting \textbf{S2}.
We see that our algorithm can generate meaningful interpretations for
all the labels with relatively less number of runs of Inception. Even sampling $10$ times
from each of the arms to be optimized over would require $30,000$ runs
of Inception for a single image and label, while we use only $2500$
runs by leveraging information leakage.

\FloatBarrier
\subsection{Synthetic Experiments}
\label{sec:simulationsyn}
In this section, we empirically validate the performance our algorithm through synthetic experiments. We carefully design our simulation setting which is simple, but at the same time sufficient to capture the various tradeoffs involved in the problem. An important point to note is that our algorithm is not aware of the actual effect of the changes on the target (gaps between expectations) but it only knows the divergence among the candidate soft interventions. Sometimes, a change with large divergence from an existing one may not maximize the effect we are looking for. Conversely, smaller divergence may sometimes lead you closer to the optimal. We demonstrate that our algorithm performs well in all the experiments, as compared to previous works~\cite{audibert2010best, lattimore2016causal}. 

{\bf \noindent Experimental Setup: } We set up our experiments according to the simple causal graph in Figure~\ref{fig:setup}. $V$ is assumed to be a random variable taking values in $\{0,1,2,\cdots, m-1\}$. The various arms $\mathrm{P}_0(V), \mathrm{P}_1(V),...\mathrm{P}_{K-1}(V)$ are discrete distributions with support $[m]$. We will vary $m$ and $K$ over the course of our experiments. 

\begin{figure}
	\centering
	\includegraphics[width=0.5\textwidth]{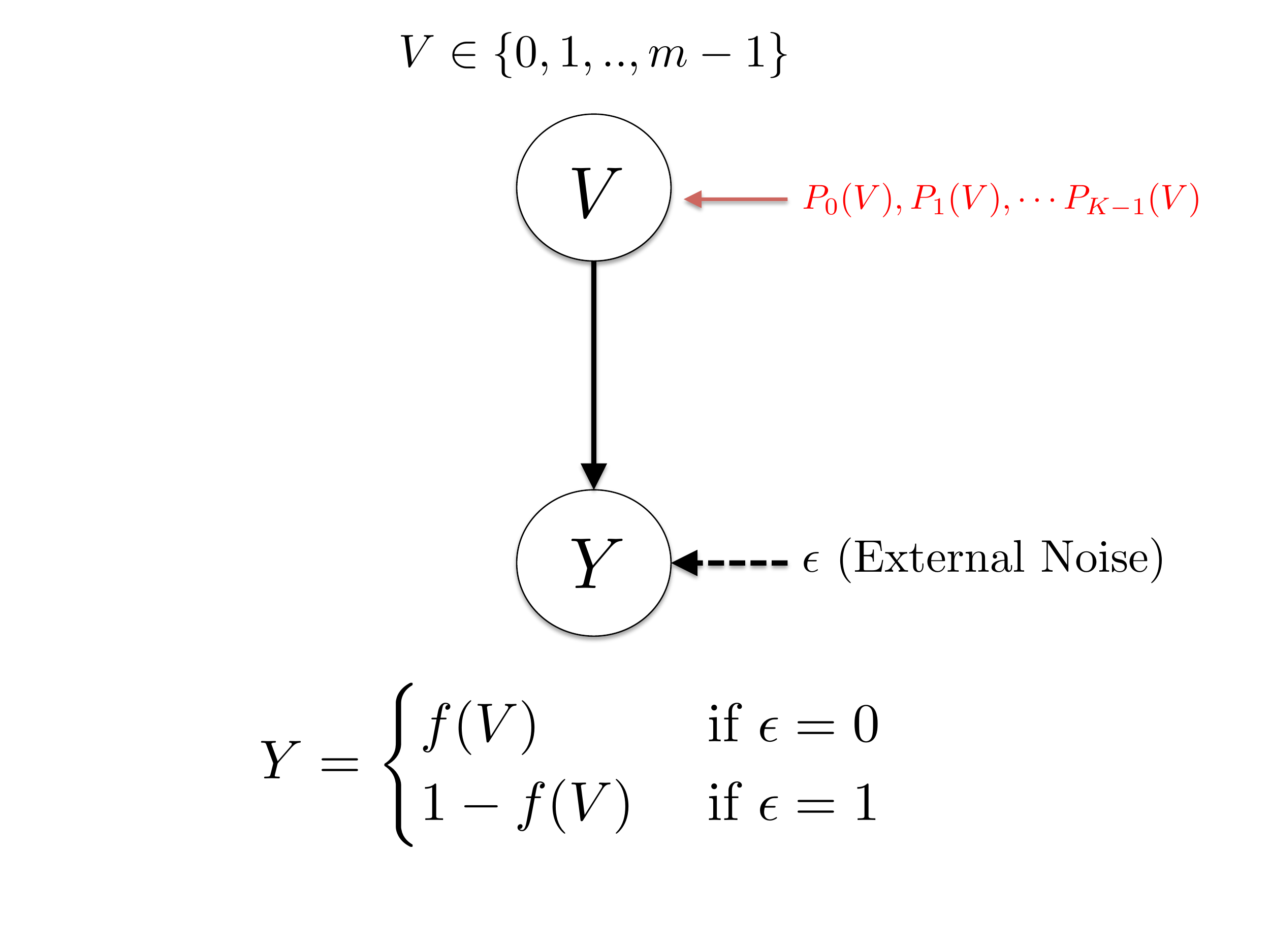}
	\caption{Causal Graph for Experimental Setup}\label{fig:setup}
\end{figure}

$Y$ is assumed to be a function of $V$ and some random noise $\epsilon$ which is external to the system. In our experiments, we set the function as follows:
\[Y =  \begin{cases} 
f(V) & \text{ if } \epsilon = 0\\
1 - f(V) & \text{ if } \epsilon = 1
\end{cases}
\]
where $f : [m] \rightarrow \{0,1 \}$ is an arbitrary function. We set $\PP(\epsilon = 1) = 0.01$ in all our experiments. The discrete candidate distributions are modified to explore various tradeoffs between the gaps and the effective standard deviation parameters.

\noindent \textbf{Budget Restriction}: The experiments are performed in the budget setting \textbf{S1}, where all arms \textit{except} arm $0$ are deemed to be \textit{difficult}. We plot our results as a function of the total samples $T$, while the fractional budget of the \textit{difficult} arms ($B$) is set to $1/\sqrt{T}$. Therefore, we have $\sum_{k \neq 0} T_k \leq \sqrt{T}$. This essentially belongs to the case when there is a lot of data that can be acquired for a default arm while any new change requires significant cost in acquiring samples.

%{\bf \noindent Competing Algorithms: } We test our algorithms on different problem parameters and compare with related prior work~\cite{audibert2010best, lattimore2016causal}. We briefly describe the algorithms compared:
%\begin{enumerate}
%	\item {\bf SRISv1:} This is Algorithm~\ref{alg:pickbest} introduced in Section~\ref{sec:algo}. 
%	\item {\bf SRISv2:} This is Algorithm~\ref{alg:pickbest2} which is a simple modification of SRISv1, as detailed in Section~\ref{sec:algo}. 
%	%, except that the estimators in line 8, use all the samples from the past phases as well, clipped according to the criterion of the current phase. We conjecture that SRISv2 has better theoretical guarantees. We discuss this possible future direction in Section~\ref{sec:conc}. 
%	\item{\bf SR:} This is the best arm identification algorithm from~\cite{audibert2010best} adapted to the budget setting. The division of the total budget $T$ into $K-1$ phases is identical, while the individual arm budgets are decided in each phase according to the budget restrictions.
%	\item{\bf CR:} This is Algorithm 2 from~\cite{lattimore2016causal}. The optimization problem for calculating the mixture parameter $\pmb{\eta}$ has been modified to account for the budget restrictions. This is a natural modification to the algorithm. 
%\end{enumerate}

{\bf \noindent Experiments: } In our experiments, we choose $f$ to be the parity function, when $V \in [m]$, is represented in base $2$. Note that arm $0$ is the arm that can be sampled $O(T)$ times while the rest of the arms can only be sampled $O(\sqrt{T})$ times due to the above budget constraints. So, the divergence of the arm $0$ from other arms  is crucial alongside the gaps. We perform our experiments in different regimes that get progressively easier. In these experiments, we function in various regimes of the divergences between the other arms and arm $0$, and the gaps from the optimal arm in terms of target value. When there is no information leakage, the samples are divided among the $K$ arms. So, the loss in having multiple arms can be expressed as a scaling $\sqrt{K}$ in standard deviation. Recall the $\log$ divergence measure $M_{k0}$ which is a measure of information leakage from arm $0$ to another arm $k$. Therefore, in the following, when we say high divergence from arm $0$, it means that $M_{k0}/\sqrt{K}$ is high for most arms $k \neq 0$. 

{\it High Divergence and Low  $\Delta$: } This is the hardest of all settings. Here, we set $m = 20$ and $K = 30$.  Here, we have $M_{k0}$ to be pretty high for all the arms $k \neq 0$. This means that the arm $0$, which can be pulled $O(T)$ times provides highly noisy estimates for other arms. We have $M_{k0}/\sqrt{K} \sim 30$ for most arms. Moreover, the minimum gap from the best arm $\Delta = 0.04$, which is pretty small. This implies that it is harder to distinguish the best arm. 

The results are demonstrated in Figure~\ref{fig:30arms}. Figure~\ref{fig:30arms1} displays the simple regret. We see that both SRISv1 and SRISv2 outperform the others by a large extent, in this hard setting, even when the number of samples are very low. In Figure~\ref{fig:30arms2} we plot the probability of error in exactly identifying the best arm. We see that none of the algorithms successfully identify the best arm, in the small sample regime, as the gap $\Delta$ is very low. However, our algorithms quickly zero in on arms that are \textit{almost} as good as the optimal, and therefore the simple regret is well-behaved. Our algorithm performs this well even when the divergences are big, because it is able to reject the arms that have high $\Delta_i$ in the early phases, very effectively.  
\begin{figure}
	\centering
	\subfloat[][Simple Regret]{\includegraphics[width = 0.5\linewidth]{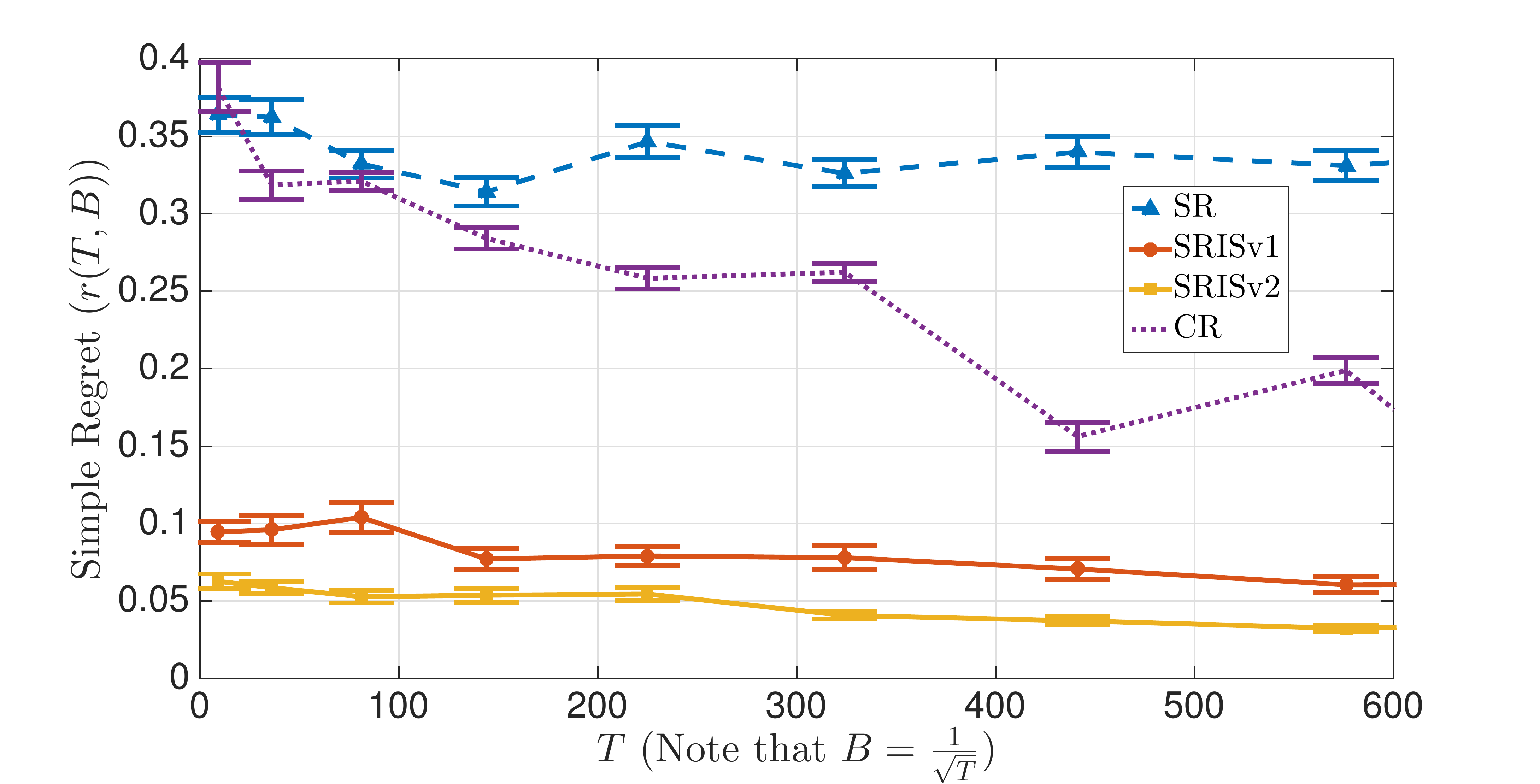}\label{fig:30arms1}}
	\subfloat[][Probability of Error]{\includegraphics[width = 0.5\linewidth]{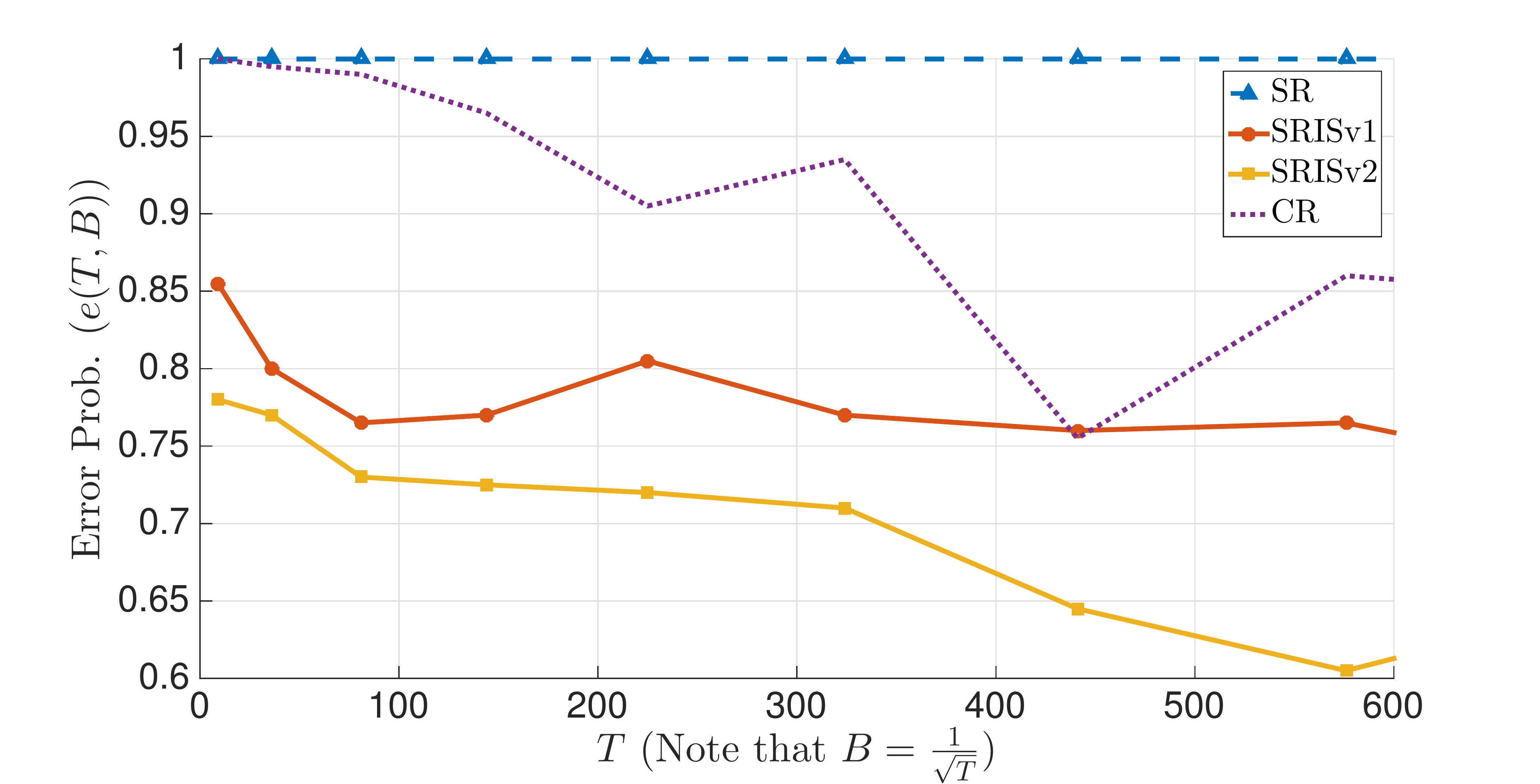}\label{fig:30arms2}}
	\caption{Performance of various algorithms when divergences $M_{k0}$'s are high and mininum gap $\Delta$ is small. The results are averaged over the course of $500$ independent experiments. The total sample budget $T$ is plotted on the $x$-axis. Note that budget for all arms other than arm $0$ is constrained to be less than $\sqrt{T}$. Here $K = 30$.}
	\label{fig:30arms}
\end{figure}

{\it High Divergence and High $\Delta$: } This is easier than the previous setting. Here, we set $m = 10$ and $K = 20$. Here, we have $M_{k0}$ to be very high for all the arms $k \neq 0$. Thus arm $0$ provides very noisy estimates on other arms. We have $M_{k0}/\sqrt{K} \gg 50$ for many arms. However, the minimum gap from the best arm $\Delta = 0.15$, which is not too small. This implies that it might be easier to distinguish the best arm. 

The results are demonstrated in Figure~\ref{fig:20arms}. Figure~\ref{fig:20arms1} displays the simple regret. We see that in the small sample regime SRISv1 and SRISv2 outperform the others by a large extent. In the high sample regime, SRISv2 is still the best, while SR and SRISv1 are close behind. In Figure~\ref{fig:20arms2} we plot the probability of error in exactly identifying the best arm. We see that SRISv2 performance very well in identifying the best arm even though arm $0$ gives highly noise estimates. It is interesting to note that CR does not perform well. This can be attributed to the non-adaptive clipper in CR, that incurs a significant bias because arm $0$ has high-divergences from most of the other arms. 
\begin{figure}
	\centering
	\subfloat[][Simple Regret]{\includegraphics[width = 0.5\linewidth]{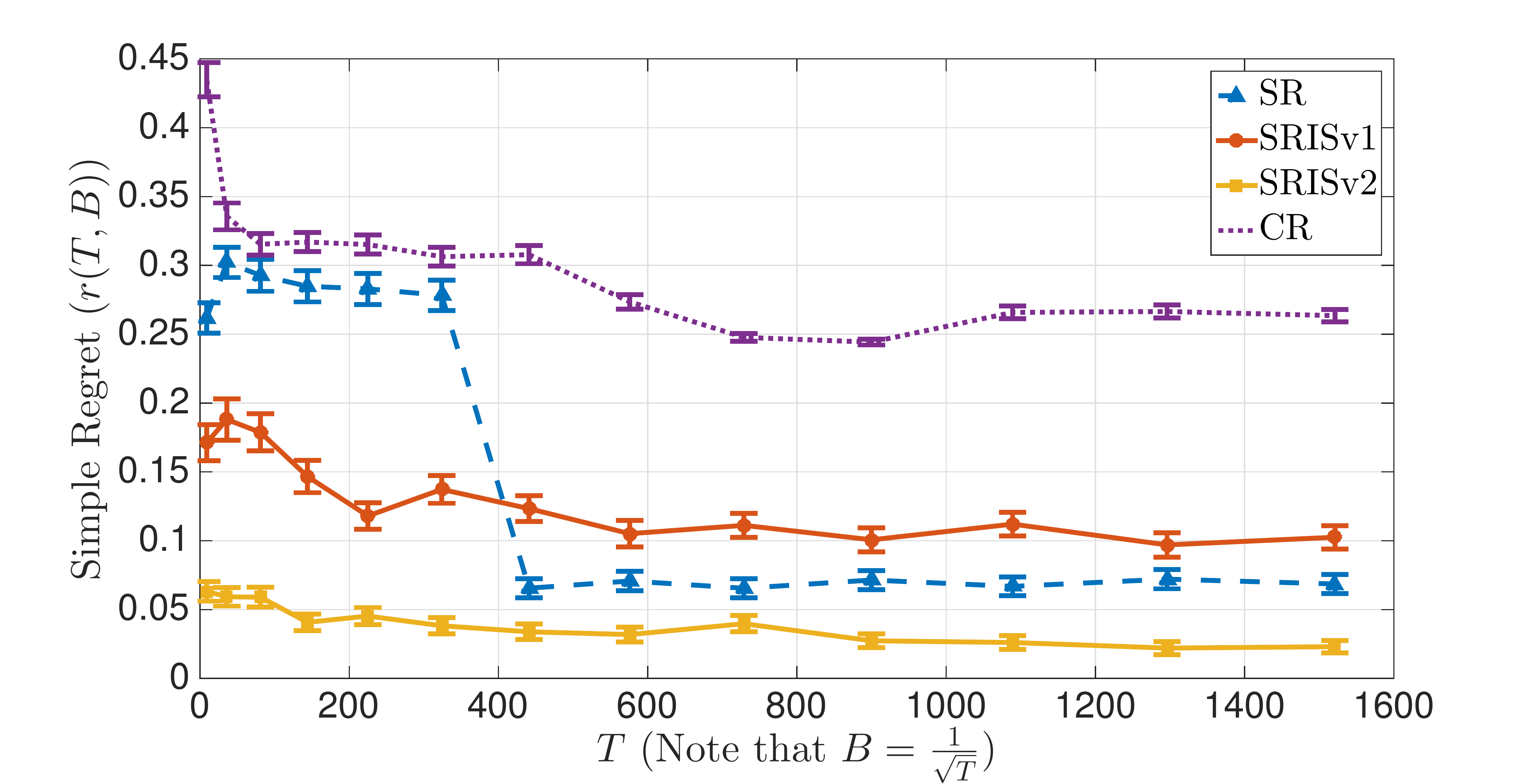}\label{fig:20arms1}}
	\subfloat[][Probability of Error]{\includegraphics[width = 0.5\linewidth]{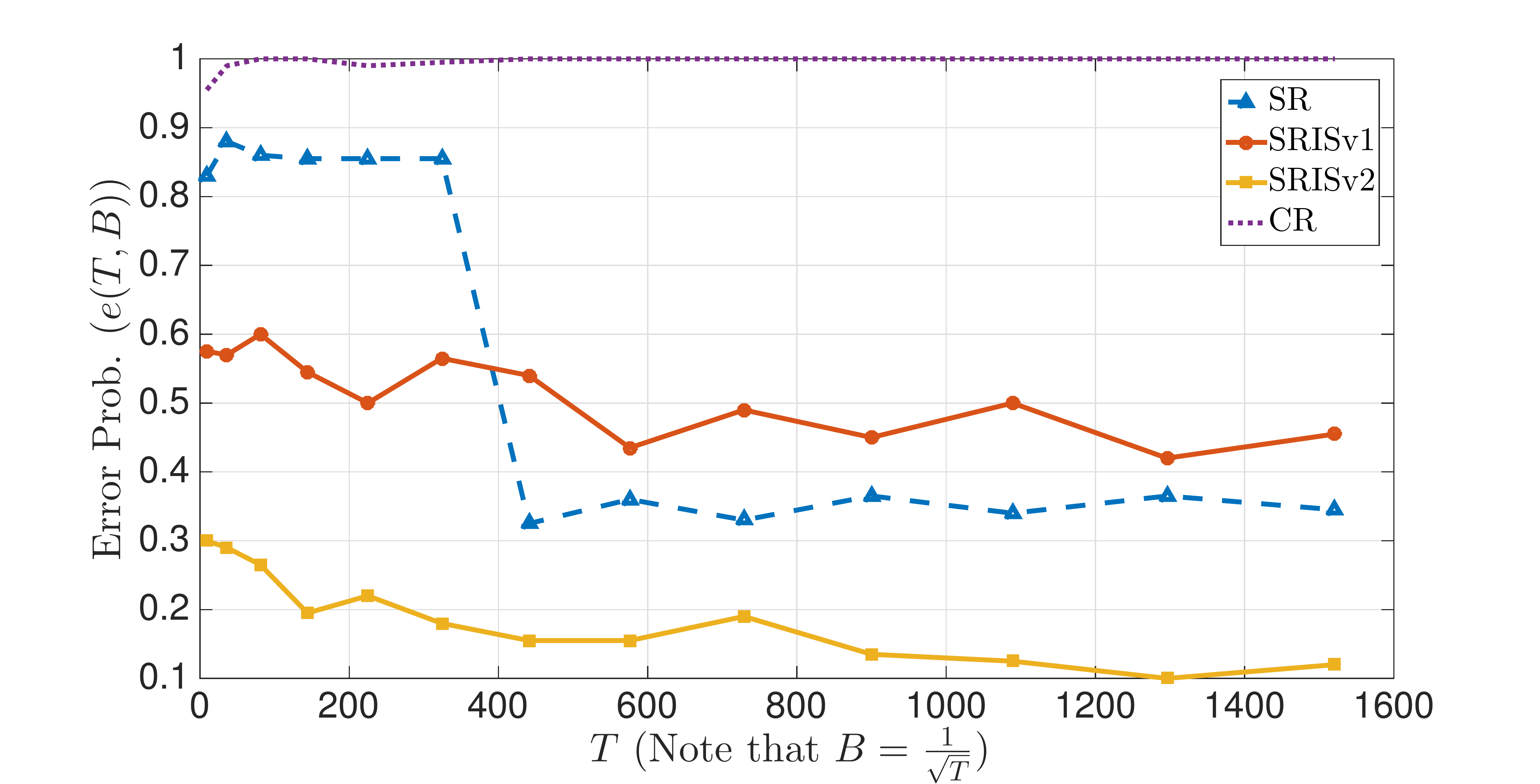}\label{fig:20arms2}}
	\caption{Performance of various algorithms when divergences $M_{k0}$'s are high and min. gap $\Delta$ is not too bad. The results are averaged over the course of $500$ independent experiments. The total sample budget $T$ is plotted on the $x$-axis. Note that budget for all arms other than arm $0$ is constrained to be less than $\sqrt{T}$. Here, $K = 20$. }
	\label{fig:20arms}
\end{figure}

{\it Low Divergence and Low $\Delta$: } This is another moderately hard setting, similar to the previous one. Here, we set $m = 20$ and $K = 30$. Here, we have $M_{k0}$ to be not too high for the arms $k \neq 0$. This means that the arm $0$, which can be pulled $O(T)$ times is moderately good for estimating the other arms. Here, $M_{k0}/\sqrt{K} \leq 10$ for most arms $k$. However, the minimum gap from the best arm $\Delta = 0.04$, which is small. This implies that it might be hard to distinguish the best arm. 

The results are demonstrated in Figure~\ref{fig:30armsN}. Figure~\ref{fig:30armsN1} displays the simple regret. We see that in the small sample regime SRISv1 and SRISv2 outperforms the others by a large extent. In the high sample regime, SRISv2 is still the best, while CR is close behind. In Figure~\ref{fig:30armsN1} we plot the probability of error in exactly identifying the best arm. We see that most of the algorithms have moderately bad probability of error as the gap $\Delta$ is small. However, the algorithms SRISv2 and SRISv1 are quickly able to zero down on arms close to optimal as shown in the simple regret in the small sample regime.  
\begin{figure}
	\centering
	\subfloat[][Simple Regret]{\includegraphics[width = 0.5\linewidth]{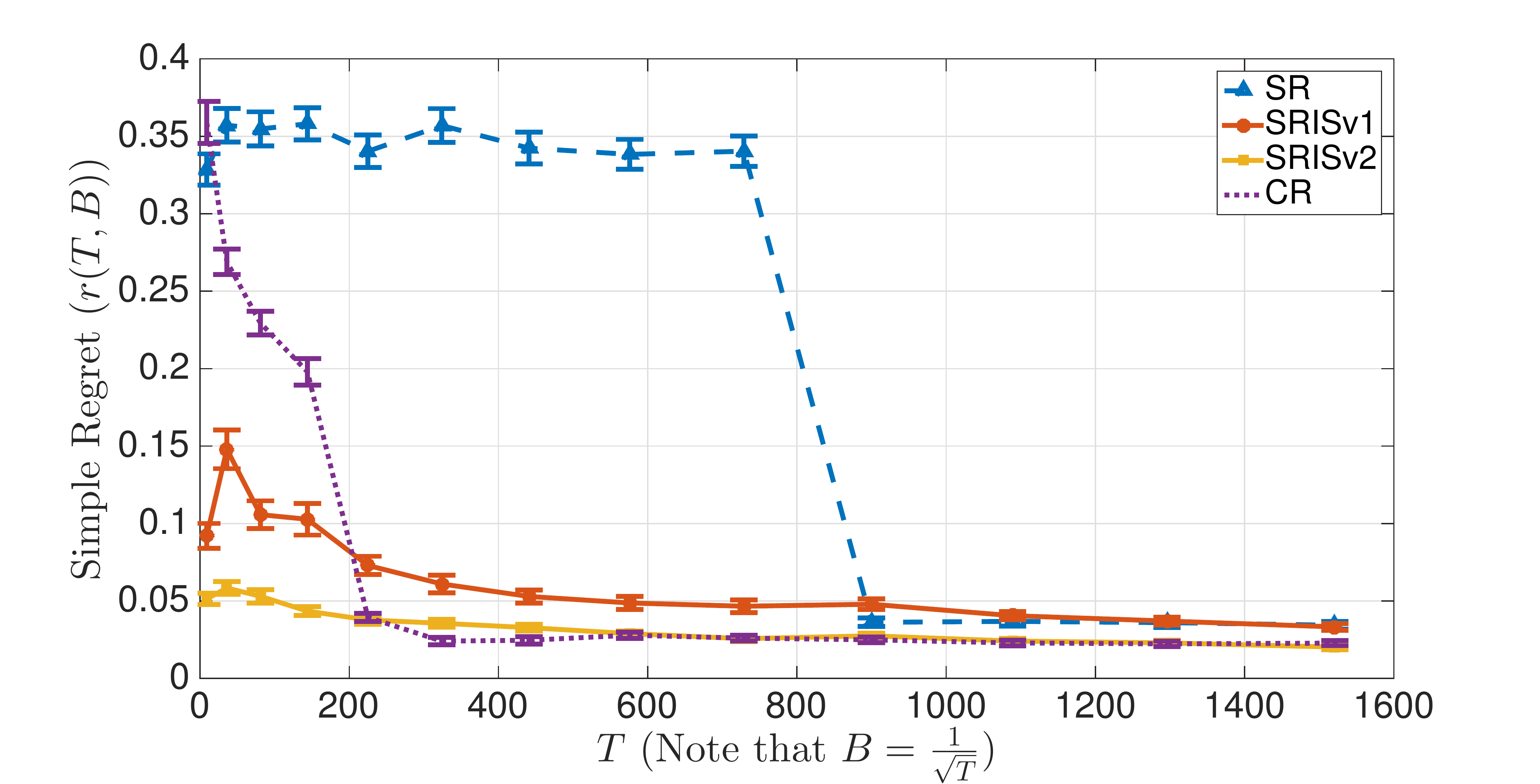}\label{fig:30armsN1}}
	\subfloat[][Probability of Error]{\includegraphics[width = 0.5\linewidth]{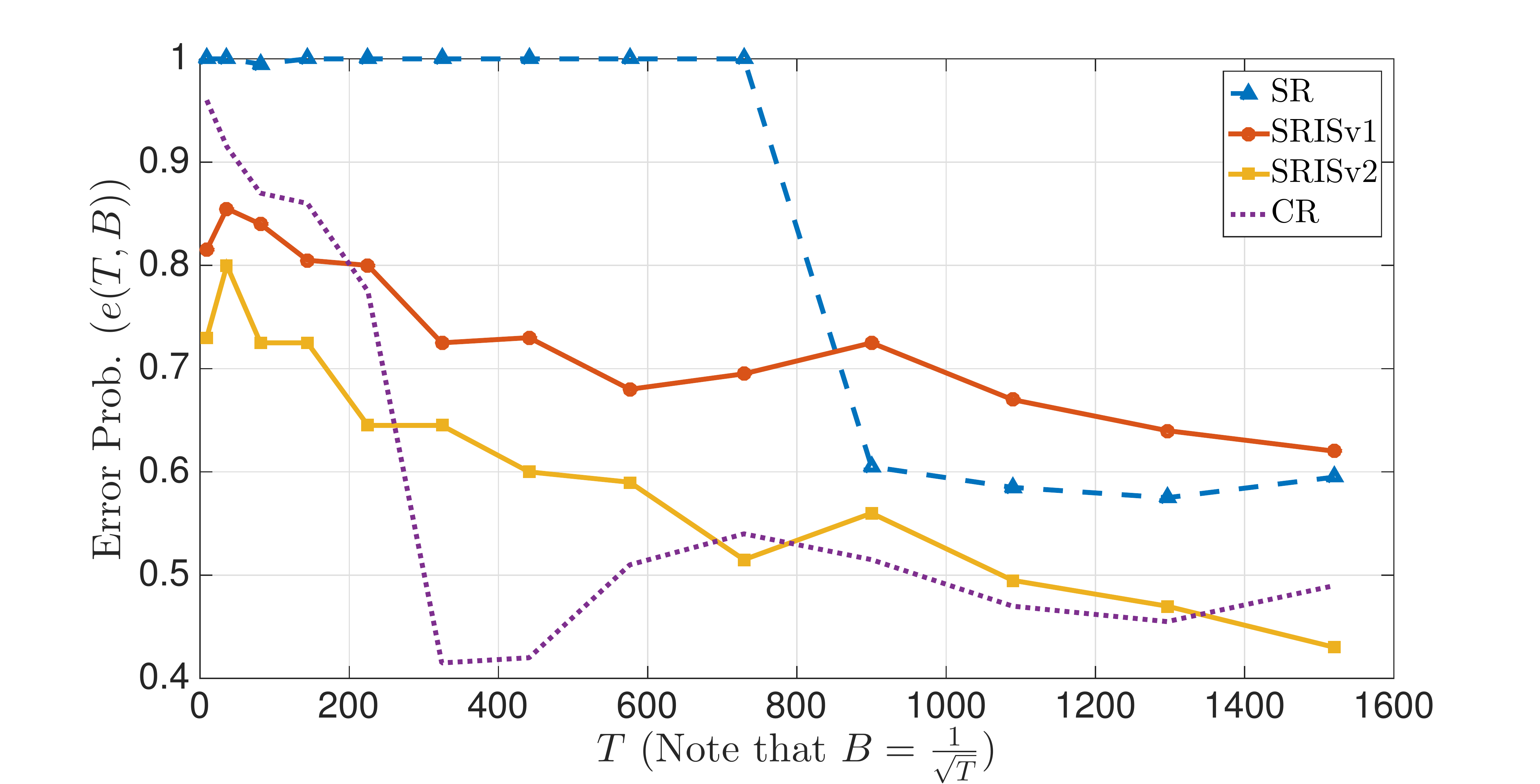}\label{fig:30armsN2}}
	\caption{Performance of various algorithms when divergences $M_{k0}$'s are moderately low and min. gap $\Delta$ is small. The results are averaged over the course of $500$ independent experiments. The total sample budget $T$ is plotted on the $x$-axis. Note that budget for all arms other than arm $0$ is constrained to be less than $\sqrt{T}$. Here, $K = 30$. }
	\label{fig:30armsN}
\end{figure}

{\it Low Divergence and High $\Delta$: } This is the easiest of all settings. Here, we set $m = 10$ and $K = 20$. Here, arms $0$ has $\mathrm{P}_0(V)$ pretty close to the uniform distribution on $[m]$.  Therefore, it is very well-posed for estimating the means of all other arms. In fact we have $M_{k0}/\sqrt{K} < 2$ for many arms. Moreover, the minimum gap from the best arm $\Delta = 0.15$, which is not too small. This implies that it might be very easy to distinguish the best arm. 

The results are demonstrated in Figure~\ref{fig:20armsVE}. Figure~\ref{fig:20armsVE1} displays the simple regret. We see that SRISv2 and CR perform extremely well closely followed by SRISv1. In Figure~\ref{fig:20armsVE2} we plot the probability of error in exactly identifying the best arm. Again SRISv2 and CR have almost zero probability of error and SRISv1 is close behind. This is because $\Delta$ is pretty large. In this example, we observe that all the algorithms that use information leakage are better than SR, because arm $0$ is well-behaved. CR performs almost as well as SRISv2 in this example, as the static clipper is never invoked because almost always the ratios in the importance sampler are well bounded.
\begin{figure}
	\centering
	\subfloat[][Simple Regret]{\includegraphics[width = 0.5\linewidth]{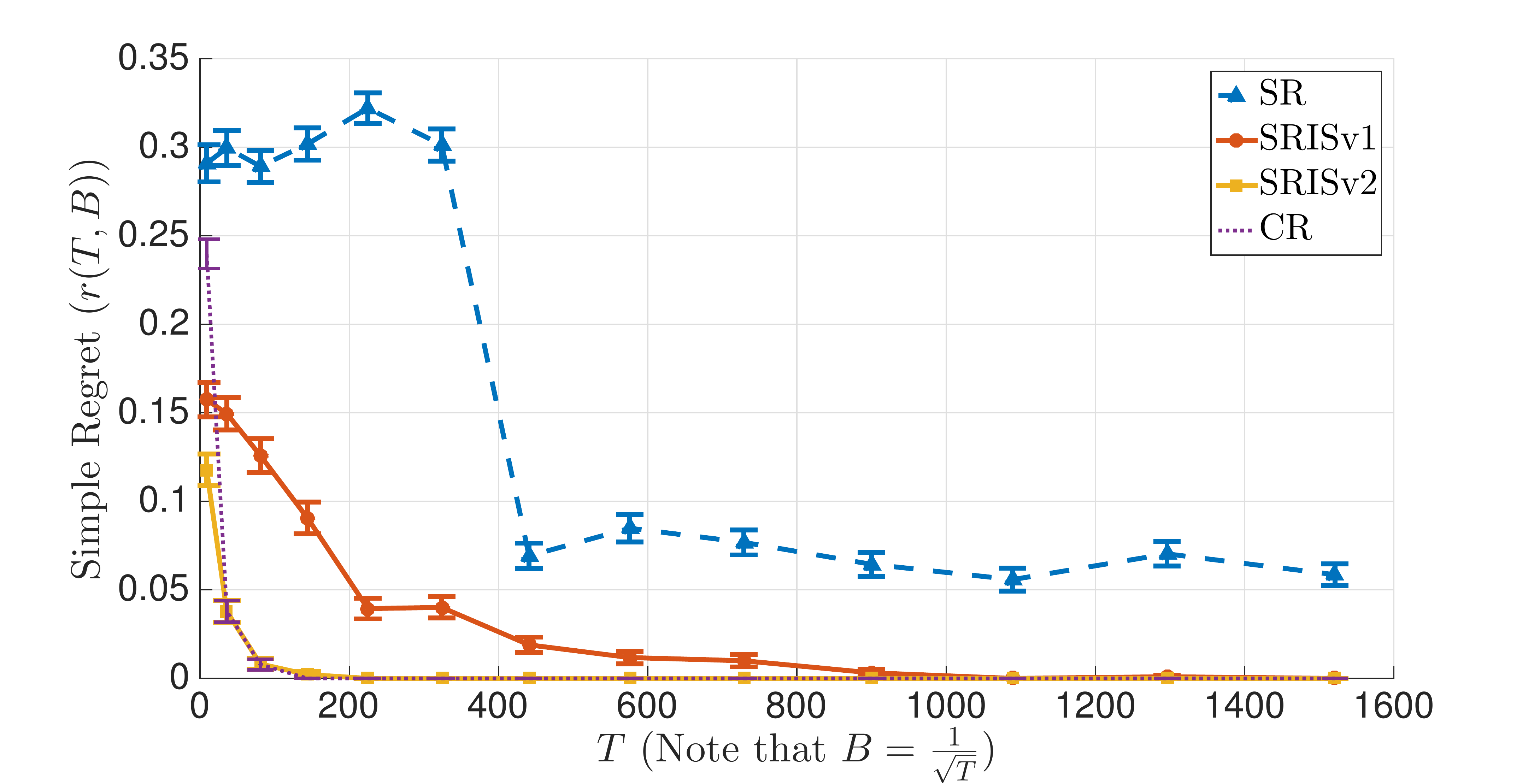}\label{fig:20armsVE1}}
	\subfloat[][Probability of Error]{\includegraphics[width = 0.5\linewidth]{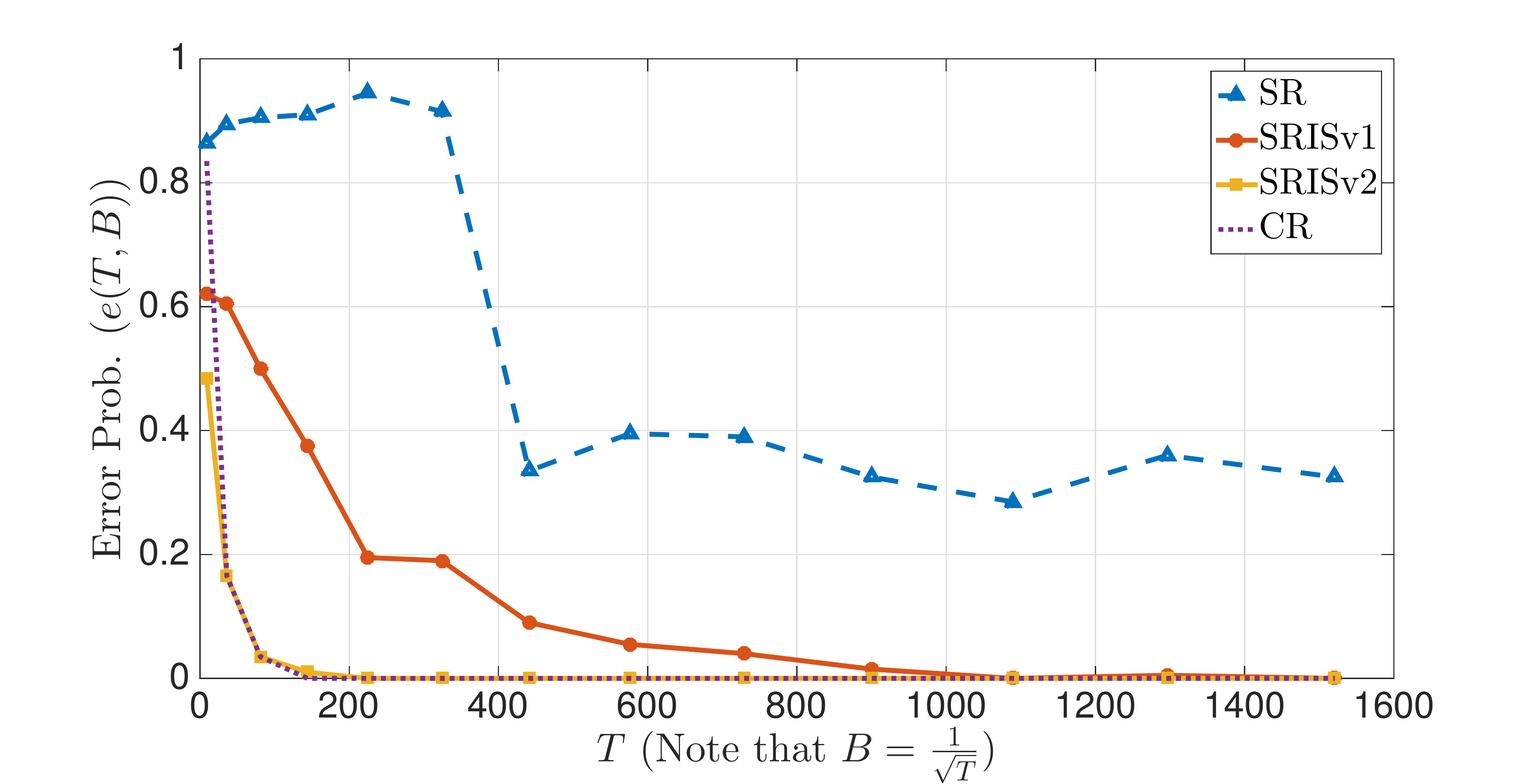}\label{fig:20armsVE2}}
	\caption{Performance of various algorithms when divergences $M_{k0}$'s are low and min. gap $\Delta$ is not too bad. The results are averaged over the course of $500$ independent experiments. The total sample budget $T$ is plotted on the $x$-axis. Note that budget for all arms other than arm $0$ is constrained to be less than $\sqrt{T}$. Here $K = 20$. }
	\label{fig:20armsVE}
\end{figure}

In conclusion, it should be noted that our algorithms perform well in all the different settings, because they are able to adapt to the problem parameters (similar to ~\cite{audibert2010best}) and at the same time leverage the information leakage (similar to ~\cite{lattimore2016causal}).

%\section{Conclusion}
\FloatBarrier
%\section{Conclusion} 
%In this paper we provide the first gap dependent error and simple regret bounds for identifying the best soft intervention at a \textit{source} node i.e the one that maximizes the expected value of a downstream \textit{target} node. These bounds are a generalization of the classical best arm identification bounds~\cite{audibert2010best}, when there is information leakage among the arms. We empirically validate the performance of our algorithm on the flow cytometry data-set. We also use our algorithm for \textit{model interpretation} of Inception-v3 deep net. In the appendix (Section~\ref{sec:detdisc}), we include a detailed discussion on future directions and open problems. %like problem dependent lower bounds, better guarantees for SRISv2 and a general learning framework combining inference and optimization from a causal perspective.
\section{Discussion and Future Work}
\label{sec:detdisc}
In this paper, we analyze the problem of identifying the best arm at a node $V$ in a causal graph (various known conditionals $\mathrm{P}_k(V|pa(V)))$ in terms of its effect on a \textit{target} variable $Y$ further downstream, possibly in a less understood portion of the larger causal network. We characterize the hardness of this problem in terms of the relative divergences of the various conditionals that are being tested and the gaps between the expected value of the target under the various arms. We provide the first problem dependent simple regret and error bounds for this problem, that is a natural generalization of ~\cite{audibert2010best}, but with information leakage between arms. We provide an efficient successive rejects style algorithm that achieves these guarantees, by leveraging the leakage of information, through carefully designed clipped importance samplers. Further, we introduce a new $f$-divergence measure that may be relevant for analyzing importance sampling estimators in the causal context. This may be of independent interest. We believe that our work paves the way for various interesting problems with significant practical implications. In the following, we state a few open questions in this regard:

{\bf Tighter guarantees on SRISv2:} In Section~\ref{sec:simulation}, we have observed that a slightly modified version of our algorithm SRISv2 performs the best among all the competing algorithms including SRISv1. The only difference of SRISv2 from Algorithm~\ref{alg:pickbest}, is that in line 6 the estimators used in a phase also uses samples from past phases, but clipped according to the criterion in the current phase. We believe that this algorithm has tighter error and simple regret guarantees. We conjecture that at least one of the $\log(1/\Delta_k)$, in the definition of $\bar{H}$  in (\ref{Hbar}) can be eliminated, thus leading to better guarantees. 

{\bf Estimating the marginals of the parents:} In Algorithm~\ref{alg:pickbest}, either the marginals of the parents of $V$, that is $\mathrm{P}(pa(V))$ is required in order to calculate  the $f$-divergences in Definition~\ref{def:mij}, or prior data involving the parents is required to estimate the  $f$-divergences directly from data. However, we believe it is possible to model this estimation, directly into the online framework, as data about the marginals of the parents are available through the samples in all the arms, as these marginals remain unchanged. 

{\bf Problem Dependent Lower Bound: } In~\cite{lattimore2016causal}, a problem independent lower bound of $O(1/\sqrt{T})$ has been provided for a special causal graph. However, the problem parameter dependent lower bound like that of~\cite{audibert2010best} still remains an open problem. We believe that the lower bound will depend on the divergences between the distributions and the gaps between the rewards of the arms, similar to the term in (\ref{Hbar}). 

{\bf General Learning Framework:} Our work paves the way for a more general setting for learning counterfactual effects. Importance sampling is a fairly general tool and can be ideally applied at any set of nodes of a causal graph. So, in principle it is possible to study the effect of a change at $V$ on a target $Y$, by using importance sampling between the changed marginal distributions at an intermediate cut ${\cal S}$ that blocks every path from $V$ to $Y$. In fact, this is explored in a non-bandit context in \cite{bottou2013counterfactual}. An important question is: What is the most suitable cut to be used? \cite{lattimore2016causal} uses the cut closest to $Y$, i.e. immediate parents of $Y$. However, the marginals of the cut under different changes need to be estimated this 'far' from the source closer to the target. Therefore, there is a tradeoff that involves a delicate balance between the estimation errors of the changes at an intermediate cut between $V$ and $Y$, and the reduction in importance sampling divergences between cut distributions closer to the target $Y$. We believe understanding this is quite important to fully exploit partial/full knowledge about causal graph structure to answer causal strength questions from data observed.

\bibliography{ImpBandits}

\begin{thebibliography}{10}

\bibitem{abernethy2015low}
Jacob Abernethy, Yiling Chen, Chien-Ju Ho, and Bo~Waggoner.
\newblock Low-cost learning via active data procurement.
\newblock In {\em Proceedings of the Sixteenth ACM Conference on Economics and
  Computation}, pages 619--636. ACM, 2015.

\bibitem{achanta2010slic}
Radhakrishna Achanta, Appu Shaji, Kevin Smith, Aurelien Lucchi, Pascal Fua, and
  Sabine S{\"u}sstrunk.
\newblock Slic superpixels.
\newblock Technical report, 2010.

\bibitem{audibert2010best}
Jean-Yves Audibert and S{\'e}bastien Bubeck.
\newblock Best arm identification in multi-armed bandits.
\newblock In {\em COLT-23th Conference on Learning Theory-2010}, pages 13--p,
  2010.

\bibitem{bennett1962probability}
George Bennett.
\newblock Probability inequalities for the sum of independent random variables.
\newblock {\em Journal of the American Statistical Association},
  57(297):33--45, 1962.

\bibitem{blalock1985causal}
Hubert~M Blalock.
\newblock {\em Causal models in the social sciences}.
\newblock Transaction Publishers, 1985.

\bibitem{bonneau2007predictive}
Richard Bonneau, Marc~T Facciotti, David~J Reiss, Amy~K Schmid, Min Pan,
  Amardeep Kaur, Vesteinn Thorsson, Paul Shannon, Michael~H Johnson,
  J~Christopher Bare, et~al.
\newblock A predictive model for transcriptional control of physiology in a
  free living cell.
\newblock {\em Cell}, 131(7):1354--1365, 2007.

\bibitem{bottou2013counterfactual}
L{\'e}on Bottou, Jonas Peters, Joaquin~Quinonero Candela, Denis~Xavier Charles,
  Max Chickering, Elon Portugaly, Dipankar Ray, Patrice~Y Simard, and
  Ed~Snelson.
\newblock Counterfactual reasoning and learning systems: the example of
  computational advertising.
\newblock {\em Journal of Machine Learning Research}, 14(1):3207--3260, 2013.

\bibitem{carpentier2016tight}
Alexandra Carpentier and Andrea Locatelli.
\newblock Tight (lower) bounds for the fixed budget best arm identification
  bandit problem.
\newblock {\em arXiv preprint arXiv:1605.09004}, 2016.

\bibitem{cats-image}
Image source.
\newblock \url{http://bit.ly/2hviIwP}.
\newblock Accessed: 2017-02-23.

\bibitem{chen2015optimal}
Lijie Chen and Jian Li.
\newblock On the optimal sample complexity for best arm identification.
\newblock {\em arXiv preprint arXiv:1511.03774}, 2015.

\bibitem{cho2016reconstructing}
Hyunghoon Cho, Bonnie Berger, and Jian Peng.
\newblock Reconstructing causal biological networks through active learning.
\newblock {\em PloS one}, 11(3):e0150611, 2016.

\bibitem{Eberhardt2008X}
Frederick Eberhardt.
\newblock Almost optimal intervention sets for causal discovery.
\newblock In {\em Proceedings of 24th Conference in Uncertainty in Artificial
  Intelligence (UAI)}, pages 161--168, 2008.

\bibitem{gabillon2012best}
Victor Gabillon, Mohammad Ghavamzadeh, and Alessandro Lazaric.
\newblock Best arm identification: A unified approach to fixed budget and fixed
  confidence.
\newblock In {\em Advances in Neural Information Processing Systems}, pages
  3212--3220, 2012.

\bibitem{hardin2007generalized}
James~William Hardin, Joseph~M Hilbe, and Joseph Hilbe.
\newblock {\em Generalized linear models and extensions}.
\newblock Stata press, 2007.

\bibitem{Hauser2012b}
Alain Hauser and Peter B\"{u}hlmann.
\newblock Two optimal strategies for active learning of causal networks from
  interventional data.
\newblock In {\em Proceedings of Sixth European Workshop on Probabilistic
  Graphical Models}, 2012.

\bibitem{Hyttinen2013}
Antti Hyttinen, Frederick Eberhardt, and Patrik Hoyer.
\newblock Experiment selection for causal discovery.
\newblock {\em Journal of Machine Learning Research}, 14:3041--3071, 2013.

\bibitem{jamieson2014lil}
Kevin~G Jamieson, Matthew Malloy, Robert~D Nowak, and S{\'e}bastien Bubeck.
\newblock lil'ucb: An optimal exploration algorithm for multi-armed bandits.
\newblock In {\em COLT}, volume~35, pages 423--439, 2014.

\bibitem{joffe2012causal}
Michael Joffe, Manoj Gambhir, Marc Chadeau-Hyam, and Paolo Vineis.
\newblock Causal diagrams in systems epidemiology.
\newblock {\em Emerging themes in epidemiology}, 9(1):1, 2012.

\bibitem{kaufmann2015complexity}
Emilie Kaufmann, Olivier Capp{\'e}, and Aur{\'e}lien Garivier.
\newblock On the complexity of best arm identification in multi-armed bandit
  models.
\newblock {\em The Journal of Machine Learning Research}, 2015.

\bibitem{kemmeren2014large}
Patrick Kemmeren, Katrin Sameith, Loes~AL van~de Pasch, Joris~J Benschop,
  Tineke~L Lenstra, Thanasis Margaritis, Eoghan O?Duibhir, Eva Apweiler, Sake
  van Wageningen, Cheuk~W Ko, et~al.
\newblock Large-scale genetic perturbations reveal regulatory networks and an
  abundance of gene-specific repressors.
\newblock {\em Cell}, 157(3):740--752, 2014.

\bibitem{krouk2013gene}
Gabriel Krouk, Jesse Lingeman, Amy~Marshall Colon, Gloria Coruzzi, and Dennis
  Shasha.
\newblock Gene regulatory networks in plants: learning causality from time and
  perturbation.
\newblock {\em Genome biology}, 14(6):1, 2013.

\bibitem{lattimore2016causal}
Finnian Lattimore, Tor Lattimore, and Mark~D Reid.
\newblock Causal bandits: Learning good interventions via causal inference.
\newblock In {\em Advances In Neural Information Processing Systems}, pages
  1181--1189, 2016.

\bibitem{li2011unbiased}
Lihong Li, Wei Chu, John Langford, and Xuanhui Wang.
\newblock Unbiased offline evaluation of contextual-bandit-based news article
  recommendation algorithms.
\newblock In {\em Proceedings of the fourth ACM international conference on Web
  search and data mining}, pages 297--306. ACM, 2011.

\bibitem{loh2014high}
Po-Ling Loh and Peter B{\"u}hlmann.
\newblock High-dimensional learning of linear causal networks via inverse
  covariance estimation.
\newblock {\em Journal of Machine Learning Research}, 15(1):3065--3105, 2014.

\bibitem{meinshausen2016methods}
Nicolai Meinshausen, Alain Hauser, Joris~M Mooij, Jonas Peters, Philip
  Versteeg, and Peter B{\"u}hlmann.
\newblock Methods for causal inference from gene perturbation experiments and
  validation.
\newblock {\em Proceedings of the National Academy of Sciences},
  113(27):7361--7368, 2016.

\bibitem{mooij2013cyclic}
Joris Mooij and Tom Heskes.
\newblock Cyclic causal discovery from continuous equilibrium data.
\newblock {\em arXiv preprint arXiv:1309.6849}, 2013.

\bibitem{mooij2016distinguishing}
Joris~M Mooij, Jonas Peters, Dominik Janzing, Jakob Zscheischler, and Bernhard
  Sch{\"o}lkopf.
\newblock Distinguishing cause from effect using observational data: methods
  and benchmarks.
\newblock {\em Journal of Machine Learning Research}, 17(32):1--102, 2016.

\bibitem{pearl2009causality}
Judea Pearl.
\newblock {\em Causality}.
\newblock Cambridge university press, 2009.

\bibitem{Pearl2009}
Judea Pearl.
\newblock {\em Causality: Models, Reasoning and Inference}.
\newblock Cambridge University Press, 2009.

\bibitem{perez2008kullback}
Fernando P{\'e}rez-Cruz.
\newblock Kullback-leibler divergence estimation of continuous distributions.
\newblock In {\em Information Theory, 2008. ISIT 2008. IEEE International
  Symposium on}, pages 1666--1670. IEEE, 2008.

\bibitem{ribeiro2016should}
Marco~Tulio Ribeiro, Sameer Singh, and Carlos Guestrin.
\newblock Why should i trust you?: Explaining the predictions of any
  classifier.
\newblock In {\em Proceedings of the 22nd ACM SIGKDD International Conference
  on Knowledge Discovery and Data Mining}, pages 1135--1144. ACM, 2016.

\bibitem{sachs2005causal}
Karen Sachs, Omar Perez, Dana Pe'er, Douglas~A Lauffenburger, and Garry~P
  Nolan.
\newblock Causal protein-signaling networks derived from multiparameter
  single-cell data.
\newblock {\em Science}, 308(5721):523--529, 2005.

\bibitem{shanmugam2015learning}
Karthikeyan Shanmugam, Murat Kocaoglu, Alexandros~G Dimakis, and Sriram
  Vishwanath.
\newblock Learning causal graphs with small interventions.
\newblock In {\em Advances in Neural Information Processing Systems}, pages
  3195--3203, 2015.

\bibitem{slivkins2013dynamic}
Aleksandrs Slivkins.
\newblock Dynamic ad allocation: Bandits with budgets.
\newblock {\em arXiv preprint arXiv:1306.0155}, 2013.

\bibitem{Spirtes2001}
Peter Spirtes, Clark Glymour, and Richard Scheines.
\newblock {\em Causation, Prediction, and Search}.
\newblock A Bradford Book, 2001.

\bibitem{splawa1990application}
Jerzy Splawa-Neyman, DM~Dabrowska, TP~Speed, et~al.
\newblock On the application of probability theory to agricultural experiments.
  essay on principles. section 9.
\newblock {\em Statistical Science}, 5(4):465--472, 1990.

\bibitem{sugiyama2007covariate}
Masashi Sugiyama, Matthias Krauledat, and Klaus-Robert M{\~A}{\v{z}}ller.
\newblock Covariate shift adaptation by importance weighted cross validation.
\newblock {\em Journal of Machine Learning Research}, 8(May):985--1005, 2007.

\bibitem{szegedy2015going}
Christian Szegedy, Wei Liu, Yangqing Jia, Pierre Sermanet, Scott Reed, Dragomir
  Anguelov, Dumitru Erhan, Vincent Vanhoucke, and Andrew Rabinovich.
\newblock Going deeper with convolutions.
\newblock In {\em Proceedings of the IEEE Conference on Computer Vision and
  Pattern Recognition}, pages 1--9, 2015.

\bibitem{zhang2014stochastic}
Tong Zhang and Peiling Zhao.
\newblock Stochastic optimization with importance sampling for regularized loss
  minimization.
\newblock {\em arXiv preprint arXiv:1401.2753}, 2014.

\end{thebibliography}
\bibliographystyle{plain}

\clearpage
\appendix

\section{Variations of the Problem Setting}
\label{sec:variation}
In this section we provide more general causal settings where our results can be directly applied.

{\bf Multiple nodes at the graph:} This is illustrated in Fig.~\ref{fig:graphs1}. Soft interventions can be performed at multiple nodes like at $\mathcal{V} = \{V_1, V_2 \}$. These interventions can be modeled as changing the distribution $P(\mathcal{V} \vert pa(\mathcal{V}))$ where $pa(\mathcal{V})$ are the union of parents of $V_1$ and $V_2$. These distributions can be thought of as the arms of the bandits and our techniques can be applied as before to estimate the best intervention.

{\bf Directed cut between sources and targets: } Fig.~\ref{fig:graphs2} represents the most general scenario in which our techniques can be applied. Soft or hard interventions can be performed at multiple \textit{source} nodes, while the goal is to choose the best out of these interventions in terms of maximizing a known function of multiple target nodes. If the effect of these interventions can be estimated on a directed cut separating the targets and the sources then our techniques can be applied as before. This is akin to knowing $P(V_1,V_2)$ under all the interventions in Fig.~\ref{fig:graphs2}, because $V_1$ and $V_2$ is a directed cut separating the sources and the targets. 

{\bf Empirical knowledge of continuous arm distributions: } Our techniques can be applied to continuous distributions $\mathrm{P}(V \vert pa(V))$ as shown in our empirical results in Section~\ref{sec:cytometry}. The extension is straight-forward by using the general definition of $f$-divergences. More importantly our techniques can be applied even if only prior empirical samples from the distributions $\mathrm{P}(V \vert pa(V))$ is available and not the whole distributions. In this case the $f$-divergences can be estimated using nearest neighbor estimators similar to~\cite{perez2008kullback}. Moreover, for the importance sampling only ratios of distributions are needed, which can again be estimated using nearest neighbor based techniques from empirical data.

\begin{figure*}
	\centering
	\subfloat[][]{\includegraphics[width = 0.45\linewidth]{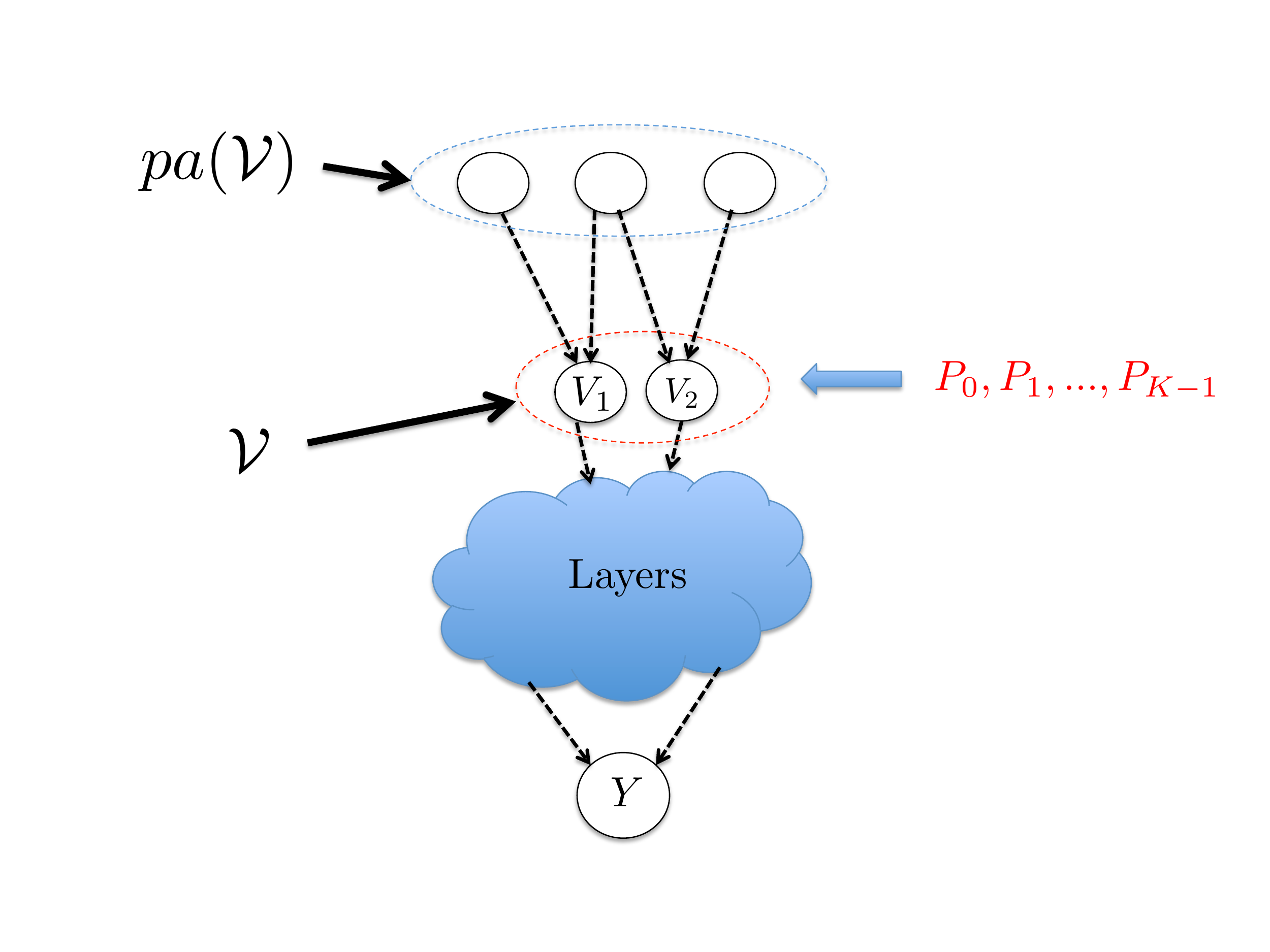}\label{fig:graphs1}} \hfill
	\subfloat[][]{\includegraphics[width = 0.45\linewidth]{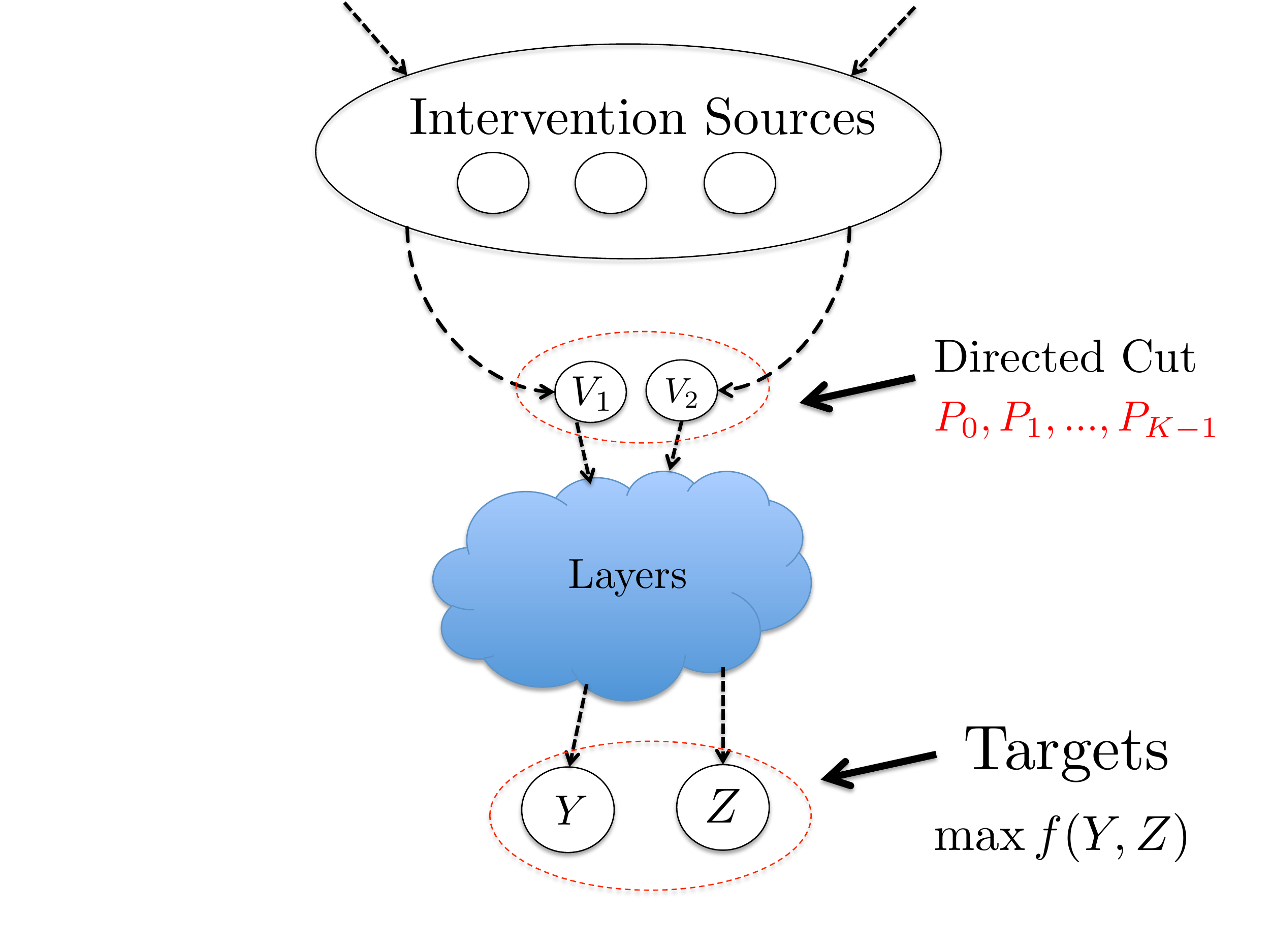}\label{fig:graphs2}} 
	\caption{\small Illustration of more general causal settings where our algorithms can be directly applied. (a) Illustration of a scenario where there are multiple intervention sources $\mathcal{V} = \{V_1, V_2 \}$. Each soft intervention is a change in the distribution $\mathrm{P} (\mathcal{V} \vert pa(\mathcal{V})).$ (b) Illustration of a causal setting, where there are many intervention sources. Soft or hard interventions can be performed at these sources and the effects of these interventions can be observed at the target node. Our techniques can be applied in choosing the best intervention in terms of maximizing a function on the target nodes, provided the effects of these interventions on $P(V_1,V_2)$ are known. Here $V_1$ and $V_2$ form a directed cut separating the sources and the targets }
	\label{fig:graphs}
\end{figure*}

\section{Interpretation of our Theoretical Results}
\label{sec:appen_compare}
In this section we compare our theoretical bounds on the probability of mis-identification with the corresponding bounds in~\cite{audibert2010best}. We also compare our simple regret guarantees with the guarantees in \cite{lattimore2016causal}. In both these cases, we demonstrate significant improvements. These theoretical improvements are exhibited in our empirical results in Section~\ref{sec:cytometry}.
\subsection{Comparison with~\cite{audibert2010best}}
\label{sec:compare1}
Let $\tilde{{\cal R}}(\Delta_k)= \{s: \Delta_s \leq \Delta_k \}$, i.e. the set of arms which are closer to the optimal than arm $k$. Let $\tilde{H} = \max \limits_{k \neq k*} \frac{\lvert \tilde{{\cal R}}(\Delta_k) \rvert}{\Delta_k^2}$. The result for the best arm identification with no information leakage in \cite{audibert2010best} can be stated as: \textit{The error in finding the optimal arm is bounded as}:
\begin{equation}\label{noleakage2}
e(T) \leq O\left(K^2 \exp \left(- \frac{T - K}{\overline{\log}(K)\tilde{H}} \right) \right)
\end{equation}

One intuitive interpretation for $\tilde{H}$ is that it is the maximum among the number of samples (neglecting some log factors) required to conclude that arm $k$ is suboptimal from among the arms which are closer to the optimal than itself. Intuitively, this is because when there is no information leakage, one requires $1/\Delta_k^2$ samples to distinguish between the $k$-th optimal arm and the optimal arm. Further, the $k$th arm is played only $1/k$ fraction of the times since we do not know the identity of the $k$-th optimal arm.

Our main result in Theorem $\ref{mainresult}$ can be seen to be a generalization of the existing result for the case when there is information leakage between the arms (various changes in a causal graph).  

The term $\sigma^*(B,{\cal R}^*(\Delta_k))$ in our setting is the `effective standard deviation' due to information leakage.  There is a similar interpretation of our result (ignoring the log factors): Since there is information leakage, the expression $\frac{(\sigma^*)^2}{(\Delta_k)^2}$ characterizes the number of samples required to weed out arm $k$ out of contention from among competing arms (arms that are at a distance at most twice than that of arm $k$ from the optimal arm). The interpretation of 'effective variance' is justified using importance sampling which is detailed in Section \ref{sec:estimator}. Further, in our framework $\sigma^*$ also incorporates any budget constraint that comes with the problem, i.e. any \textit{apriori} constraint on the relative fraction of times different arms need to be pulled.

For ease of exposition let $(k)$ denote the index of the $k$-th best arm (for $k = 1,..,K$) and $\Delta_{(k)}$ denotes the corresponding gap. In this setting, the terms $\tilde{H}$ (from the result in~\cite{audibert2010best}) and $\bar{H}$ can be written as:
\begin{align*}
\tilde{H} &= \max_{k \neq 1} \frac{k}{\Delta_{(k)}^2} \\
\bar{H} &= \max_{k \neq 1} \frac{\sigma^*(B,{\cal R}^*(\Delta_{(k)}))^2}{\Delta_{(k)}^2}.
\end{align*}
$ \sigma^*(B,{\cal R}^*(\Delta_{(k)}))$ can be smaller than $\sqrt{k}$ due to information leakage as every single arm pull contributes to another arm's estimate. Therefore, these provide better guarantees than ~\cite{audibert2010best}.

To see the improvement over the previous result in \cite{audibert2010best}, we consider a special case when the cost budget $B$ is infinity and there is only the the sample budget $T$. In addition, let us assume that the log divergences are such that: $M_{ij} \leq \eta M_{ii} = \eta << \sqrt{\lvert {\cal R}  \rvert},~ \forall i \neq j$. Let ${\cal R} = {\cal R}^*(\Delta_{(k)})$. If $\eta > \sqrt{\lvert {\cal R}  \rvert}$, the optimal solution for (\ref{mainLP1}) is a bit complicated to interpret.  Consider the feasible allocation $\nu_i = \frac{1}{\lvert {\cal R}  \rvert}, ~ \forall i \in {\cal R}$ in (\ref{mainLP1}). Evaluating the objective function for this feasible allocation, it is possible to show that $\sigma^* \leq \frac{\eta}{1-1/\lvert {\cal R}  \rvert} <<  \sqrt{\lvert {\cal R}  \rvert}$. Hence, unless the variance due to information leakage is too bad, the effective variance is smaller than that of the case with no information leakage.

The improvement over the no information leakage setting, is even more pronounced under budget constraints. Consider the setting \textbf{S1}, and assume that the fractional budget of the \textit{difficult} arms, $B = o(1)$. This implies that the total number of samples available for difficult arms is $o(T)$. The budget constrained case has not been analyzed in~\cite{audibert2010best}, however in the absence of information leakage, one would expect that the arms with the least number of samples would be the most difficult to eliminate, and therefore the error guarantees would scale as $\exp(-O(BT)/\tilde{H}) \sim \exp(-o(T)/\tilde{H})$ (excluding $\log$ factors). On the other hand, our algorithm can leverage the information leakage and the error guarantee would scale as $\exp(-O(T)/\bar{H})$, which can be order-wise better if the \textit{effective standard deviations} are well-behaved. 

\subsection{Comparison with~\cite{lattimore2016causal}}
\label{sec:compare2}
In~\cite{lattimore2016causal}, the algorithm is based on clipped importance samples, where the clipper is always set at a static level of $O(\sqrt{T})$ (excluding $\log$ factors). The simple regret guarantee in~\cite{lattimore2016causal} scales as $O(\sqrt{(m(\eta)/T)\log T})$, where $m(\eta)$ is a global hardness parameter. The guarantees do not adapt to the problem parameters, specifically the gaps $\{\Delta_k \}_{k \in [K]}$. 

On the contrary, we provide problem dependent bounds, which differentiates the arms according to its gap from the optimal arm and its \textit{effective standard deviation} parameter. The terms $\bar{H}_{k}$ can be interpreted as the hardness parameter for rejecting arm $k$. Note that $\bar{H}_k$ depends only on the  arms that are at least as \textit{bad} in terms of their gap from the optimal arm. Moreover the guarantees are adapted to our general budget constraints, which is absent in~\cite{lattimore2016causal}. It can be seen that when $\Delta_k$'s do not scale in $T$, then our simple regret is exponentially small in $T$ (dependent on $\bar{H}_k$'s) and can be much less than $O(1/\sqrt{T})$. The guarantee also generalizes to the problem independent setting when $\Delta_k$'s scale as $O(1/\sqrt{T})$.
\section{Proofs}
In this section we present the theoretical analysis of our algorithm. Before we proceed to the proof of our main theorems, we derive some key lemmas that are useful in analyzing clipped importance sampled estimators. 
\label{sec:proofs}

\subsection{Clipped Importance Sampling Estimator}
\label{sec:divergence}
In Section~\ref{sec:def} we have introduced the concept of importance sampling that helps us in using samples collected under one arm to estimate the means of other arms. As noted in Section~\ref{sec:def}, a naive unbiased importance sampled estimator can potentially have unbounded variances thus leading to poor guarantees. We now introduce clipped importance samplers and provide a novel analysis of these estimators that alleviates the variance related issues.     

{\bf \noindent Clipped Importance Samplers: } The naive estimator of (\ref{eq:naive}) is not suitable for yielding good confidence intervals. It has been observed in the context of importance sampling, that clipping the estimator in (\ref{eq:naive})  at a carefully chosen value, can yield better confidence guarantees even though the resulting estimator will become slightly biased~\cite{bottou2013counterfactual}. Before we introduce the precise estimator, let us define a key quantity that will be useful for the analysis. 
\begin{definition}
\label{def:etas} We define $\eta_{i,j}(\epsilon)$ as follows:
\begin{align}
\eta_{i,j}(\epsilon) = \left\{\argmin_{\eta} : \PP_i\left(\frac{\mathrm{P}_i(V \vert pa(V))}{\mathrm{P}_j(V \vert pa(V))} > \eta \right) \leq \frac{\epsilon}{2} \right\}
\end{align}
for all $i,j \in [K]$, where $\epsilon > 0$. 
\end{definition}
We shall see that the $\eta_{i,j}(\epsilon)$ is related to the conditional $f$-divergence between  $\mathrm{P}_i(V \vert pa(V))$ and $\mathrm{P}_j(V \vert pa(V))$ for the carefully chosen function $f_1(.)$ as introduced in Section~\ref{sec:def}.

Now we are at a position to provide confidence guarantees on the following clipped estimator:
\begin{align*}
&\hat{Y}^{(\eta)}_i(j) = \frac{1}{t}\sum_{s = 1}^{t} Y_j(s)\frac{\mathrm{P}_i(V_j(s) \vert pa(V)_j(s))}{\mathrm{P}_j(V_j(s) \vert pa(V)_j(s))} \times \mathds{1} \left\{ \frac{\mathrm{P}_i(V_j(s) \vert pa(V)_j(s))}{\mathrm{P}_j(V_j(s) \vert pa(V)_j(s))} \leq \eta_{ij}(\epsilon) \right\}. \numberthis \label{eq:goodest}
\end{align*}

\begin{lemma}
\label{lem:clipped1} 
The estimate $\hat{Y}^{(\eta)}_i(j)$ for $\eta = \eta_{i,j}(\epsilon)$ satisfies the following:
\begin{enumerate}
\item \begin{align}
\label{eq:means}
\EE_j\left[\hat{Y}^{(\eta)}_i(j) \right] \leq \mu_{i} \leq \EE_j\left[\hat{Y}^{(\eta)}_i(j) \right] + \frac{\epsilon}{2}
\end{align}
\item \begin{align*}
&\PP \left( \mu_i - \delta -\epsilon/2 \leq  \hat{Y}^{(\eta)}_i(j) \leq \mu_i+\delta \right) \geq 1 - 2\exp \left( -\frac{\delta^{2}t}{2\eta_{i,j}(\epsilon)^2}\right). \numberthis
\end{align*} 
\end{enumerate}
\end{lemma}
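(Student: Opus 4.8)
The plan is to view $\hat{Y}^{(\eta)}_i(j)$ as an empirical average of i.i.d.\ bounded random variables under arm $j$, control its expectation (bias) by a change of measure to arm $i$, and then apply Hoeffding's inequality. First I would record the importance-sampling identity underlying \eqref{eq:basic}: since the only factor of the joint law that differs between arms $i$ and $j$ is the conditional $\mathrm{P}(V\vert pa(V))$ (and these conditionals are mutually absolutely continuous, so the ratio is well defined), for every bounded measurable $h$ of the observables,
\[
\EE_j\!\left[h(Y,V,pa(V))\,\frac{\mathrm{P}_i(V\vert pa(V))}{\mathrm{P}_j(V\vert pa(V))}\right] = \EE_i\big[h(Y,V,pa(V))\big].
\]
Write $W = Y\,\mathrm{P}_i(V\vert pa(V))/\mathrm{P}_j(V\vert pa(V))$ and $W^{(\eta)} = W\,\mathds{1}\{\mathrm{P}_i(V\vert pa(V))/\mathrm{P}_j(V\vert pa(V)) \le \eta_{i,j}(\epsilon)\}$, so that $\hat{Y}^{(\eta)}_i(j) = \tfrac{1}{t}\sum_{s=1}^t W^{(\eta)}_s$ with the $W^{(\eta)}_s$ i.i.d.\ copies of $W^{(\eta)}$ under arm $j$.

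For the bias bounds \eqref{eq:means}: because $Y\in[0,1]$ and the likelihood ratio is nonnegative, $0\le W^{(\eta)}\le W$ pointwise, hence $\EE_j[W^{(\eta)}]\le \EE_j[W]=\mu_i$, which is the left inequality. For the right inequality I would bound the truncated tail: $\mu_i-\EE_j[W^{(\eta)}] = \EE_j\big[W\,\mathds{1}\{\mathrm{P}_i/\mathrm{P}_j>\eta_{i,j}(\epsilon)\}\big]$, and applying the identity above with $h = Y\,\mathds{1}\{\mathrm{P}_i(V\vert pa(V))/\mathrm{P}_j(V\vert pa(V))>\eta_{i,j}(\epsilon)\}$ converts this to $\EE_i\big[Y\,\mathds{1}\{\mathrm{P}_i/\mathrm{P}_j>\eta_{i,j}(\epsilon)\}\big]\le \PP_i\big(\mathrm{P}_i(V\vert pa(V))/\mathrm{P}_j(V\vert pa(V))>\eta_{i,j}(\epsilon)\big)\le \epsilon/2$, the last step being exactly Definition~\ref{def:etas}.

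For the concentration bound (part 2): each $W^{(\eta)}_s$ lies in $[0,\eta_{i,j}(\epsilon)]$ since $Y\le 1$ and the ratio is clipped at $\eta_{i,j}(\epsilon)$. Hoeffding's inequality for the average of $t$ i.i.d.\ variables in an interval of length $\eta_{i,j}(\epsilon)$ gives $\PP\big(\lvert\hat{Y}^{(\eta)}_i(j)-\EE_j[\hat{Y}^{(\eta)}_i(j)]\rvert\ge\delta\big)\le 2\exp\!\big(-2\delta^2 t/\eta_{i,j}(\epsilon)^2\big)\le 2\exp\!\big(-\delta^2 t/(2\eta_{i,j}(\epsilon)^2)\big)$. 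On the complementary event, $\EE_j[\hat{Y}^{(\eta)}_i(j)]-\delta\le\hat{Y}^{(\eta)}_i(j)\le\EE_j[\hat{Y}^{(\eta)}_i(j)]+\delta$; combining with part 1 (which gives $\mu_i-\epsilon/2\le\EE_j[\hat{Y}^{(\eta)}_i(j)]\le\mu_i$) yields $\mu_i-\delta-\epsilon/2\le\hat{Y}^{(\eta)}_i(j)\le\mu_i+\delta$, as claimed.

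The only genuinely delicate point is justifying the change-of-measure identity when the test function $h$ itself depends on the likelihood ratio $\mathrm{P}_i/\mathrm{P}_j$ (together with the measurability of the truncation event and the fact that, by mutual absolute continuity, $\mathrm{P}_i/\mathrm{P}_j$ and $\mathrm{P}_j/\mathrm{P}_i$ are a.e.\ finite); everything else is a routine application of Hoeffding's inequality and the defining property of $\eta_{i,j}(\epsilon)$. I would therefore devote most of the write-up to carefully setting up the joint-law factorization so that $\EE_j[h\cdot\mathrm{P}_i/\mathrm{P}_j]=\EE_i[h]$ is unambiguous for any bounded $h=h(Y,V,pa(V))$, and then the two parts of the lemma follow immediately.
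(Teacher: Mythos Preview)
Your proposal is correct and follows essentially the same route as the paper: split off the truncated tail, change measure from arm $j$ to arm $i$ to bound it by $\PP_i(\mathrm{P}_i/\mathrm{P}_j>\eta_{i,j}(\epsilon))\le\epsilon/2$, and then apply Hoeffding/Azuma to the bounded summands. The only differences are cosmetic: you obtain the sharper Hoeffding constant $2\exp(-2\delta^2 t/\eta^2)$ before weakening, and you worry about justifying the change-of-measure identity when $h$ involves the ratio, whereas the paper simply invokes it (recall $V,pa(V)$ are assumed discrete with common support, so this is immediate).
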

\begin{proof}
We have the following chain:
\begin{align*}
&\EE_{j}\left[Y\frac{\mathrm{P}_i(V \vert pa(V)}{\mathrm{P}_j(V \vert pa(V))}\right] \\
&= \EE_{j}\left[Y\frac{\mathrm{P}_i(V \vert pa(V))}{\mathrm{P}_j(V \vert pa(V))}\mathds{1}\left\{ \frac{\mathrm{P}_i(V \vert pa(V))}{\mathrm{P}_j(V \vert pa(V))} \leq \eta_{i,j}(\epsilon)\right\}\right] + \EE_{j}\left[Y\frac{\mathrm{P}_i(V \vert pa(V))}{\mathrm{P}_j(V \vert pa(V))}\mathds{1}\left\{ \frac{\mathrm{P}_i(V \vert pa(V))}{\mathrm{P}_j(V \vert pa(V))} > \eta_{i,j}(\epsilon)\right\}\right] \\
& \overset{(a)} \leq \EE_{j}\left[Y\frac{\mathrm{P}_i(V \vert pa(V))}{\mathrm{P}_j(V \vert pa(V))}\mathds{1}\left\{ \frac{\mathrm{P}_i(V \vert pa(V))}{\mathrm{P}_j(V \vert pa(V))} \leq \eta_{i,j}(\epsilon)\right\}\right] + \PP_i\left(\frac{\mathrm{P}_i(V \vert pa(V))}{\mathrm{P}_j(V \vert pa(V))} > \eta_{i,j}(\epsilon) \right) \nonumber \\
&\leq \EE_{j}\left[Y\frac{\mathrm{P}_i(V \vert pa(V))}{\mathrm{P}_j(V \vert pa(V))}\mathds{1}\left\{ \frac{\mathrm{P}_i(V \vert pa(V))}{\mathrm{P}_j(V \vert pa(V))} \leq \eta_{i,j}(\epsilon)\right\}\right] + \frac{\epsilon}{2}
\end{align*}
Here, $(a)$ is because $Y \in [0,1]$. This yields the first part of the lemma:
\begin{align}
\label{eq:means}
\EE_j\left[\hat{Y}^{(\eta)}_i(j) \right] \leq \mu_{i} \leq \EE_j\left[\hat{Y}^{(\eta)}_i(j) \right] + \frac{\epsilon}{2}
\end{align}
where $\eta = \eta_{i,j}(\epsilon)$. Note that all the terms in the summation of (\ref{eq:goodest}) are bounded by $\eta_{i,j}(\epsilon)$. Therefore, by an application of Azuma-Hoeffding we obtain:
\begin{align}
\label{eq:azuma}
\PP\left( \lvert\hat{Y}^{(\eta)}_i(j)  -  \EE_j\left[\hat{Y}^{(\eta)}_i(j) \right]  \rvert > \delta \right) \leq 2\exp \left( -\frac{\delta^{2}t}{2\eta_{i,j}(\epsilon)^2} \right)
\end{align}
Combining Equation~(\ref{eq:means}) and (\ref{eq:azuma}), we obtain the first part of our lemma. 
\end{proof}

\subsection{Relating $\eta_{ij}(\cdot)$ with $f$-divergence}
\label{sec:relation}

Now we are left with relating $\eta_{i,j}(\epsilon)$ to a particular $f$-divergence ($D_{f_1}$ defined in Section \ref{sec:def}) between $\mathrm{P}_i(V \vert pa(V))$ and $\mathrm{P}_j(V \vert pa(V))$. We have the following relation,
\begin{align}
\label{eq:divergence}
 \mathbb{E}_{i} \left[ \exp \left( \frac{\mathrm{P}_i(V \vert pa(V))}{\mathrm{P}_j(V \vert pa(V))} \right) \right] = \left[ 1+D_{f_1}\left(\mathrm{P}_i \lVert \mathrm{P}_j\right) \right]e.
\end{align}
The following lemma expresses the quantity $\eta_{i,j}(\epsilon)$ as a separable function of $D_{f_1}\left(\mathrm{P}_i \lVert \mathrm{P}_j\right)$ and $\epsilon$, and is one of the key tools used in subsequent analysis. 

\begin{lemma}\label{etalemma}
  It holds that, $\eta_{i,j} (\epsilon) \leq \log \left(\frac{2}{\epsilon}\right) + 1+ \log \left(1+ D_{f_1}\left(\mathrm{P}_i \lVert \mathrm{P}_j\right) \right)$. Furthermore,
 \begin{equation} 
  \eta_{i,j} (\epsilon) \leq 2 \log \left(\frac{2}{\epsilon} \right) \left[1 + \log (1 + D_{f_1} (\mathrm{P}_i \lVert \mathrm{P}_j))  \right] 
  \end{equation}
  when, $\epsilon \leq 1$.
\end{lemma}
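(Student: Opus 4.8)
The plan is to bound the tail probability $\PP_i\left(\frac{\mathrm{P}_i(V \vert pa(V))}{\mathrm{P}_j(V \vert pa(V))} > \eta\right)$ via an exponential Markov inequality, using the moment generating function identity~(\ref{eq:divergence}). Write $W = \mathrm{P}_i(V \vert pa(V))/\mathrm{P}_j(V \vert pa(V))$, so that $\EE_i[\exp(W)] = (1 + D_{f_1}(\mathrm{P}_i \lVert \mathrm{P}_j))\,e$ by~(\ref{eq:divergence}). By Markov's inequality applied to $\exp(W)$,
\begin{align*}
\PP_i\left(W > \eta\right) = \PP_i\left(\exp(W) > \exp(\eta)\right) \leq e^{-\eta}\,\EE_i[\exp(W)] = e^{-\eta}\,(1 + D_{f_1}(\mathrm{P}_i \lVert \mathrm{P}_j))\,e.
\end{align*}
Setting the right-hand side equal to $\epsilon/2$ and solving for $\eta$ gives the threshold $\eta = \log(2/\epsilon) + 1 + \log(1 + D_{f_1}(\mathrm{P}_i \lVert \mathrm{P}_j))$; since $\eta_{i,j}(\epsilon)$ is defined as the \emph{smallest} $\eta$ making the tail at most $\epsilon/2$, and the tail bound above is monotone decreasing in $\eta$, this value is an upper bound on $\eta_{i,j}(\epsilon)$. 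That establishes the first inequality.

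For the second inequality, I would start from the first bound and use $\epsilon \leq 1$, hence $\log(2/\epsilon) \geq \log 2 \geq 1/2$, so $1 \leq 2\log(2/\epsilon)$. Then
\begin{align*}
\eta_{i,j}(\epsilon) &\leq \log(2/\epsilon) + 1 + \log(1 + D_{f_1}(\mathrm{P}_i \lVert \mathrm{P}_j)) \\
&\leq \log(2/\epsilon) + 2\log(2/\epsilon)\cdot\bigl(1 + \log(1 + D_{f_1}(\mathrm{P}_i \lVert \mathrm{P}_j))\bigr) \\
&\leq 2\log(2/\epsilon)\bigl(1 + \log(1 + D_{f_1}(\mathrm{P}_i \lVert \mathrm{P}_j))\bigr),
\end{align*}
where the middle step uses $1 \leq 2\log(2/\epsilon) \leq 2\log(2/\epsilon)(1 + \log(1+D_{f_1}))$ (since $\log(1+D_{f_1}) \geq 0$ because $D_{f_1} \geq 0$ as an $f$-divergence of a nonnegative convex $f$ with $f(1)=0$), and similarly $\log(2/\epsilon) \leq 2\log(2/\epsilon)(1+\log(1+D_{f_1}))$. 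Recognizing $1 + \log(1 + D_{f_1}(\mathrm{P}_i \lVert \mathrm{P}_j)) = M_{ij}$ from Definition~\ref{def:mij}, this is exactly the clipper level $2\log(2/\epsilon)M_{ij}$ used in the estimator~(\ref{eq:uniestimator_main}).

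The only genuine subtlety—not a real obstacle—is verifying identity~(\ref{eq:divergence}): one computes $\EE_i[\exp(W)] = \EE_{q}[\,(p/q)\exp(p/q)\,]$ (writing $p = \mathrm{P}_i$, $q = \mathrm{P}_j$ and $\EE_q$ for the expectation over $pa(V)$ and then $V \sim q$), and since $f_1(x) = x\exp(x-1) - 1$ we have $x\exp(x) = e\,(f_1(x) + 1)$, so $\EE_q[(p/q)\exp(p/q)] = e\,\EE_q[f_1(p/q) + 1] = e\,(D_{f_1}(\mathrm{P}_i \lVert \mathrm{P}_j) + 1)$, using that $\EE_q[1] = 1$ and the definition of conditional $f$-divergence. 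Everything else is the one-line Markov argument and elementary inequalities; I do not anticipate any technical difficulty.
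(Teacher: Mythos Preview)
Your approach matches the paper's exactly: Markov's inequality applied to $\exp(W)$ together with identity~(\ref{eq:divergence}) gives the first bound, and then an elementary product inequality upgrades it to the second. The first half is correct and identical to the paper.

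There is, however, an arithmetic slip in your displayed chain for the second inequality. Writing $p=\log(2/\epsilon)$ and $q=1+\log(1+D_{f_1})$, your first line is $p+q$, your second line is $p+2pq$ (you replaced $q$ by $2pq$, valid since $2p\geq 2\log 2>1$), but your third line is $2pq$ --- you have dropped the positive term $p$, which goes the wrong way. Your verbal justification ``similarly $\log(2/\epsilon)\leq 2\log(2/\epsilon)(1+\log(1+D_{f_1}))$'' would at best give $p+q\leq 2pq+2pq=4pq$, not $2pq$. The paper instead asserts $p+q\leq 2pq$ in a single step ``when $q\geq 1$ and $p\geq \log 2$.'' (Strictly speaking that inequality also fails at the corner $p=\log 2$, $q=1$, i.e.\ $i=j$ and $\epsilon=1$; but since the constant $2$ only sets the clipper level in~(\ref{eq:uniestimator_main}) and is never optimized, taking $\epsilon<1$ strictly or absorbing a slightly larger constant fixes both arguments with no effect downstream.)
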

\begin{proof}
We have the following chain:
 \begin{align}
  &\mathbb{P}_i \left( \frac{\mathrm{P}_i(V \vert pa(V))}{\mathrm{P}_j(V \vert pa(V))} > \eta \right) \\
  & = \mathbb{P}_i \left( \exp \left(\frac{\mathrm{P}_i(V \vert pa(V))}{\mathrm{P}_j(V \vert pa(V))} \right) > \exp (\eta) \right) \nonumber \\
     \hfill & \overset{(a)} \leq \mathbb{E}_{i} \left[\exp \left(\frac{\mathrm{P}_i(V \vert pa(V))}{\mathrm{P}_j(V \vert pa(V))} \right)\right] \exp (-\eta) \nonumber \\     
  \end{align}
  (a) - We used Markov's inequality. Suppose, we have the right hand side to be at most $\epsilon/2$. Then we have,
  \begin{equation}
  \mathbb{E}_{i} \left[\exp \left(\frac{\mathrm{P}_i(V \vert pa(V))}{\mathrm{P}_j(V \vert pa(V))} \right) \right] \exp (-\eta)  \leq \epsilon/2
\end{equation}
   Now using (\ref{eq:divergence}), we have: 
 \begin{align}
  & \eta \geq \log \left(\frac{2}{\epsilon}\right) + 1 + \log (1 + D_{f_1} (\mathrm{P}_i \lVert \mathrm{P}_j)) \\ \nonumber
   &\implies 
    \mathbb{P}_i \left( \frac{\mathrm{P}_i(V \vert pa(V))}{\mathrm{P}_j(V \vert pa(V))} > \eta \right) \leq \epsilon/2
 \end{align}
 
 From, the definition of $\eta_{i,j} (\epsilon)$, we have:
  \begin{align}
    \eta_{i,j} (\epsilon) &\leq \log \left(\frac{2}{\epsilon}\right) + 1 + \log (1 + D_{f_1} (\mathrm{P}_i \lVert \mathrm{P}_j)) \nonumber \\
    \hfill &\overset{a} \leq 2 \log \left(\frac{2}{\epsilon} \right) \left[1 + \log (1 + D_{f_1} (\mathrm{P}_i \lVert \mathrm{P}_j))  \right], ~\forall \epsilon \leq 1 
  \end{align}
  (a) - This is due to the inequality $p+q \leq 2pq$ when  $q \geq 1$ and $p \geq \log_e(2)$.
\end{proof}  
Now, we introduce the main result of this section as Theorem~\ref{lem:clippedmain}. Recall that $M_{ij} = 1 + \log (1 + D_{f_1} (\mathrm{P}_i \lVert \mathrm{P}_j))$. 
\begin{theorem}
\label{lem:clippedmain} 
The estimate $\hat{Y}^{(\eta)}_i(j)$ for $\eta = 2\log(2/\epsilon)M_{ij}$ satisfies the following confidence guarantees:
\begin{align*}
&\PP \left( \mu_i - \delta -\epsilon/2 \leq  \hat{Y}^{(\eta)}_i(j) \leq \mu_i+\delta \right) \geq 1 - 2\exp \left( -\frac{\delta^{2}t}{8\log(2/\epsilon)^2M_{ij}^2}\right).
\end{align*} 
\end{theorem}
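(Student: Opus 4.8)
The plan is to observe that Theorem~\ref{lem:clippedmain} is essentially Lemma~\ref{lem:clipped1} run with a clipping threshold that is at least as large as $\eta_{i,j}(\epsilon)$, and then to substitute the explicit bound on $\eta_{i,j}(\epsilon)$ from Lemma~\ref{etalemma}. Set $\eta = 2\log(2/\epsilon)M_{ij}$. By Lemma~\ref{etalemma} (using $\epsilon \le 1$) we have $\eta \ge \eta_{i,j}(\epsilon)$, so the clipped estimator $\hat{Y}^{(\eta)}_i(j)$ in (\ref{eq:goodest}) truncates \emph{less aggressively} than the estimator analyzed in Lemma~\ref{lem:clipped1}, and the task reduces to checking that the two conclusions of Lemma~\ref{lem:clipped1} survive this enlargement of the clipper.

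First I would re-examine the bias sandwich at the larger threshold. Splitting $\mu_i = \EE_j[Y\,\mathrm{P}_i/\mathrm{P}_j]$ (as in (\ref{eq:basic})) into its clipped and tail pieces exactly as in the proof of Lemma~\ref{lem:clipped1}, the clipped piece equals $\EE_j[\hat{Y}^{(\eta)}_i(j)]$ and the tail piece is non-negative (since $Y \ge 0$ and the weights $\mathrm{P}_i/\mathrm{P}_j \ge 0$), giving $\EE_j[\hat{Y}^{(\eta)}_i(j)] \le \mu_i$; moreover the tail piece is bounded by $\PP_i(\mathrm{P}_i/\mathrm{P}_j > \eta)$ by the change of measure and $Y\in[0,1]$, and this tail probability is monotone non-increasing in $\eta$, hence $\le \epsilon/2$ for every $\eta \ge \eta_{i,j}(\epsilon)$, giving $\mu_i \le \EE_j[\hat{Y}^{(\eta)}_i(j)] + \epsilon/2$. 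For concentration, every summand in (\ref{eq:goodest}) lies in $[0,\eta]$ by the indicator, so Azuma--Hoeffding yields $\PP(|\hat{Y}^{(\eta)}_i(j) - \EE_j[\hat{Y}^{(\eta)}_i(j)]| > \delta) \le 2\exp(-\delta^2 t/(2\eta^2))$, just as in Lemma~\ref{lem:clipped1}.

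Combining the two, on the complement of the Azuma failure event we get $\hat{Y}^{(\eta)}_i(j) \le \EE_j[\hat{Y}^{(\eta)}_i(j)] + \delta \le \mu_i + \delta$ and $\hat{Y}^{(\eta)}_i(j) \ge \EE_j[\hat{Y}^{(\eta)}_i(j)] - \delta \ge \mu_i - \epsilon/2 - \delta$, so
\[
\PP\!\left(\mu_i - \delta - \epsilon/2 \le \hat{Y}^{(\eta)}_i(j) \le \mu_i + \delta\right) \ge 1 - 2\exp\!\left(-\frac{\delta^2 t}{2\eta^2}\right),
\]
and substituting $\eta = 2\log(2/\epsilon)M_{ij}$, so that $2\eta^2 = 8\log(2/\epsilon)^2 M_{ij}^2$, produces exactly the stated bound. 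The only point requiring a moment's thought is the first step --- that enlarging the clipper from $\eta_{i,j}(\epsilon)$ to $2\log(2/\epsilon)M_{ij}$ preserves \emph{both} sides of the bias inequality --- but this is pure monotonicity (truncating a non-negative quantity can only lower its mean, while the discarded tail mass only shrinks as the threshold grows), so I do not anticipate any real obstacle; everything after that is a direct substitution of Lemma~\ref{etalemma} into the concentration inequality of Lemma~\ref{lem:clipped1}.
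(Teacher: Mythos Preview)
Your proposal is correct and follows exactly the route the paper intends: combine Lemma~\ref{etalemma} with (the argument of) Lemma~\ref{lem:clipped1}. In fact you are more careful than the paper's one-line ``immediate from Lemmas~\ref{etalemma} and~\ref{lem:clipped1}'': since Lemma~\ref{lem:clipped1} is stated only for the specific clipper $\eta_{i,j}(\epsilon)$ while the theorem uses the larger $\eta = 2\log(2/\epsilon)M_{ij}$, your monotonicity check (that enlarging the clipper preserves the bias sandwich and Azuma--Hoeffding applies with the new bound) is exactly the missing step the paper leaves implicit.
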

\begin{proof}
The proof is immediate from Lemmas~\ref{etalemma} and \ref{lem:clipped1}. 
\end{proof}

\subsection{Aggregating Heterogenous Clipped Estimators}
\label{sec:estimator}
In Section~\ref{sec:divergence}, we have seen how samples from one of the candidate distribution can be used for estimating the target mean under another arm. Therefore, it is possible to obtain information about the target mean under the $k^{th}$ arm ($\EE_{k}[Y]$) from the samples of all the other arms. It is imperative to design an efficient estimator of $\EE_{k}[Y]$ ($\forall k \in [K]$) that seamlessly uses the samples from all arms, possibly with variable weights depending on the relative divergences between the distributions. In this section we will come up with one such estimator, based on the insight gained in Section~\ref{sec:divergence}. 

Recall the quantities $M_{kj} = 1 + \log (1 + D_{f_1} (\mathrm{P}_k \lVert \mathrm{P}_j))$ ($\forall k,j \in [K]$). These quantities will be the key tools in designing the estimators in this section. Suppose we obtain $\tau_{i}$ samples from arm $i \in [K]$. Let the total number of samples from all arms put together be $\tau$. 

%Therefore, we have the following relation:
%\begin{align*}
%\sum_{i \in \mathcal{B}} \tau_{i} &\leq \beta \\
%\sum_{i \in [K]} \tau_{i} &\leq \tau
%\end{align*} 

Let us index all the samples by $s \in \{1,2,..,\tau \}$. Let $\mathcal{T}_{k} \subset \{1,2,..,\tau \}$ be the indices of all the samples collected from arm $k$. Further, let $Z_k = \sum_{j \in [K]}\tau_j/M_{kj}$. Now, we are at the position to introduce the estimator for $\mu_k$, which we will denoted by $\hat{Y}_{k}^{\epsilon}$ ($\epsilon$ is an indicator of the level of confidence desired):
\begin{align*}
\label{eq:uniestimator}
&\hat{Y}_{k}^{\epsilon}  = \frac{1}{Z_{k}}\sum_{j = 0}^{K} \sum_{s \in \mathcal{T}_j} \frac{1}{M_{kj}}Y_j(s)\frac{\mathrm{P}_k(V_j(s) \vert pa(V)_j(s))}{\mathrm{P}_j(V_j(s) \vert pa(V)_j(s))} \times
\mathds{1}\left\{ \frac{\mathrm{P}_k(V_j(s) \vert pa(V)_j(s))}{\mathrm{P}_j(V_j(s) \vert pa(V)_j(s))} \leq 2\log(2/\epsilon)M_{kj}\right\}. \numberthis
\end{align*}
In other words, $\hat{Y}_{k}^{\epsilon}$ is the weighted average of the clipped samples, where the samples from arm $j$ are weighted by $1/M_{kj}$ and clipped at $ 2\log(2/\epsilon)M_{kj}$.

\begin{lemma}
\begin{align}
\label{eq:means_all}
\hat{\mu}_{k} := \EE\left[\hat{Y}^{\epsilon}_k \right] \leq \mu_{k} \leq \EE\left[\hat{Y}^{\epsilon}_k \right] + \frac{\epsilon}{2}
\end{align} 
\end{lemma}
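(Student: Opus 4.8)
The plan is to show that the aggregated estimator $\hat{Y}_k^\epsilon$ is a convex combination, with weights $\tau_j/(M_{kj}Z_k)$ summing to one, of the ``per-pair'' averages appearing implicitly in $\hat{Y}^{(\eta)}_k(j)$, and then apply the mean bound of Lemma~\ref{lem:clipped1} (equivalently, the first half of Theorem~\ref{lem:clippedmain}) term by term. Concretely, I would first observe that for a fixed source arm $j$, the inner sum $\sum_{s \in \mathcal{T}_j} Y_j(s)\frac{\mathrm{P}_k(V_j(s)\mid pa(V)_j(s))}{\mathrm{P}_j(V_j(s)\mid pa(V)_j(s))}\mathds{1}\{\cdot \le 2\log(2/\epsilon)M_{kj}\}$ equals $\tau_j \hat{Y}^{(\eta)}_k(j)$ with $\eta = 2\log(2/\epsilon)M_{kj}$ (using that $|\mathcal{T}_j| = \tau_j$ and the clipping threshold matches the one in Theorem~\ref{lem:clippedmain}). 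Hence
\begin{align*}
\hat{Y}_k^\epsilon = \frac{1}{Z_k}\sum_{j=0}^{K} \frac{\tau_j}{M_{kj}}\,\hat{Y}^{(\eta)}_k(j),
\end{align*}
and since $Z_k = \sum_{j}\tau_j/M_{kj}$, the coefficients $w_j := \tau_j/(M_{kj}Z_k)$ are nonnegative and sum to $1$.

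Next I would take expectations. By linearity, $\EE[\hat{Y}_k^\epsilon] = \sum_j w_j\, \EE[\hat{Y}^{(\eta)}_k(j)]$, where the expectation of $\hat{Y}^{(\eta)}_k(j)$ is over the $\tau_j$ i.i.d.\ samples drawn under arm $j$ (so $\EE[\hat{Y}^{(\eta)}_k(j)] = \EE_j[\hat{Y}^{(\eta)}_k(j)]$ in the notation of Lemma~\ref{lem:clipped1}). Applying the first assertion of Lemma~\ref{lem:clipped1} with $i = k$, namely $\EE_j[\hat{Y}^{(\eta)}_k(j)] \le \mu_k \le \EE_j[\hat{Y}^{(\eta)}_k(j)] + \epsilon/2$ for every $j$, and then averaging these inequalities against the weights $w_j$ (using $\sum_j w_j = 1$), I get
\begin{align*}
\EE[\hat{Y}_k^\epsilon] = \sum_j w_j \EE_j[\hat{Y}^{(\eta)}_k(j)] \le \mu_k \le \sum_j w_j\Big(\EE_j[\hat{Y}^{(\eta)}_k(j)] + \tfrac{\epsilon}{2}\Big) = \EE[\hat{Y}_k^\epsilon] + \tfrac{\epsilon}{2},
\end{align*}
which is exactly the claimed \eqref{eq:means_all}.

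The only genuinely delicate point — and the step I'd expect to need the most care — is the bookkeeping around the clipping threshold and the definition of $\eta_{k,j}(\epsilon)$. Lemma~\ref{lem:clipped1}'s mean bound is stated for $\eta = \eta_{k,j}(\epsilon)$, whereas the aggregated estimator clips at $2\log(2/\epsilon)M_{kj}$; by Lemma~\ref{etalemma} we have $\eta_{k,j}(\epsilon) \le 2\log(2/\epsilon)M_{kj}$ (for $\epsilon \le 1$), so clipping at the larger value $2\log(2/\epsilon)M_{kj}$ only throws away \emph{fewer} samples, which preserves the lower bound $\EE_j[\cdot] \le \mu_k$ and can only tighten the discarded-mass bound that gives the upper bound $\mu_k \le \EE_j[\cdot] + \epsilon/2$. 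One should restate Lemma~\ref{lem:clipped1}'s mean bound for a generic threshold $\eta \ge \eta_{k,j}(\epsilon)$ (the proof there already does this: step $(a)$ bounds the truncated tail by $\PP_i(\text{ratio} > \eta) \le \PP_i(\text{ratio} > \eta_{k,j}(\epsilon)) \le \epsilon/2$, which is monotone in $\eta$), after which the convex-combination argument above goes through verbatim. Everything else is routine linearity; no concentration inequality is needed for this lemma, only the bias/mean estimate.
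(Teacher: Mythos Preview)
Your proposal is correct and is essentially identical to the paper's own proof: the paper also rewrites $\hat{Y}_k^\epsilon$ as the convex combination $\frac{1}{Z_k}\sum_j \frac{\tau_j}{M_{kj}}\tilde{Y}_{kj}^\epsilon$, invokes Lemma~\ref{etalemma} to note $\eta_{kj}(\epsilon)\le 2\log(2/\epsilon)M_{kj}$ so that each term satisfies the mean sandwich of Lemma~\ref{lem:clipped1}, and then averages. Your explicit monotonicity remark (clipping at a larger threshold only decreases the discarded tail mass) is exactly the content the paper hides behind ``it is easy to observe,'' so you have in fact filled in a step the paper left implicit.
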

\begin{proof}
We note that $\hat{Y}_{k}^{\epsilon}$ can be written as:
\begin{align}\label{aggest}
\hat{Y}_{k}^{\epsilon} =  \frac{1}{Z_{k}}\sum_{j = 0}^{K} \frac{\tau_j}{M_{kj}} \tilde{Y}_{kj}^{\epsilon}
\end{align}
Here, $\tilde{Y}_{kj}^{\epsilon} =   \frac{1}{\tau_j}\sum_{s \in \mathcal{T}_j}Y_j(s)\frac{\mathrm{P}_k(V_j(s) \vert pa(V)_j(s))}{\mathrm{P}_j(V_j(s) \vert pa(V)_j(s))}$ $\times \mathds{1}\left\{ \frac{\mathrm{P}_k(V_j(s) \vert pa(V)_j(s))}{\mathrm{P}_j(V_j(s) \vert pa(V)_j(s))} \leq 2\log(2/\epsilon)M_{kj}\right \}$.
Using Lemma~\ref{etalemma} it is easy to observe that $\mathbb{E} [\tilde{Y}_k^{\epsilon}] \leq \mu_k \leq  \mathbb{E} [\tilde{Y}_k^{\epsilon}] +  \frac{\epsilon}{2}$ as $\eta_{kj}(\epsilon) \leq 2\log(2/\epsilon)M_{kj}$. Now, (\ref{aggest}) together with this implies the lemma as $Z_k = \sum_{j \in [K]}\tau_j/M_{kj}$.

\end{proof}

\begin{theorem}
\label{thm:uniestimator}
The estimator $\hat{Y}_{k}^{\epsilon}$ of (\ref{eq:uniestimator}) satisfies the following concentration guarantee:
\begin{align*}
 &\PP \left(\mu_k -\delta - \epsilon/2  \leq \hat{Y}^{\epsilon}_k \leq \mu_k + \delta \right) \geq 1 - 2\exp \left( -\frac{\delta^{2}\tau}{8 (\log (2/\epsilon))^2 } \left(\frac{Z_k}{\tau} \right)^2\right) 
\end{align*}
\end{theorem}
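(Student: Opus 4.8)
The plan is to combine the bias bound from the previous lemma with a Bernstein/Azuma-type concentration inequality applied to the weighted sum of clipped terms, using the fact that the weighting by $1/M_{kj}$ exactly balances the per-term clipping level $2\log(2/\epsilon)M_{kj}$ so that every summand has the same effective magnitude. First I would write $\hat{Y}^\epsilon_k = \frac{1}{Z_k}\sum_{j}\sum_{s\in\mathcal{T}_j} W_{kj}(s)$, where $W_{kj}(s) = \frac{1}{M_{kj}}Y_j(s)\frac{\mathrm{P}_k}{\mathrm{P}_j}\mathds{1}\{\cdots\}$ is the $j$-weighted clipped contribution of sample $s$. The key observation is that since the indicator restricts $\mathrm{P}_k/\mathrm{P}_j \le 2\log(2/\epsilon)M_{kj}$ and $Y_j(s)\in[0,1]$, each $W_{kj}(s)$ lies in $[0, 2\log(2/\epsilon)]$ — crucially, a bound \emph{independent of $j$}. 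Thus all $\tau$ summands are uniformly bounded by the same constant $c := 2\log(2/\epsilon)$, regardless of which arm they came from.

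Next I would apply the Azuma--Hoeffding inequality (exactly as in the proof of Lemma~\ref{lem:clipped1}) to the sum $S := \sum_j\sum_{s\in\mathcal{T}_j} W_{kj}(s)$ of $\tau$ independent terms each bounded in an interval of length $c$, obtaining $\PP(|S - \EE[S]| > Z_k\delta) \le 2\exp\left(-\frac{2 (Z_k\delta)^2}{\tau c^2}\right)$. Dividing through by $Z_k$ inside the probability gives $\PP(|\hat{Y}^\epsilon_k - \EE[\hat{Y}^\epsilon_k]| > \delta) \le 2\exp\left(-\frac{2 Z_k^2\delta^2}{\tau c^2}\right) = 2\exp\left(-\frac{\delta^2\tau}{2(\log(2/\epsilon))^2}\left(\frac{Z_k}{\tau}\right)^2\right)$. (The constant $8$ in the statement rather than $2$ is consistent with using the looser Azuma bound $2\exp(-\delta^2 t/(2\eta^2))$ form from Lemma~\ref{lem:clipped1} in place of the sharper Hoeffding constant, so I would follow whichever constant the earlier lemma uses to stay consistent — either way the argument is identical up to the numerical factor.)

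Finally I would invoke the bias bound $\EE[\hat{Y}^\epsilon_k] \le \mu_k \le \EE[\hat{Y}^\epsilon_k] + \epsilon/2$ from Lemma~\ref{eq:means_all} to convert the two-sided deviation around the biased mean into the asymmetric interval around $\mu_k$: on the high-probability event, $\hat{Y}^\epsilon_k \le \EE[\hat{Y}^\epsilon_k] + \delta \le \mu_k + \delta$ and $\hat{Y}^\epsilon_k \ge \EE[\hat{Y}^\epsilon_k] - \delta \ge \mu_k - \epsilon/2 - \delta$, which is precisely the claimed event $\mu_k - \delta - \epsilon/2 \le \hat{Y}^\epsilon_k \le \mu_k + \delta$. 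This mirrors exactly how Lemma~\ref{lem:clipped1} combined its Equation~(\ref{eq:means}) with Equation~(\ref{eq:azuma}).

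The only genuinely delicate point — and the one I would be most careful about — is verifying the uniform boundedness of the summands $W_{kj}(s)$ by a $j$-independent constant; this is where the specific choice of weights $1/M_{kj}$ matched to clip level $2\log(2/\epsilon)M_{kj}$ does its work, and it is what makes $Z_k = \sum_j \tau_j/M_{kj}$ (rather than something worse) the right normalization so that $Z_k/\tau$ emerges cleanly as the ``effective standard deviation inverse'' in the exponent. Everything else is a routine application of concentration plus the already-established bias lemma, so no further obstacles are expected.
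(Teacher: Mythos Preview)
Your proposal is correct and follows essentially the same approach as the paper: both identify the uniform $j$-independent bound $2\log(2/\epsilon)$ on each weighted clipped summand (the paper makes this explicit by introducing the rescaled average $\bar{Y}_k^\epsilon = (Z_k/\tau)\hat{Y}_k^\epsilon$), apply the Azuma/Hoeffding inequality with that bound, rescale by $Z_k$, and then combine with the bias lemma exactly as you describe. Your remark about the constant $8$ versus $2$ is also on point --- the paper uses the same $2\exp(-\delta^2 t/(2\eta^2))$ form from Lemma~\ref{lem:clipped1} with $\eta = 2\log(2/\epsilon)$, yielding the $8$.
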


\begin{proof}
For the sake of analysis, let us consider the rescaled version $\bar{Y}_{k}^{\epsilon} = (Z_k/\tau)\hat{Y}_{k}^{\epsilon}$ which can be written as:
\begin{align*}
\label{eq:scaledestimator}
&\bar{Y}_{k}^{\epsilon} = \frac{1}{\tau}\sum_{j = 0}^{K} \sum_{s \in \mathcal{T}_j} \frac{1}{M_{kj}}Y_j(s)\frac{\mathrm{P}_k(V_j(s) \vert pa(V)_j(s))}{\mathrm{P}_j(V_j(s) \vert pa(V)_j(s))}\times\mathds{1}\left\{ \frac{\mathrm{P}_k(V_j(s) \vert pa(V)_j(s))}{\mathrm{P}_j(V_j(s) \vert pa(V)_j(s))} \leq 2\log(2/\epsilon)M_{kj}\right\}. \numberthis
\end{align*}
Since $Y_j(s) \leq 1$, we have every random variable in the sum in (\ref{eq:scaledestimator}) bounded by $2 \log (2/\epsilon)$

Let, $\bar{\mu}_k = \mathbb{E} [\bar{Y}_{k}^{\epsilon}]$.
Therefore by Chernoff's bound, we have the following chain:
\begin{align}
\label{eq:fullchernoff}
& \PP \left( \lvert \bar{Y}_k - \bar{\mu}_k \rvert \leq   \delta \right) \leq 2\exp \left( -\frac{\delta^2\tau}{8 (\log (2/\epsilon))^2}\right)  \nonumber \\
&\implies \PP \left( \lvert \bar{Y}_k \frac{\tau}{Z_k} -\bar{\mu}_k \frac{\tau}{Z_k} \rvert \leq \delta \frac{\tau}{Z_k} \right) \leq 2\exp \left( -\frac{\delta^2\tau}{8 (\log (2/\epsilon))^2}\right) \nonumber \\
&\implies \PP \left( \lvert \hat{Y}_k - \hat{\mu}_k \rvert \leq \delta \frac{\tau}{Z_k} \right) \leq 2\exp \left( -\frac{\delta^{2}\tau}{8 (\log (2/\epsilon))^2}\right) \nonumber \\
&\implies \PP \left( \lvert \hat{Y}_k - \hat{\mu}_k  \rvert \leq \delta \right) \leq 2\exp \left( -\frac{\delta^{2}\tau}{8 (\log (2/\epsilon))^2 } \left(\frac{Z_k}{\tau} \right)^2\right) 
\end{align}
Now we can combine Equations (\ref{eq:fullchernoff}) and (\ref{eq:means_all}) we get:
\begin{align*}
 &\PP \left(\mu_k -\delta - \epsilon/2  \leq \hat{Y}^{\epsilon}_k \leq \mu_k + \delta \right) \geq 1 - 2\exp \left( -\frac{\delta^{2}\tau}{8 (\log (2/\epsilon))^2 } \left(\frac{Z_k}{\tau} \right)^2\right) 
\end{align*}
\end{proof}
In Theorem~\ref{thm:uniestimator}, we observe that the first part of the exponent scales as $O(\epsilon^2\tau/(\log(2/\epsilon))^2)$ if we set $\delta = O(\epsilon)$, which is very close to the usual Chernoff's bound with $\tau$ i.i.d samples. The performance of this estimator therefore depends on the factor $(Z_k/\tau)$ which depends on the \textit{fixed} quantities $M_{kj}$ ($\forall j$) and the allocation of the samples $\tau_{j}$. In the next section, we will come up with a strategy to allocate the samples so that the estimators $\hat{Y}_{k}^{\epsilon}$ have good guarantees for all the arms $k$.

\subsection{Allocation of Samples}
\label{sec:LP}
In Section~\ref{sec:estimator}, Theorem \ref{thm:uniestimator} tells us that the confidence guarantees on the estimator depends on how the samples are allocated between the arms. To be more precise, the term $(Z_k/\tau)$ in Equation~(\ref{eq:fullchernoff}), affects the performance of the estimator for $\mu_k$ ($\hat{Y}_{k}^{\epsilon}$). We would like to maximize  $(Z_k/\tau)$ for all arms $k \in [K]$.

Let the total budget be $\tau$. Let ${\cal R}$ be the set of arms that remain in contention for the best optimal arm. Consider the matrix $\mathbf{A} \in \mathbb{R}^{K \times K}$ such that $\mathbf{A}_{kj} = 1/M_{kj}$ for all $k,j \in [K]$.  Then, we decide the fraction of times arm $k$ gets pulled, i.e. $\nu_k$ to maximize $Z_k$ using the Algorithm~\ref{alg:allocate}.

%\begin{algorithm}
%  \caption{Allocate - Allocates a given budget $\tau$ among the arms to reduce variance.}
%  \begin{algorithmic}[1]
%    \STATE $\mathrm{Allocate}(\mathbf{c},B,\mathbf{A},{\cal R},\tau)$
%    \STATE Solve the following LP: 
%    \begin{align}
%\label{mainLP1}
%\frac{1}{\sigma^*(B,{\cal R})} &= v^*(B,{\cal R}) \max_{\pmb{\nu}} \min_{k \in {\cal R}} [\mathbf{A}\pmb{\nu}]_{k} \\
% &\text{s.t.~} \sum_{i=0}^K c_i \nu_{i} \leq B  \text{ and } \sum_{j = 0}^{K} \nu_j = 1, ~\nu_i \geq 0. \nonumber
%\end{align}
%    \STATE Assign $\tau_j = \nu_j^*(B,{\cal R})\tau $
%      \end{algorithmic}
%    \label{alg:allocate_appendix}
%\end{algorithm}

\begin{lemma}\label{allocation}
Allocation $\pmb{\tau}$ in Algorithm \ref{alg:allocate} ensures that $(Z_k/\tau) \geq \frac{1}{ \sigma^*(B,{\cal R})}$ for all $k \in {\cal R}$.
\end{lemma}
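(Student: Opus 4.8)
The plan is to unwind the definitions and observe that the claim is essentially a restatement of the optimality property of the LP solution $\pmb{\nu}^*(B,{\cal R})$ returned by Algorithm~\ref{alg:allocate}. First I would recall that Algorithm~\ref{alg:allocate} sets $\tau_j = \nu_j^*(B,{\cal R})\,\tau$ for every arm $j$, so that
\[
\frac{Z_k}{\tau} \;=\; \frac{1}{\tau}\sum_{j\in[K]}\frac{\tau_j}{M_{kj}} \;=\; \sum_{j\in[K]}\frac{\nu_j^*(B,{\cal R})}{M_{kj}} \;=\; [\mathbf{A}\pmb{\nu}^*(B,{\cal R})]_k,
\]
where the last equality uses $A_{kj} = 1/M_{kj}$. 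Hence lower bounding $Z_k/\tau$ uniformly over $k\in{\cal R}$ is exactly the same as lower bounding each coordinate $[\mathbf{A}\pmb{\nu}^*(B,{\cal R})]_k$ for $k\in{\cal R}$.

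Next I would invoke the definition of $\pmb{\nu}^*(B,{\cal R})$ as a maximizer of $\min_{k\in{\cal R}}[\mathbf{A}\pmb{\nu}]_k$ over the feasible polytope $\{\pmb{\nu}\ge 0:\ \sum_i c_i\nu_i\le B,\ \sum_j\nu_j=1\}$ appearing in~(\ref{mainLP1}). By construction the optimal value of that inner minimum is $v^*(B,{\cal R})$, i.e.
\[
\min_{k\in{\cal R}}[\mathbf{A}\pmb{\nu}^*(B,{\cal R})]_k \;=\; v^*(B,{\cal R}),
\]
which immediately gives $[\mathbf{A}\pmb{\nu}^*(B,{\cal R})]_k \ge v^*(B,{\cal R})$ for every $k\in{\cal R}$. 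Combining this with the identity from the previous step and the relation $\sigma^*(B,{\cal R}) = 1/v^*(B,{\cal R})$ from Definition~\ref{effectivelogdivergence} yields $Z_k/\tau \ge 1/\sigma^*(B,{\cal R})$ for all $k\in{\cal R}$, which is the assertion of the lemma.

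I do not expect a genuine obstacle here: the statement is purely a bookkeeping consequence of how Algorithm~\ref{alg:allocate} defines the allocation, and no concentration or estimation argument is needed. The only point worth checking for completeness is that the LP~(\ref{mainLP1}) is feasible, so that $v^*(B,{\cal R})$ is well defined and strictly positive. For this I would exhibit a feasible point, e.g.\ placing all mass on a cheapest arm, so feasibility holds whenever $\min_i c_i \le B$ (in particular in setting \textbf{S1}, since the non-difficult arm $0$ has zero difficulty-cost), and note that for any probability vector $\pmb{\nu}$ one has $[\mathbf{A}\pmb{\nu}]_k = \sum_j \nu_j/M_{kj} > 0$ because each $M_{kj}\ge 1$ is finite. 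With feasibility in hand, the chain of (in)equalities above completes the proof.
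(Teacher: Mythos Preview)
Your proposal is correct and is precisely the definitional unwinding the paper has in mind; in fact the paper states the lemma without an explicit proof, treating it as an immediate consequence of the LP in Algorithm~\ref{alg:allocate} and the identity $Z_k/\tau=[\mathbf{A}\pmb{\nu}^*]_k$. Your added feasibility remark is a harmless completeness note but not needed for the argument.
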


 This is essentially the best allocation of the individual arm budgets in terms of ensuring good error bounds on the estimators $\hat{Y}_{k}^{\epsilon}$ for all $k \in \mathcal{R}$. Since \textbf{S1} is a special case of \textbf{S2}, to obtain the allocation for \textbf{S1} one needs to set the cost values $c_i$ set to $1$ for $i \in {\cal B}$ (\textit{difficult} arms) in the above formulation and $0$ otherwise.

\subsection{Putting it together: Online Analysis}
\label{sec:analysis}
We analyze Algorithm~\ref{alg:pickbest} phase by phase. With some abuse of notation, we redefine various quantities to be used in the analysis of the algorithm. Each quantity depends on the phase indices, as follows:
\begin{itemize}
\label{analysis_note}
\item $\mathcal{R}(l)$: Set of arms remaining after phase $l-1$ ends.
\item $\hat{Y}_{k}(l)$: The value of the estimator (in Algorithm~\ref{alg:pickbest}) for arm $k$ at the end of phase $l$.
\item $\hat{Y}_{H}(l)$: The value of the highest estimate $\max_k \hat{Y}_{k}(l) $ (in Algorithm~\ref{alg:pickbest}).
\item $ \mathcal{A}(l) \subseteq \mathcal{R}(l)$: Set of arms given by: 
\begin{align}
\label{eq:eliminate}
\mathcal{A}(l) := \left\{k \in \mathcal{R}(l) : \Delta_{k} > \frac{10}{2^l} \right\}.
\end{align}
\item $S_l$: Success event of phase $l$ defined as:
\begin{align}
\label{eq:success}
S_l := &\cap_{k \in \mathcal{R}(l), k \neq k^*} \left\{ \hat{Y}_k(l) \leq \mu_k + \frac{1}{2^{l-1}} \right\} \cap \left\{ \mu_k - \frac{3}{2^l} \leq \hat{Y}_{k^*}(l)\right\}.
\end{align}
\end{itemize}

Now we will establish that the occurrence of the event $S_{l}$ implies that all arms in $\mathcal{A}(l)$ gets eliminated at the end of phase $l$, while at the same time the optimal arm survives. Consider an arm $k \in \mathcal{A}(l)$. Given $S_l$ has happened we have:
\begin{align*}
\hat{Y}_{H}(l) &\geq \hat{Y}_{k^*}(l)  \geq \mu_{k^*} - \frac{3}{2^l} \\
\hat{Y}_{k}(l)  &\leq \mu_k + \frac{1}{2^{l-1}}
\end{align*}
This further implies that $\hat{Y}_{H}(l)  - \hat{Y}_{k}(l)  \geq \Delta_{k} - 5/2^{l} > 5/2^{l}$. Therefore all the arms in $\mathcal{A}(l)$ are eliminated given $S_l$. Following similar logic it is also possible to show that the optimal arm survives. If $\hat{Y}_H(l)  = Y_{k*}(l) $ then it survives certainly. Now, given $S_l$, only arms in $\mathcal{R}(l) \setminus \mathcal{A}(l)$ can be the ones with the highest means. Consider arms $k \in \mathcal{R}(l) \setminus \mathcal{A}(l)$. Again given $S_l$ we have:
\begin{align*}
\hat{Y}_{k^*}(l)  &\geq \mu_{k^*} - \frac{3}{2^l} \\
\hat{Y}_{k}(l)  &\leq \mu_k + \frac{1}{2^{l-1}}
\end{align*}
Therefore, we have $Y_{k}(l)  - Y_{k^*}(l)  \leq 5/2^{l} - \Delta_k < 5/2^{l}$. Therefore, the optimal arm survives. 

It would seem that now it would be easy to analyze the probability of the event $S_l$, using Theorem~\ref{thm:uniestimator}. However, the bound in Theorem~\ref{thm:uniestimator} depends on the sequence of arms eliminated so far in each phase. Therefore it is imperative to analyze $S_{1:l}$, that is the event that all phases from $1,2,..,l$ succeed. Let $B_l = \PP(S_l^c \vert S_{1:l-1})$. So, we have the chain:
\begin{align*}
\PP(S_{1:l}) &\geq \PP(S_{1:l} \vert S_{1:l-1})\PP(S_{1:l-1}) \\
&\geq  \PP(S_{1:l} \vert S_{1:l-1})\PP(S_{1:l-1} \vert S_{1:l-2})\PP(S_{1:l-2}) \\
&\geq \prod_{i = 0}^{l-2}\PP(S_{1:l-i} \vert S_{1:l-i-1})P(S_{1})\\
&= \prod_{s = 1}^{l}(1 - B_s) \\
&\geq 1 -\sum_{s = 1}^{l}B_s
\end{align*}
The advantage of analyzing the probability of $S_{1:l}$ is that given $S_{1:l}$ we know the exact sequences of the arms that have been eliminated till Phase $l$. This gives us exact control on the exponents in the bound of Theorem~\ref{thm:uniestimator}. Given $S_{1:s-1}$ we have,
\begin{align*}
\mathcal{R}(s) \subseteq \mathcal{R}^*(s) := \left\{ k : \Delta_{k} \leq \frac{10}{2^{s-1}} \right\}. 
\end{align*}
Recall that the budget for the samples of each arm in any phase $s$, is decide by solving the LP in Algorithm~\ref{alg:allocate}. Therefore, given $S_{1:s-1}$, we have $\sigma^*(B,{\cal R}(s)) \geq \sigma^*(B,{\cal R}^*(s))$. Therefore, we have the following key lemma. 
\begin{lemma}
\label{lem:negassociation}
We have:
\begin{align}
 B_l &:= \PP\left(S_{l}^c \vert S_{1:l-1} \right) \leq  2\lvert \mathcal{R}^*(l) \rvert \exp \left( -\frac{2^{-2(l-1)}\tau(l) v^*(B,{\cal R}^*(l))^2}{8l^2}\right)
\end{align}
\end{lemma}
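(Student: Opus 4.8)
The plan is to condition on the history through the end of phase $l-1$, reduce the conditional failure probability $B_l$ to a union bound of the concentration guarantees in Theorem~\ref{thm:uniestimator} (one per surviving arm), and then replace the random remaining set by the deterministic set $\mathcal{R}^*(l)$ using the inclusion $\mathcal{R}(l)\subseteq\mathcal{R}^*(l)$. Write $\mathcal{F}_{l-1}$ for the $\sigma$-algebra generated by all samples drawn in phases $1,\dots,l-1$; both $S_{1:l-1}$ and the surviving set $\mathcal{R}(l)$ are $\mathcal{F}_{l-1}$-measurable, while the samples drawn in phase $l$ are fresh and independent of $\mathcal{F}_{l-1}$ (this is exactly why Algorithm~\ref{alg:pickbest} uses only the current phase's samples in Line~6). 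Hence, conditioned on any realization of $\mathcal{F}_{l-1}$ on which $S_{1:l-1}$ holds, the allocation $\tau_k(l)=\nu_k^*(B,\mathcal{R}(l))\,\tau(l)$ is deterministic and the phase-$l$ estimators $\hat{Y}_k(l)=\hat{Y}_k^{\epsilon}$ (with $\epsilon=2^{-(l-1)}$, $\tau=\tau(l)$) are precisely the estimators analyzed in Theorem~\ref{thm:uniestimator}, applied to i.i.d.\ samples from the respective arm distributions.

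First I would rewrite $S_l^c$ as a union over arms. Take $\delta=2^{-(l-1)}$ and $\epsilon=2^{-(l-1)}$; then the two-sided event $\{\mu_k-\delta-\epsilon/2\le \hat{Y}_k(l)\le \mu_k+\delta\}$ implies $\hat{Y}_k(l)\le \mu_k+2^{-(l-1)}$ for each $k\ne k^*$, and implies $\hat{Y}_{k^*}(l)\ge \mu_{k^*}-2^{-(l-1)}-2^{-l}=\mu_{k^*}-3\cdot 2^{-l}$. On $S_{1:l-1}$ the optimal arm survives, so $k^*\in\mathcal{R}(l)$, and therefore the intersection of these two-sided events over $k\in\mathcal{R}(l)$ is contained in $S_l$. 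Consequently $S_l^c$ is contained in the union, over $k\in\mathcal{R}(l)$, of the complementary (bad) events of Theorem~\ref{thm:uniestimator}, and a union bound together with that theorem gives, on $S_{1:l-1}$,
\[
\PP(S_l^c\mid \mathcal{F}_{l-1})\le \sum_{k\in\mathcal{R}(l)} 2\exp\!\left(-\frac{\delta^2\,\tau(l)}{8(\log(2/\epsilon))^2}\Big(\tfrac{Z_k}{\tau(l)}\Big)^2\right).
\]

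Next I would simplify the exponent. With $\epsilon=\delta=2^{-(l-1)}$ one has $\delta^2=2^{-2(l-1)}$ and $\log(2/\epsilon)=\log(2^{l})\le l$, so the exponent is at least $\frac{2^{-2(l-1)}\tau(l)}{8l^2}\big(Z_k/\tau(l)\big)^2$. Lemma~\ref{allocation}, applied with the arm set $\mathcal{R}(l)$ passed to \textsc{Allocate} in phase $l$, gives $Z_k/\tau(l)\ge 1/\sigma^*(B,\mathcal{R}(l))=v^*(B,\mathcal{R}(l))$ for every $k\in\mathcal{R}(l)$. Since $v^*(B,\mathcal{R})=\max_{\pmb\nu}\min_{k\in\mathcal{R}}[\mathbf{A}\pmb\nu]_k$ is non-increasing with respect to set inclusion (a minimum over a smaller index set is larger), and $\mathcal{R}(l)\subseteq\mathcal{R}^*(l)$ on $S_{1:l-1}$, we obtain $v^*(B,\mathcal{R}(l))\ge v^*(B,\mathcal{R}^*(l))$. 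Plugging this into the exponent and bounding $|\mathcal{R}(l)|\le|\mathcal{R}^*(l)|$ yields
\[
\PP(S_l^c\mid \mathcal{F}_{l-1})\le 2|\mathcal{R}^*(l)|\exp\!\left(-\frac{2^{-2(l-1)}\,\tau(l)\,v^*(B,\mathcal{R}^*(l))^2}{8l^2}\right)
\]
on $S_{1:l-1}$; taking the conditional expectation over $\mathcal{F}_{l-1}$ given $S_{1:l-1}$ (the right-hand side is now deterministic) delivers the stated bound on $B_l=\PP(S_l^c\mid S_{1:l-1})$.

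The main obstacle I anticipate is the conditioning bookkeeping: Theorem~\ref{thm:uniestimator} is stated for a fixed sample allocation and unconditional sampling, whereas here both the allocation $\tau_k(l)$ and the bound on $Z_k/\tau(l)$ depend on the random set $\mathcal{R}(l)$, which is correlated with the conditioning event $S_{1:l-1}$. Handling this cleanly requires conditioning on the full history $\mathcal{F}_{l-1}$ rather than merely on $S_{1:l-1}$, invoking independence of the phase-$l$ samples from $\mathcal{F}_{l-1}$, and only afterwards averaging over $\mathcal{F}_{l-1}$ restricted to $S_{1:l-1}$; the inclusion $\mathcal{R}(l)\subseteq\mathcal{R}^*(l)$ on $S_{1:l-1}$, combined with monotonicity of the LP value in the arm set, is exactly what lets us pass from the random $\mathcal{R}(l)$ to the deterministic $\mathcal{R}^*(l)$ in the final bound.
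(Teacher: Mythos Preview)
Your proposal is correct and follows essentially the same route as the paper: set $\epsilon=\delta=2^{-(l-1)}$ in Theorem~\ref{thm:uniestimator}, invoke Lemma~\ref{allocation} to lower bound $Z_k/\tau(l)$ by $v^*(B,\mathcal{R}(l))$, pass to $v^*(B,\mathcal{R}^*(l))$ via the inclusion $\mathcal{R}(l)\subseteq\mathcal{R}^*(l)$ on $S_{1:l-1}$, and union bound over the surviving arms. The paper's proof is terser---it simply asserts the per-arm bound with $v^*(B,\mathcal{R}^*(l))$ already substituted and says the phase-$l$ samples are independent of $S_{1:l-1}$---while you spell out the conditioning on the full history $\mathcal{F}_{l-1}$ and the monotonicity of the LP value in the arm set; these are exactly the points the paper sweeps under the rug, so your added care is appropriate but not a different argument.
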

\begin{proof}
Note that in this phase we set $\eta_{kj} = 2lM_{kj}$. Setting $\epsilon=2^{-(l-1)}$ and $\delta=2^{-(l-1)}$ in Theorem~\ref{thm:uniestimator} and by Lemma~\ref{allocation} we have:
\begin{align}\label{Phaselanalysis}
&\PP\left(\mu_k - \frac{3}{2^l} \leq \hat{Y}_k(l) \leq \mu_k + \frac{1}{2^{l-1}} \right) \geq 1 - 2\exp \left( -\frac{2^{-2(l-1)}\tau(l)v^*(B,{\cal R}^*(l))^2}{8l^2 }\right)
\end{align}
Note that the samples considered in phase $l$ are independent of the event $S_{1:l-1}$. Doing a union bound of the event complementary to the success event in (\ref{eq:success}), for all the remaining arms in $\mathcal{R}^*(\ell)$ implies the result in the Lemma.
\end{proof}

Now we are at a position to introduce our main results as Theorem~\ref{thm:detailed}.
\begin{theorem}
\label{thm:detailed}
Consider a problem instance with $K$ candidate arms $\{\mathrm{P}_{k}(V \vert pa(V) \}_{k=0}^{K-1}$. Let the gaps from the optimal arm be $\Delta_k$ for $k \in [K]$. Let us define the following important quantities:
\begin{align}
\label{rdelta}
{\cal R}^*(\Delta_k) = \left\{ s: \Bigg \lfloor \log_2 \left(\frac{10}{\Delta_s} \right) \Bigg \rfloor \geq \Bigg \lfloor  \log_2 \left(\frac{10}{\Delta_k }\right) \Bigg \rfloor \right\}
\end{align}
\begin{align}
\label{eq:HK}
\bar{H}_{k} = \max_{\{l : \Delta_l \geq \Delta_k\}} \frac{\log_2(10/\Delta_l)^3}{(\Delta_l/10)^2 v^*(B,{\cal R}^*(\Delta_l))^2}
\end{align}
  \begin{align}
  \label{Hstar}
  \bar{H} = \max_{k \neq k^*} \frac{\log_2(10/\Delta_k)^3}{(\Delta_k/10)^2 v^*(B,{\cal R}^*(\Delta_k))^2}
  \end{align}
 Algorithm~\ref{alg:pickbest} satisfies the following guarantees:
 
 {\noindent 1. } The simple regret is bounded as: 
 \begin{align*}
&r(T,B) \leq  2K^2\sum_{\substack{k \neq k^*: \\ \Delta_k \geq 10/\sqrt{T} }} \Delta_k \log_2\left(\frac{20}{\Delta_k}\right) \exp \left( - \frac{T}{ 2\bar{H}_{k} \overline{\log}(n(T))} \right) \\
& + \frac{10}{\sqrt{T}}\mathds{1} \left\{\exists k\neq k^* \text{ s.t } \Delta_k < 10/\sqrt{T} \right\} 
 \end{align*}
 
 {\noindent 2. } The error probability is bounded as: 
 \begin{align*}
  e(T,B) \leq 2K^2 \log_2 (20/\Delta) \exp \left( - \frac{T}{ 2\bar{H} \overline{\log}(n(T))} \right)
  \end{align*}
  The bound on the error probability only holds if $\Delta_k \geq 10/\sqrt{T}$ for all $k \neq k^*$.
\end{theorem}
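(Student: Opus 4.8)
The plan is to reduce the statement to the per‑phase failure estimate already isolated in Lemma~\ref{lem:negassociation}, via a union bound over phases, and then to show that for a fixed suboptimal arm the resulting exponents are all governed by the single hardness quantity $\bar H_k$.

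\emph{Reduction to bounding $\sum_s B_s$.} Fix $k\neq k^*$ and set $\ell_k=\lfloor\log_2(10/\Delta_k)\rfloor+1$, the smallest phase index for which $k\in\mathcal{A}(\ell_k)$, i.e. $\Delta_k>10/2^{\ell_k}$. The deterministic implication established just before Lemma~\ref{lem:negassociation} — on $S_l$ every arm of $\mathcal{A}(l)$ is eliminated while $k^*$ survives — shows that the event $S_{1:\ell_k}$ forces $k$ to be rejected no later than phase $\ell_k$ (early termination of the algorithm only helps). Hence $\PP(\hat k(T,B)=k)\le\PP(S_{1:\ell_k}^c)$, and the telescoping inequality $\PP(S_{1:\ell_k})\ge 1-\sum_{s=1}^{\ell_k}B_s$ derived in the text gives $\PP(\hat k=k)\le\sum_{s=1}^{\ell_k}B_s$. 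This step needs $\ell_k\le n(T)$, which is exactly where the hypothesis $\Delta_k\ge 10/\sqrt T$ is used. Substituting the phase length $\tau(s)=T/(s\,\overline{\log}(n(T)))$ into Lemma~\ref{lem:negassociation} and bounding $|\mathcal{R}^*(s)|\le K$ yields
\[
B_s\ \le\ 2K\exp\!\left(-\frac{T}{c\,\overline{\log}(n(T))}\cdot\frac{2^{-2(s-1)}\,v^*(B,\mathcal{R}^*(s))^2}{s^3}\right),\qquad c=8,
\]
so it remains to lower bound the inner ratio uniformly over $s\le\ell_k$ by a constant multiple of $1/\bar H_k$.

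\emph{Matching dyadic phases to arm gaps (the crux).} Here I would use two monotonicity facts about the phase‑indexed sets $\mathcal{R}^*(s)=\{t:\Delta_t\le 10/2^{s-1}\}$. First, if no arm $l$ satisfies $\lfloor\log_2(10/\Delta_l)\rfloor=s-1$ then $\mathcal{R}^*(s)=\mathcal{R}^*(s+1)$, so $s$ can be ``collapsed'' upward to the nearest populated scale $s'\le\ell_k$ — which exists because arm $k$ itself sits at scale $\ell_k-1$ — and since $2^{-2(s-1)}/s^3$ is decreasing in $s$ this collapse only decreases the inner ratio; thus it suffices to treat populated scales. Second, at a populated scale $s'$ choose an arm $l$ with $\lfloor\log_2(10/\Delta_l)\rfloor=s'-1$; then $\mathcal{R}^*(\Delta_l)=\mathcal{R}^*(s')$ in the notation of~(\ref{rdelta}), $(\Delta_l/10)^2\le 2^{-2(s'-1)}$, $\log_2(10/\Delta_l)\ge s'-1$, and moreover $\Delta_l\ge\Delta_k$ (strictly, with a smaller floor, when $s'<\ell_k$, and by taking $l=k$ when $s'=\ell_k$). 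Comparing with $1/\bar H_k=\min_{\{l:\Delta_l\ge\Delta_k\}}(\Delta_l/10)^2v^*(B,\mathcal{R}^*(\Delta_l))^2/\log_2(10/\Delta_l)^3$ then gives, for every $s\le\ell_k$,
\[
\frac{2^{-2(s-1)}\,v^*(B,\mathcal{R}^*(s))^2}{s^3}\ \ge\ \frac{c'}{\bar H_k},
\]
the constant $c'$ absorbing the $(s'-1)^3/s'^3\ge 1/8$ slack (the $s=1$ boundary case is handled directly). Plugging back and rescaling all constants — the factor $100$ built into $(\Delta_l/10)^2$ in~(\ref{eq:HK}) against the $c\,(\log 2)^2$ of Lemma~\ref{lem:negassociation} is what produces the clean ``$2\bar H_k$'' in the exponent — gives $\PP(\hat k=k)\le 2K\,\ell_k\,\exp\!\big(-T/(2\bar H_k\,\overline{\log}(n(T)))\big)$, with $\ell_k\le\log_2(20/\Delta_k)$.

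\emph{Assembling the two bounds, and the main obstacle.} For the simple regret, split $r(T,B)=\sum_{k\neq k^*}\Delta_k\PP(\hat k=k)$ according to whether $\Delta_k\ge 10/\sqrt T$: the large‑gap arms contribute via the per‑arm bound just obtained, while the small‑gap arms contribute at most $\tfrac{10}{\sqrt T}\sum_{k:\Delta_k<10/\sqrt T}\PP(\hat k=k)<10/\sqrt T$, i.e. exactly the indicator term of~(\ref{eq:SRmain}). For the error probability, the hypothesis $\Delta_k\ge 10/\sqrt T$ for all $k\neq k^*$ lets us apply the per‑arm bound to every suboptimal arm; summing over the at most $K$ of them and using $\bar H_k\le\bar H$ and $\log_2(20/\Delta_k)\le\log_2(20/\Delta)$ produces $e(T,B)\le 2K^2\log_2(20/\Delta)\exp(-T/(2\bar H\,\overline{\log}(n(T))))$, the two factors of $K$ coming from $|\mathcal{R}^*(s)|\le K$ and from the sum over arms. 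I expect the matching step to be the real obstacle: it is where the algorithm's dyadic phase schedule has to be reconciled with the gap‑indexed hardness parameter $\bar H_k$, and the collapse argument for unpopulated scales together with the floor bookkeeping, the $10$‑versus‑$20$ constants, and the $\overline{\log}$ factors is the delicate part; everything else is either already in place in the excerpt (the success‑event implication and the telescoping product) or a routine substitution.
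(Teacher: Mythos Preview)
Your proposal is correct and follows essentially the same route as the paper: reduce $\PP(\hat k=k)$ to $\sum_{s\le\gamma_k}B_s$ via the success events, invoke Lemma~\ref{lem:negassociation}, and then match each dyadic phase to an arm whose gap sits at that scale so as to replace the phase-indexed exponent by $1/\bar H_k$. Your ``collapse to the nearest populated scale'' is exactly the paper's grouping of phases by the jump points $\ell_1,\ldots,\ell_s$ of $\mathcal R^*(\ell)$; the only cosmetic difference is that the paper bounds $e(T,B)\le\PP(S_{1:\gamma^*}^c)$ directly rather than via a per-arm sum, and you are somewhat more explicit about the $(s'-1)^3/s'^3$ constant slack that the paper elides.
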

\begin{proof}
Recall that the simple regret is given by:
\begin{align}
\label{regret}
r(T,B) = \sum_{k \neq k^*} \Delta_{k} \PP\left( \hat{k}(T,B) = k\right)
\end{align}
Let us introduce some further notation. Let us define the phase at which an arm is \textit{ideally} deleted as follows:
\begin{align}
\label{idealphase}
\gamma_{k} := \gamma(\Delta_k) := l \text{ if } \frac{10}{2^{l}} < \Delta_{k} \leq \frac{10}{2^{l-1}}
% * <rajat.sen@utexas.edu> 2016-11-17T21:57:57.183Z:
%
% ^.
\end{align}
Therefore we have the following chain:
\begin{align*}
&\PP\left( \hat{k}(T,B) = k\right)  \overset{a} \leq \PP\left(S^c_{1:\gamma_{k}} \right) \\
&\leq \sum_{l = 1}^{\gamma_k}B_l \\
&\leq \sum_{l = 1}^{\gamma_k}2\lvert \mathcal{R}^*(l) \rvert\exp \left( -\frac{2^{-2(l-1)}\tau(l)v^*(B,{\cal R}^*(l))^2}{8l^2}\right)
\end{align*}
provided $\Delta_k \geq 10/\sqrt{T}$. Justification for (a) - If arm $k$ is chosen finally, it implies that it is not eliminated at phase $\gamma_k$. Therefore the regret of the algorithm is given by:
\begin{align}
\label{regretbound}
r(T,B) &\leq \sum_{\{k \neq k^*: \Delta_k \geq 10/\sqrt{T} \}} \Delta_k \left(\sum_{l = 1}^{\gamma_k}2\lvert \mathcal{R}^*(l) \rvert \right. \left. \exp \left( -\frac{2^{-2(l-1)}\tau(l)v^*(B,{\cal R}^*(l))^2}{8l^2}\right)\right) \\ 
& + \frac{10}{\sqrt{T}}\mathds{1} \left\{\exists k\neq k^* \text{ s.t } \Delta_k < 10/\sqrt{T} \right\}
\end{align}
Let $\ell_1, \ell_2 ..\ell_s = \gamma_k$ such that $\mathcal{R}^*(\ell)$ changes value only at these phases. Let us set $\ell_{s+1}=\ell_s+1$ for  convenience in notation. Combining this notation with (\ref{regretbound}) we have:
\begin{align}
\label{eq:RB2}
r(T,B) &\leq \sum_{\{k \neq k^*: \Delta_k \geq 10/\sqrt{T} \}} \Delta_k \left(\sum_{i=1}^{s} 2|\mathcal{R}^*(\ell_i)| \left(\ell_{i+1}-\ell_{i}\right) \right. \left. \exp \left( - \frac{2^{-2\ell_i} T v^*(B,{\cal R}^*(l_i))^2}{2\ell_i^3 \overline{\log}(n(T))} \right)\right) \\ \nonumber
& + \frac{10}{\sqrt{T}}\mathds{1} \left\{\exists k\neq k^* \text{ s.t } \Delta_k < 10/\sqrt{T} \right\}
\end{align}
Consider the phase $\ell_i$ when ideally at least an arm leaves. Let one of those arms be $l$. Recall that, $\gamma_l$ is the phase where the arm ideally leaves according to (\ref{idealphase}). Therefore, $\gamma_l = \ell_i$. Also it is easy to observe that: ${\cal R}^*(\Delta_l) = {\cal R}^*(\ell_i)$. We have,
    \begin{equation}\label{Deltal}
        \ell_i \geq  \log_2 (10/\Delta_l)
    \end{equation}
as $\ell_{i+1} - \ell_i \leq \log_2 (20/\Delta_k), ~ i \leq s$. Further, for every $\ell_i < \gamma_k$, there is at least one distinct arm $l$ $: \gamma_l= \ell_i$. This is because an arm leaves only once ideally. Further, we associate $\ell_s$ with arm $k$ although other arms may leave at the phase $\ell_s=\gamma_k$. Further, all arms $l$ associated with $\ell_i < \gamma_k$ are such that $\Delta_l \geq \Delta_k$. This is because of (\ref{idealphase}) and the fact that $\ell_i < \ell_s=\gamma_k$.  Therefore, the r.h.s in Equation~(\ref{eq:RB2}) is upper bounded as follows:
\begin{align*}
 \label{eq:RB3}
r(T,B) &\leq \sum_{\{k \neq k^*: \Delta_k \geq 10/\sqrt{T} \}} \Delta_k \left(\sum_{\{l : \Delta_l \geq \Delta_k \}} 2|\mathcal{R}^*(\Delta_l)| \right.  \left. \log_2(20/\Delta_k) \exp \left( - \frac{ (\Delta_l/10)^2 T v^*(B,{\cal R}^*(\Delta_l))^2}{2\log_2(10/\Delta_l)^3 \overline{\log}(n(T))} \right)\right) \\ 
& + \frac{10}{\sqrt{T}}\mathds{1} \left\{\exists k\neq k^* \text{ s.t } \Delta_k < 10/\sqrt{T} \right\} \\
& \overset{(a)} \leq  \sum_{\{k \neq k^*: \Delta_k \geq 10/\sqrt{T} \}} \Delta_k  \left(\sum_{\{l : \Delta_l \geq \Delta_k \}} 2|\mathcal{R}^*(\Delta_l)| \right)    \log_2(20/\Delta_k) \exp \left( - \frac{T}{ 2\bar{H}_{k} \overline{\log}(n(T))} \right) \\
& + \frac{10}{\sqrt{T}}\mathds{1} \left\{\exists k\neq k^* \text{ s.t } \Delta_k < 10/\sqrt{T} \right\} \\
& \overset{(b)} \leq  2K^2\sum_{\{k \neq k^*: \Delta_k \geq 10/\sqrt{T} \}} \Delta_k  \log_2(20/\Delta_k) \exp \left( - \frac{T}{ 2\bar{H}_{k} \overline{\log}(n(T))} \right) \\
& + \frac{10}{\sqrt{T}}\mathds{1} \left\{\exists k\neq k^* \text{ s.t } \Delta_k < 10/\sqrt{T} \right\} 
\end{align*}
Here (a) is by definition of $\bar{H}_{k}$ while (b) is because $\lvert \mathcal{R}^*(\Delta_l) \rvert \leq K$ and there are at most $K$ terms in the summation.

Another quantity of interest here is the error probability. We will only provide bounds on the error probability $e(T,B)$ when we have $\Delta_k > 10/\sqrt{T}$ for all $k \neq k^*$. Let $\Delta = \min_{k \neq k^*} \Delta_k$ and $\gamma^* = \gamma(\Delta)$. In this case we have:
\begin{align}
e(T,B) &\leq 1 - \PP\left( S_{1:\gamma^*}\right) \nonumber \\
&\leq \sum_{l = 1}^{\gamma^*} B_l \\
& \leq\sum_{l = 1}^{\gamma^*}2\lvert \mathcal{R}^*(l) \rvert\exp \left( -\frac{2^{-2(l-1)}\tau(l)v^*(B,{\cal R}^*(l))^2}{8l^2}\right) \nonumber \\
%&= O \left(K\log\left(\frac{1}{\Delta}  \right) \exp \left( -\frac{(\Delta/10)^2T(\gamma^*)Z^*(\gamma^*)^{2}}{4(\gamma^*)^2T(\gamma^*)^{2} }\right) \right)
& \overset{a}\leq\sum_{l = 1}^{\gamma^*}2\lvert \mathcal{R}^*(l) \rvert\exp \left( -\frac{2^{-2l}T v^*(B,{\cal R}^*(l))^2}{2l^3 \overline{\log}(n(T))}\right) \label{errorprob}
\end{align}
(a)- This follows from the definition of $\tau(l)$.

As before let $\ell_1=1, \ell_2 ..\ell_m \leq \gamma^*$ such that $R^*(\ell)$ changes value only at these phases. Let us set $\ell_{m+1}=\ell_m+1$ for  convenience in notation. 
Then, $e(T,B)$ in (\ref{errorprob}) is upper bounded by:
\begin{align}\label{newerror}
  e(T,B) &\leq \sum_{i=1}^{m} 2|\mathcal{R}^*(\ell_i)| \left(\ell_{i+1}-\ell_{i}\right) \exp \left( - \frac{2^{-2\ell_i} T v^*(B,{\cal R}^*(l_i))^2}{2\ell_i^3 \overline{\log}(n(T))} \right)
\end{align}

 Consider the phase $\ell_i$ when ideally at least an arm leaves. Let one of those arms be $k$. Recall that, $\gamma_k$ is the phase where the arm ideally leaves according to (\ref{idealphase}). Therefore, $\gamma_k = \ell_i$. Also it is easy to observe that: ${\cal R}^*(\Delta_k) = {\cal R}^*(\ell_i)$.  Then,
    \begin{equation}\label{Deltak}
        \ell_i \geq  \log_2 (10/\Delta_k)
    \end{equation}
  Further, for every $\ell_i$, there is a distinct and different $k: \gamma_k= \ell_i$. This is because an arm leaves only once ideally. Therefore, according to (\ref{Deltak}) and (\ref{newerror}) we have:
 
 \begin{align}
  e(T,B) &\leq \sum_{k \neq k^*} 2|\mathcal{R}^*(\Delta_k)| \gamma^* \exp \left( - \frac{ (\Delta_k/10)^2 T v^*(B,{\cal R}^*(\Delta_k))^2}{2\log_2(10/\Delta_k)^3 \overline{\log}(n(T))} \right) \nonumber \\
       &\leq 2K^2 \log_2 (20/\Delta)  \exp \left( - \frac{T}{ 2\bar{H} \overline{\log}(n(T))} \right)  \label{eq:llerror}
  \end{align}
  Here, we have used the definition of $\bar{H}$ and the fact that $|\mathcal{R}^*(\Delta_k)| \leq K$ and $\gamma^* \leq \log (20/\Delta) $
\end{proof}

\section{More Experiments}
\label{sec:moresims}
\FloatBarrier
In this section we provide more details about our experiments.

\subsection{More on Flow Cytometry Experiments}
\label{sec:morecyto}
In this section we give further details on the flow cytometry experiments. As detailed in the main paper we use the causal graph in Fig. 5(c) in~\cite{mooij2013cyclic} (shown in Fig.~\ref{fig:cgraph1}) as the ground truth. Then we fit a GLM gamma model~\cite{hardin2007generalized} between each node in the graph and its parents using the observational data-set. The GLM model produces a highly accurate representation of the flow cytometry data-set. In Fig.~\ref{fig:histogram} we plot the histogram for the activation of an internal node \textit{pip2} from the real data and samples generated from the GLM probabilistic model. It can be seen that the histograms are very close to each other.

%\begin{figure}
%	\centering
%	\includegraphics[width=0.8\linewidth]{./figs/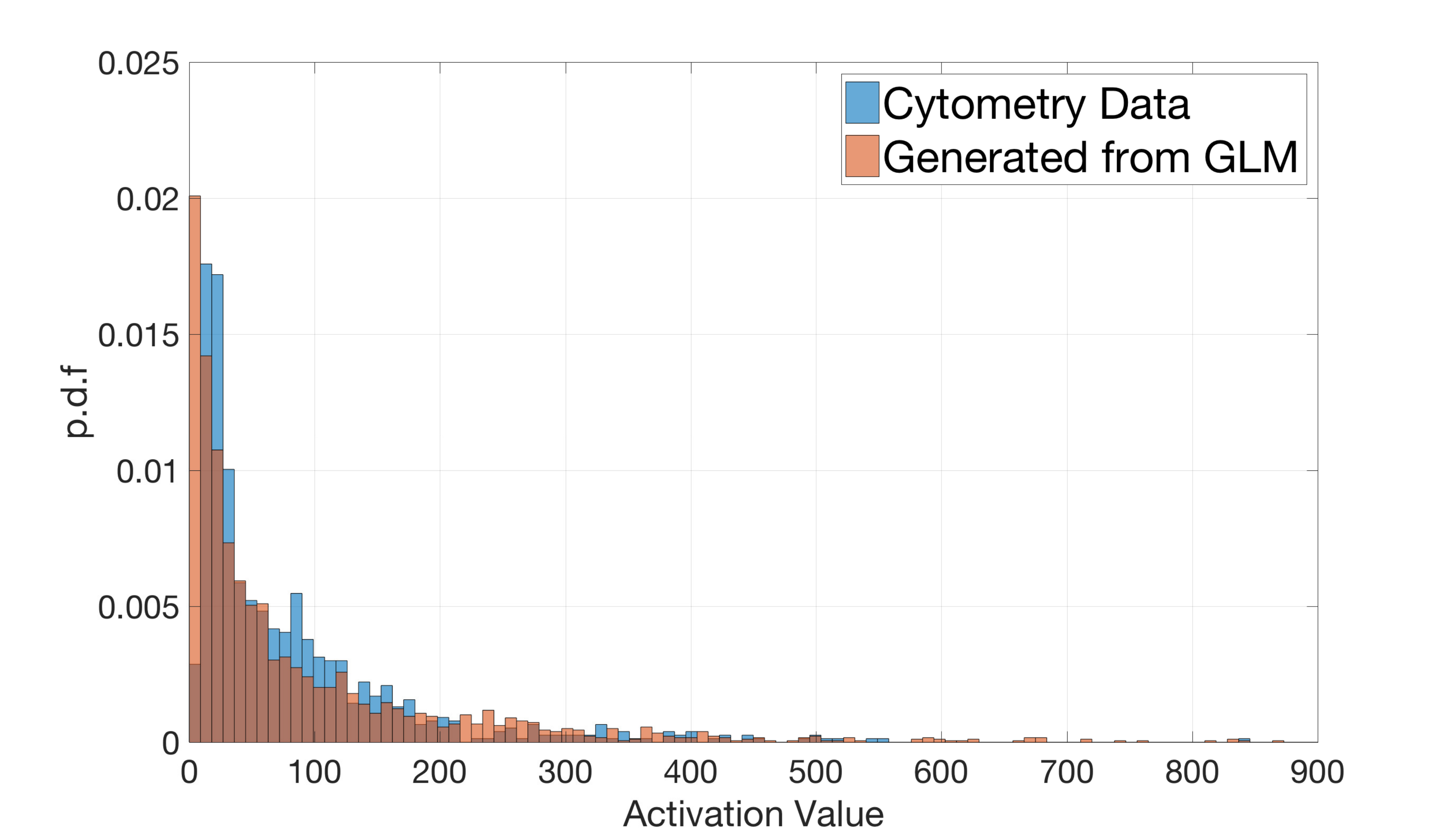}
%	\caption{\small Histograms of data from the cytometry data-set and from the GLM trained for the activations of an internal node \textit{pip2}. \normalsize} \label{fig:histogram}
%	%\setlength{\belowcaptionskip}{-30pt}
%\end{figure}

In Fig.~\ref{fig:divM} we plot the performance of SRISv2 when the divergence metric is replaced by $\mathrm{KL}$-divergence. In one of the plots SRISv2 is modified by setting $M_{ij} = 1 + \mathrm{KL}(P_i || P_j)$. It can be seen that the performance degrades, which signifies that our divergence metric is fundamental to the problem.

%\begin{figure}
%	\centering
%	\includegraphics[width=0.8\linewidth]{./figs/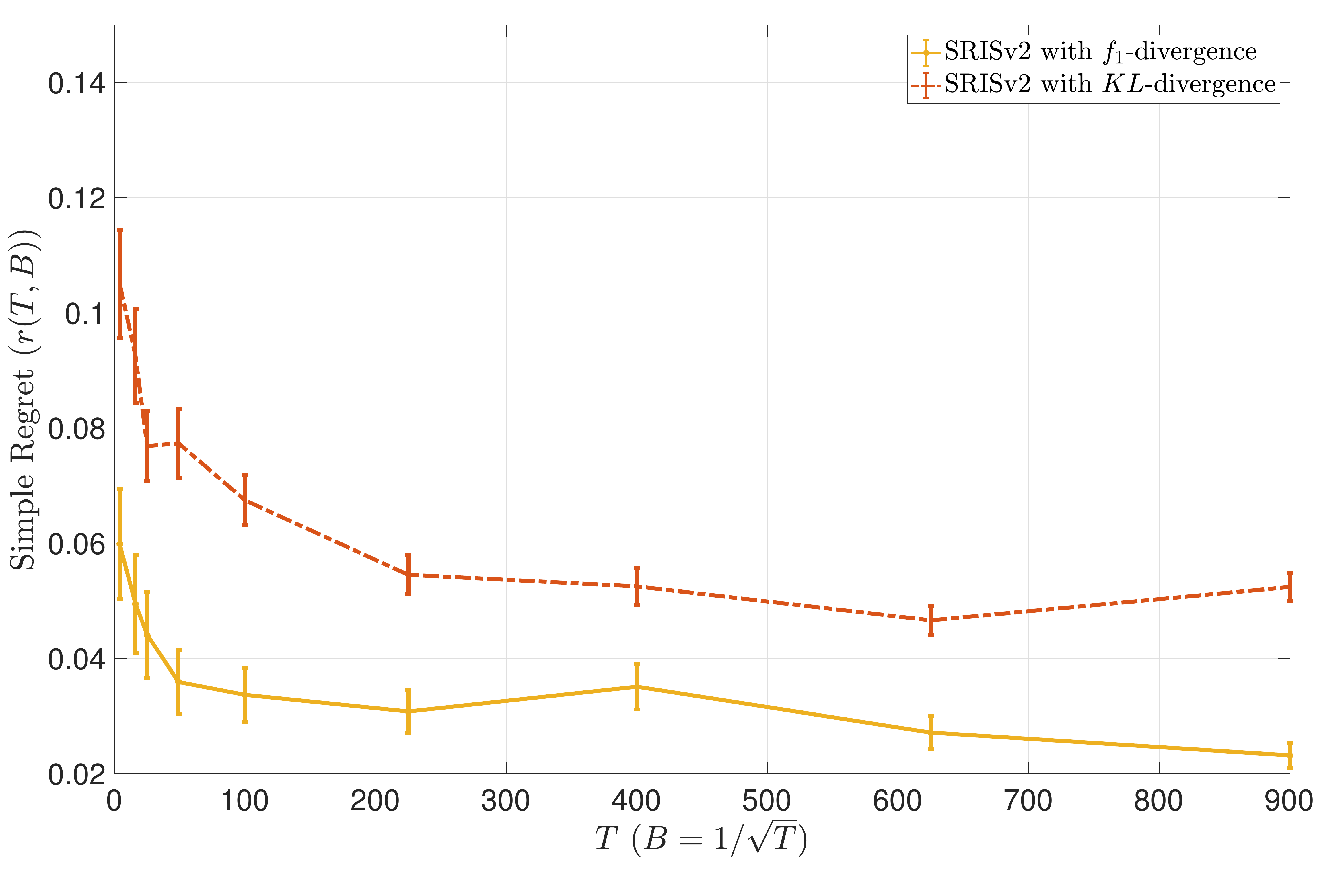}
%	\caption{\small Comparison of SRISv2 with two different divergence metric. It shows that our divergence measure is fundamental for good performance. The experiments have been performed in a setting identical to Fig.~\ref{fig:cgraph3}\normalsize} \label{fig:divM}
%	%\setlength{\belowcaptionskip}{-30pt}
%\end{figure}
\begin{figure}
	\centering
	\subfloat[][\small Histograms of data from the cytometry data-set and from the GLM trained for the activations of an internal node \textit{pip2}. \normalsize]{\includegraphics[width = 0.45\linewidth]{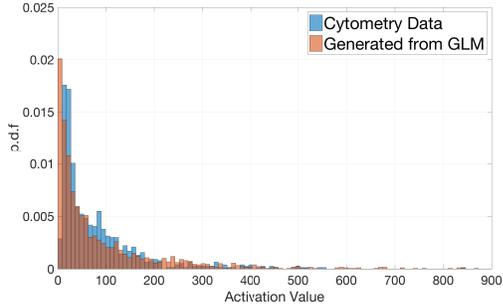}\label{fig:histogram}} \hfill
	\subfloat[][\small Comparison of SRISv2 with two different divergence metric. It shows that our divergence measure is fundamental for good performance. The experiments have been performed in a setting identical to Fig.~\ref{fig:cgraph3}\normalsize]{\includegraphics[width = 0.45\linewidth]{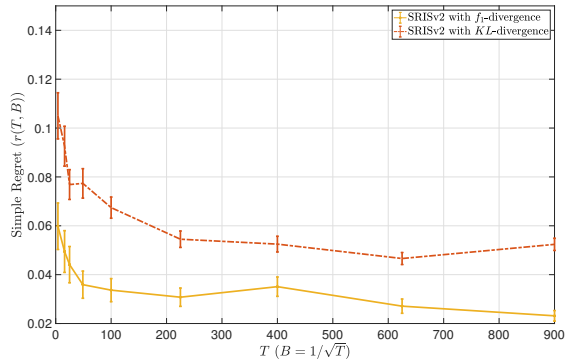}\label{fig:divM}}
	\caption{Illustration of our experimental methods.}
	\label{fig:all_comp}
\end{figure}

\subsection{More on Interpretation of Inception Deep Network}
\label{sec:moreincep}

In this section we describe the methodology of our model interpretation technique in more detail. In Section~\ref{sec:inception} we have described how the best arm algorithm can be used to pick a distribution over the superpixels of an image, that has the maximum likelihood of producing a certain classification from Inception. Here, we describe how the distributions over the superpixels are generated and how they are used subsequently. The arm distributions are essentially points in the $n$-dimensional simplex (where $n$ is the number of superpixels into which the image is segmented). These distributions are generated in a randomized fashion using the following methods:
\begin{enumerate}
	\item Generate a point uniformly at random in the $n$-dimensional simplex.
	\item Randomly choose $l < n$ superpixels. Make the distribution uniform over them and $0$ elsewhere.
	\item Randomly choose $l < n$ superpixels. The probability distribution is a uniformly chosen random point over the $l$-dimensional simplex with support on the $l$ chosen superpixels and $0$ elsewhere.
	\item Start a random walk from a few superpixels which traverses to adjacent superpixels at each step. Stop the random-walk after a certain number of steps and choose the superpixels touched by the random walk. Then choose a uniform distribution over the super pixel support or choose a random distribution from the simplex of probability distributions over the support of the chosen superpixels (like in the previous point) and $0$ elsewhere. This method uses the geometry of the image.
\end{enumerate}
Note that all the above methods do not depend on the specific content of the images. Using the above methods $L$ \textit{pull} arms are chosen which are used to collect the rewards. Further there are $K$ \textit{opt} arms that are the interventions to be optimized over. When an arm is used to sample, then $m \ll n$ superpixels are chosen with replacement from the distribution of that arm. These pixels are preserved in the original image while everything else is blurred out before feeding this into the neural network. Thus if the distribution corresponding to an arm is $P$, then the actual distribution to be used for the importance sampling is the product distribution $P^{m}$. Note that the \textit{pull} arms are separate from the \textit{opt} arms. The true counterfactual power of our algorithm is showcased in this experiment, as we are able to optimize over a large number of interventions that are never physically performed.

\end{document}